\documentclass[final,12pt]{colt2026} 

\title[Near-Optimal Regret for Distributed Adversarial Bandits]{
Near-Optimal Regret for Distributed Adversarial Bandits: \\ A Black-Box Approach
}
\usepackage{times}

\usepackage{bbm}
\usepackage{algorithm} 
\usepackage{bm}
\usepackage{cleveref}
\usepackage{enumitem}
\usepackage{mathabx}
\definecolor{Green}{rgb}{0.13, 0.65, 0.3}
\allowdisplaybreaks

\DeclareMathOperator*{\argmax}{arg\,max}
\DeclareMathOperator*{\argmin}{arg\,min}

\newcommand{\calT}{\mathcal{T}}

\newcommand{\calA}{\mathcal{A}}
\newcommand{\calB}{\mathcal{B}}

\newcommand{\calS}{\mathcal{S}}

\newcommand{\calF}{\mathcal{F}}
\newcommand{\ind}{\mathbbm{1}}

\newcommand{\bb}{\boldsymbol{b}}

\newcommand{\wt}{\widetilde}
\newcommand{\otil}{\wt{\order}}

\newcommand{\E}{\mathbb{E}}
\newcommand{\order}{\mathcal{O}}

\newcommand{\calP}{\mathcal{P}}

\newcommand{\inner}[1]{\left \langle#1\right \rangle}

\newcommand{\calN}{\mathcal{N}}

\newcommand{\alg}{\ensuremath{\mathcal{A}}}

\newcommand{\R}{\mathbb{R}}

\usepackage{thmtools}
\usepackage{thm-restate}

\renewcommand{\overline}{\widebar}
\newcommand{\bq}{\mathbf{q}}
\newcommand{\bp}{\mathbf{p}}
\newcommand{\bmu}{\boldsymbol{\mu}}
\newcommand{\bdelta}{\boldsymbol{\delta}}
\newcommand{\be}{\mathbf{e}}

\newcommand{\bell}{\boldsymbol{\ell}}
\newcommand{\ellbar}{\overline{\ell}}
\newcommand{\bellbar}{\overline{\bell}}
\newcommand{\ba}{\boldsymbol{a}}

\newcommand{\boldb}{\boldsymbol{b}}
\renewcommand{\be}{\boldsymbol{e}}
\newcommand{\bg}{\boldsymbol{g}}
\newcommand{\bu}{\boldsymbol{u}}
\newcommand{\bx}{\boldsymbol{x}}
\newcommand{\by}{\boldsymbol{y}}
\newcommand{\bv}{\boldsymbol{v}}
\newcommand{\bz}{\boldsymbol{z}}
\renewcommand{\bp}{\boldsymbol{p}}
\renewcommand{\bq}{\boldsymbol{q}}
\newcommand{\bxi}{\boldsymbol{\xi}}
\newcommand{\btheta}{\boldsymbol{\theta}}

\newcommand{\bthetahat}{\widehat{\btheta}}
\newcommand{\Reg}{{\mathrm{Reg}}}

\newcommand{\ellhat}{\widehat{\ell}}
\newcommand{\bellhat}{\widehat{\bell}}
\renewcommand{\bz}{\boldsymbol{z}}
\newcommand{\ztilde}{\widetilde{z}}
\newcommand{\bztilde}{\widetilde{\bz}}
\newcommand{\barz}{\overline{z}}
\newcommand{\barbz}{\overline{\bz}}
\newcommand{\barp}{\overline{p}}
\newcommand{\barbp}{\overline{\bp}}
\newcommand{\barq}{\overline{q}}
\newcommand{\barbq}{\overline{\bq}}
\newcommand{\barX}{\overline{X}}
\newcommand{\bzero}{\boldsymbol{0}}
\newcommand{\bone}{\boldsymbol{1}}

\DeclareMathOperator{\tr}{tr}
\newcommand{\Regdel}{\mathsf{R}^{\mathrm{del}}}
\newcommand{\wtz}{\widetilde{z}}
\newcommand{\wtbz}{\widetilde{\bz}}
\newcommand{\dmax}{d_{\mathrm{max}}}
\newcommand{\barM}{\overline{M}}
\newcommand{\barbx}{\overline{\bx}}
\newcommand{\wtbp}{\widetilde{\bp}}

\newcommand{\barzS}{\overline{z}^{\calS}}
\newcommand{\barbzS}{\overline{\bz}^{\calS}}
\newcommand{\blambda}{\boldsymbol{\lambda}}

\declaretheorem[name=Theorem]{rtheorem} %
\declaretheorem[name=Lemma, sibling=rtheorem]{rlemma}

\usepackage{prettyref}
\newcommand{\pref}[1]{\prettyref{#1}}

\newcommand{\savehyperref}[2]{\texorpdfstring{\hyperref[#1]{#2}}{#2}}
\newrefformat{eq}{\savehyperref{#1}{Eq.~\textup{(\ref*{#1})}}}
\newrefformat{eqn}{\savehyperref{#1}{Eq.~(\ref*{#1})}}
\newrefformat{lem}{\savehyperref{#1}{Lemma~\ref*{#1}}}
\newrefformat{def}{\savehyperref{#1}{Definition~\ref*{#1}}}
\newrefformat{line}{\savehyperref{#1}{Line~\ref*{#1}}}
\newrefformat{thm}{\savehyperref{#1}{Theorem~\ref*{#1}}}
\newrefformat{corr}{\savehyperref{#1}{Corollary~\ref*{#1}}}
\newrefformat{cor}{\savehyperref{#1}{Corollary~\ref*{#1}}}
\newrefformat{sec}{\savehyperref{#1}{Section~\ref*{#1}}}
\newrefformat{app}{\savehyperref{#1}{Appendix~\ref*{#1}}}
\newrefformat{assum}{\savehyperref{#1}{Assumption~\ref*{#1}}}
\newrefformat{asm}{\savehyperref{#1}{Assumption~\ref*{#1}}}
\newrefformat{ex}{\savehyperref{#1}{Example~\ref*{#1}}}
\newrefformat{fig}{\savehyperref{#1}{Figure~\ref*{#1}}}
\newrefformat{alg}{\savehyperref{#1}{Algorithm~\ref*{#1}}}
\newrefformat{rem}{\savehyperref{#1}{Remark~\ref*{#1}}}
\newrefformat{conj}{\savehyperref{#1}{Conjecture~\ref*{#1}}}
\newrefformat{prop}{\savehyperref{#1}{Proposition~\ref*{#1}}}
\newrefformat{proto}{\savehyperref{#1}{Protocol~\ref*{#1}}}
\newrefformat{prob}{\savehyperref{#1}{Problem~\ref*{#1}}}
\newrefformat{claim}{\savehyperref{#1}{Claim~\ref*{#1}}}
\newrefformat{que}{\savehyperref{#1}{Question~\ref*{#1}}}
\newrefformat{op}{\savehyperref{#1}{Open Problem~\ref*{#1}}}
\newrefformat{fn}{\savehyperref{#1}{Footnote~\ref*{#1}}}
\newrefformat{eve}{\savehyperref{#1}{Event~\ref*{#1}}}
\newrefformat{ineq}{\savehyperref{#1}{Eq.~(\ref*{#1})}}

\coltauthor{%
 \Name{Hao Qiu}\thanks{Equal contribution, in alphabetical order.} \Email{qiuhaosai@gmail.com}\\
 \addr Università degli Studi di Milano
 \AND
 \Name{Mengxiao Zhang}\footnotemark[1] \Email{mengxiao-zhang@uiowa.edu}\\
 \addr University of Iowa%
  \AND
 \Name{Nicolò Cesa-Bianchi} \Email{nicolo.cesa-bianchi@unimi.it}\\
 \addr Università degli Studi di Milano
}

\begin{document}

\maketitle

\begin{abstract}
We study distributed adversarial bandits, where $N$ agents cooperate to minimize the global average loss while observing only their own local losses. We show that the minimax regret for this problem is $\widetilde{\Theta}\Big(\sqrt{\left(\rho^{-1/2} + \frac{K}{N}\right)T}\Big)$, where $T$ is the horizon, $K$ is the number of actions, and $\rho$ is the spectral gap of the communication matrix. Our algorithm, based on a novel black-box reduction to bandits with delayed feedback, requires agents to communicate only through gossip. It achieves an upper bound that significantly improves over the previous best bound $\widetilde{\mathcal{O}}\left(\rho^{-1/3}(KT)^{2/3}\right)$ of \citet{yi2023doubly}. We complement this result with a matching lower bound, showing that the problem's difficulty decomposes into a communication cost $\rho^{-1/4}\sqrt{T}$ and a bandit cost $\sqrt{KT/N}$. We further demonstrate the versatility of our approach by deriving first-order and best-of-both-worlds bounds in the distributed adversarial setting. Finally, we extend our framework to distributed linear bandits in $\R^d$, obtaining a regret bound of $\widetilde{\mathcal{O}}\Big(\sqrt{\left(\rho^{-1/2} + \frac{1}{N}\right)dT}\Big)$, achieved with only $\mathcal{O}(d)$ communication cost per agent and per round via a volumetric spanner.
\end{abstract}

\begin{keywords}%
  distributed bandits, bandits with delayed feedback, linear bandits.%
\end{keywords}

\section{Introduction}
\label{sec: intro}
Distributed online optimization~\citep{yan2012distributed,hosseini2013online,wan2024nearly} studies how multiple agents can cooperate to minimize regret in a decentralized fashion. The core challenge in this setting is the misalignment between the global objective and the local information: agents aim to minimize the average loss across the entire network, yet they only observe their own local loss functions. To bridge this gap, agents must exchange information over a communication network, typically interacting only with their immediate neighbors via gossip protocols~\citep{xiao2004fast,boyd2006randomized,liu2011accelerated}. This framework is naturally motivated by applications in federated learning with heterogeneous clients~\citep{blaser2024federated} and decentralized collaborative learning~\citep{zantedeschi2020fully}. In this work, we investigate distributed online optimization under the challenging regime of bandit feedback: unlike full information settings, where agents observe the local losses for all actions, here agents only observe the local loss of their chosen action at each round, which they must propagate to optimize the global objective.

{In distributed online convex optimization (D-OCO), the regret scales with the connectivity of the network, formalized by the spectral gap $\rho$ of the gossip matrix. A small $\rho$ indicates network bottlenecks, which typically slow down the propagation of local losses. In the full information setting, the cost of this decentralization is well understood: the pioneering work of \citet{yan2012distributed,hosseini2013online} extends classical OCO algorithms to the distributed online setting, achieving an $\order(N^{1/4} \rho^{-1/2} \sqrt{T})$ regret, where $N$ is the number of agents and $T$ is the horizon. Building on this, \citet{wan2024nearly} utilized an accelerated variant of gossip to establish regret bounds of $\otil(\rho^{-1/4}\sqrt{T})$, which are optimal up to logarithmic factors.\footnote{We use $\otil(\cdot)$ and $\widetilde{\Theta}(\cdot)$ to suppress logarithmic factors in $T,N,K$.} However, for distributed $K$-armed bandits, the picture is less clear. The current state-of-the-art~\citep{yi2023doubly} achieves $\otil(\rho^{-1/3}(KT)^{2/3})$ regret, which is significantly worse than the $\sqrt{T}$ rate achieved in the full-information case. Moreover, the bound of \citet{yi2023doubly} does not reveal how the regret should depend on the number of agents $N$. \citet{seldin2014prediction} show that $K$-armed bandits with $N$ observations per round have minimax regret $\Theta(\sqrt{KT/N})$, suggesting that multiple bandit observations can in principle lead to improved rates. Although distributed $K$-armed bandits differ from this setting, they also generate $N$ local observations per round. These contrasts motivate the following question:
\begin{center}
\textit{What is the minimax regret rate for distributed $K$-armed bandits?}
\end{center} 
\paragraph{Contributions and technical challenges.}
In this work, we answer this question by providing a regret upper bound of 
$\widetilde{\order}\big(\sqrt{\left(\rho^{-1/2}+\frac{K}{N}\right)T}\big)$, 
thus significantly improving upon the previous $\otil \left(\rho^{-1/3}(KT)^{2/3}\right)$ bound of \citet{yi2023doubly}. 
We complement this with a matching lower bound, characterizing the minimax rate as a combination of a \emph{communication cost} of order $\rho^{-1/4}\sqrt{T}$ and a \emph{bandit information cost} of order $\sqrt{KT/N}$.
A key technical obstacle---leading to the $T^{2/3}$-type behavior in existing analyses---is the seemingly natural \emph{round-by-round} update paradigm: each agent updates immediately from its local importance-weighted bandit estimate while gossiping.
Due to the high variance of these estimates, the information exchanged through local gossip cannot be sufficiently mixed within a single round.
Consequently, the local updates made by each agent are not close to those based on the global average loss estimate, causing a large error term in the regret. We resolve this obstacle by explicitly \emph{decoupling} learning from communication via a block-based approach that intentionally introduces delayed feedback. Specifically, agents freeze their action distributions and buffer local bandit observations for the duration of a block. Within this block, they execute accelerated gossip iterations to ensure that, when the next block starts, every agent possesses a high-precision approximation of the global average loss from the previous block. Crucially, we incorporate a small $\Theta(1/T)$ uniform exploration term to ensure that the importance weights remain uniformly bounded. This allows the gossip protocol to achieve an accurate approximation of the global average loss estimate in only $\order\big(\rho^{-1/2}\log(KTN)\big)$ rounds. Ultimately, this construction yields a black-box reduction from distributed bandits with local gossip to standard adversarial bandits with delayed feedback~\citep{cesa2019delay,thune2019nonstochastic,bistritz2019online, gyorgy2021adapting, van2023unified}.

Furthermore, because our reduction is algorithm-agnostic, instantiating it with adaptive delayed-bandit algorithms~\citep{van2022nonstochastic, masoudian2022best} immediately confers adaptive guarantees to the distributed setting, including small-loss and best-of-both-worlds bounds. Finally, we extend our framework to distributed $d$-dimensional linear bandits with $K$ actions, a setting that---to the best of our knowledge---has not been previously studied. Specifically, we provide a $\widetilde{\order}\big(\sqrt{\big(\rho^{-1/2}+1/N\big)dT}\big)$ upper bound and a ${\Omega}\big(\sqrt{\big(\rho^{-1/2}+d/N\big)T}\big)$ lower bound. Crucially, we achieve this using only gossip communication with $\mathcal{O}(d)$-sized messages per round, matching the communication efficiency of established D-OCO protocols~\citep{yan2012distributed, hosseini2013online, wan2024nearly, wan2025black}.

\subsection{Related works}

\paragraph{Distributed optimization and gossiping.}
Our framework builds on the established line of work on gossip protocols \citep{xiao2004fast, boyd2006randomized}. Gossip is used to aggregate information over a communication network when each agent can only exchange information with its neighbors. Due to their simplicity, gossip algorithms have become a standard technique in distributed optimization. Early work on distributed convex optimization leveraged gossip (peer-to-peer) communication to achieve consensus and optimize a global objective. In classic setups, each agent holds a local convex cost and they seek to minimize the sum of costs without a central coordinator. For example, \citet{nedic2009distributed} developed a distributed method where agents repeatedly take gradient steps on local functions and gossip their updates with neighbors, guaranteeing convergence to the global optimum. Building on such consensus ideas, \citet{duchi2011dual} proposed a distributed dual averaging algorithm using gossip at each iteration to share gradient information. This method provided rigorous convergence rate bounds in terms of the network spectral properties (e.g., the second-largest eigenvalue of the gossip matrix). In distributed optimization \citet{scaman2019optimal} identified optimal algorithms and matching lower bounds for decentralized convex optimization, including the convex and strongly convex.
\paragraph{Distributed online convex optimization.}
Distributed online convex optimization (D-OCO) studies a network of agents that sequentially make decisions and incur convex losses, with the objective of collectively minimizing regret with respect to the global average loss function. In the full-information setting, each agent observes the entire loss function (or gradient) after making their decision, allowing the use of powerful online learning algorithms in a decentralized manner.  The pioneering work of \cite{yan2012distributed, hosseini2013online} extend OGD and dual average into the D-OCO setting and achieve $\order(N^{1/4} \rho^{-1/2} \sqrt{T})$ regret bound. Nearly optimal regret bounds $\otil(\rho^{-1/4}\sqrt{T})$ for convex functions have been established by \citet{wan2024nearly} using an accelerated gossip strategy, see also \citet{liu2011accelerated,ye2023heterogeneous}. They also provide $\Omega(\rho^{-1/4}\sqrt{T})$ lower regret bounds for convex loss functions. For comprehensive surveys on D-OCO, we refer the readers to \citet{li2023survey} and \citet{yuan2024multi}.

\paragraph{Distributed $K$-armed bandits.}
The problem becomes significantly more challenging under {bandit feedback}, where each agent observes only the scalar loss of their chosen action rather than the full loss vector. Prior literature has investigated this setting in both the {stochastic} environment, where losses are drawn i.i.d.~from fixed distributions, and the {adversarial} environment, where the loss sequences can be chosen arbitrarily. In a stochastic setting, previous works \citet{zhu2021federated, zhu2023distributed, xu2023decentralized, ZhangWCQYG25, liu2025distributed} study distributed $K$-armed bandits where agents run a gossip protocol \emph{augmented with additional communication}, achieving regret guarantees of  $\order(\log T)$.
The closest work to ours is \citet{yi2023doubly}, which studies distributed $K$-armed bandits using only a gossip protocol in the adversarial setting, and proves a suboptimal regret bound of $\otil \left(\rho^{-1/3}(KT)^{2/3}\right)$. They also prove a lower bound for a specific family of communication graphs and for an arbitrary gossip matrix supported on these graphs. Our lower bound construction in \pref{thm:lowerbound} is an adaptation of theirs.

\paragraph{Cooperative adversarial bandits.}
Our work is also closely related to the literature on \emph{cooperative adversarial bandits}~\citep{cesa2019delay, bar2019individual, ito2020delay}. This setting can be viewed as a special case of distributed bandits where the local loss functions are identical for all agents at each time step. However, a key distinction lies in the communication protocol. While we rely on a gossip protocol, \citet{cesa2019delay, bar2019individual} employ a message-sending protocol, in which agents broadcast their local information rather than mixing information through gossip. 
Because the local loss functions are identical across agents, each agent can achieve $\mathcal{O}(\sqrt{T})$ regret even without communication, whereas $\Omega(T)$ regret would be unavoidable in the distributed bandits problem. 

\section{Preliminaries}\label{sec:pre}

For a positive integer $K$, we use the notation $[K] \triangleq \{1,2,\dots, K\}$. For a finite set $S$, we denote with $\Delta(S)$ the probability simplex over $S$, defined as $\Delta(S)\triangleq\big\{\bp\in\mathbb{R}^{|S|} \,:\, \sum_{k=1}^{|S|} p(k)=1, p(k)\geq 0, \forall k\in [|S|] \big\}$. If $S = [K]$, then we write $\Delta(K)$ instead of $\Delta([K])$. We use boldface to denote vectors $\bu, \bv(i)$ and $u(j), v(i,j)$ to denote their components. We let $\bone$ to be the all-one vector in an appropriate dimension and, given a set $S$, we let $\bone_S$ to be the vector with entries $\bone_S(i) = 1$ if $i \in S$ and $\bone_S(i) = 0$ otherwise. Agents are the vertices $V = [N]$ of an undirected, connected communication graph $G=(V,E)$ with neighbors $\calN(i)\triangleq\{j\in V:(i,j)\in E\} \cup\{i\} $ for $i \in V$. 

\paragraph{Distributed $K$-armed bandits.} %
We consider an oblivious adversary, specifying vectors $\bell_t(i) \in [0,1]^K$ denoting the local loss of agent $i \in V$ at time $t$.
At each round $t \in [T]$, each agent $i\in V$ maintains a distribution $\bp_t(i)\in\Delta(K)$ and samples an action
$
A_t(i) \sim \bp_t(i) \in [K]
$.
After playing $A_t(i)$, each agent $i$ observes its realized local loss $\ell_t\big(i,A_t(i)\big)$.
Define the global average loss at time $t$ by
$\bellbar_t \triangleq \frac{1}{N}\sum_{j=1}^N \bell_t(j)$,
so that $
\ellbar_t(k)=\frac{1}{N}\sum_{j=1}^N \ell_t(j,k)$.
Then the expected global average loss of agent $i \in V$ at round $t$ is
$\mathbb{E}\left[\ellbar_t\bigl(A_t(i)\bigr)\right] = \inner{ \bp_t(i), \bellbar_t }.$

After observing local feedback, each agent $i$ may communicate with neighbors in $\calN(i)$---according to the gossip protocol---and perform the update $\bp_t(i) \to \bp_{t+1}(i)$. The performance of agent $i\in V$ is measured by the (pseudo) regret
\begin{equation}
\label{eq: regret}
    \Reg_T(i)
\triangleq
    \mathbb{E}\left[\sum_{t=1}^T \ellbar_t\big(A_t(i)\big) \right] - \min_{k\in [K]} \E \left[\sum_{t=1}^T \ellbar_t(k)\right]
=
    \max_{k \in [K]} \mathbb{E}\!\left[\sum_{t=1}^T \big\langle \bp_t(i) - \be_k, \bellbar_t \big\rangle \right],
\end{equation}
where $\{\be_1,\dots,\be_K\}$ is the canonical basis of $\R^K$ and the expectation is with respect to the agents' internal randomization. We also define $\Reg_T\triangleq \max_{i\in V}\Reg_T(i)$. Without loss of generality, we assume $T\geq3$ and $K\geq 2$.

\paragraph{Distributed linear bandits.} In the linear bandits case, agents have a common set of $K$ actions with feature vectors $\Omega=\{\ba_1,\dots,\ba_K\}\subset \mathbb{R}^d$.
The oblivious adversary specifies local loss coefficients $\btheta_t(i)\in\mathbb{R}^{d}$ such that
for any agent $i\in V$, action $\ba_k\in\Omega$, and round $t \in [T]$, the linear local loss satisfies $\ell_t(i,k)=\inner{\btheta_t(i),\ba_k} \in 
[-1,1]
$.
At each round $t \in [T]$, each agent $i\in V$ maintains a distribution
$\bp_t(i)\in\Delta(K)$ and samples an action index
$
A_t(i) \sim \bp_t(i) \in [K]
$.
After playing $A_t(i)$, agent $i$ observes its realized local loss $\ell_t\big(i,A_t(i)\big)$. 
The regret is still defined by~\pref{eq: regret}, with $\bellbar_t$ formed from the linear losses:
$
\ellbar_t(k) = \frac{1}{N}\sum_{i=1}^N \langle \btheta_t(i),\ba_k\rangle
= \big\langle \frac{1}{N}\sum_{i=1}^N \btheta_t(i),\ba_k \big\rangle .
$

\paragraph{Gossip protocol.}
We assume that agents exchange information \emph{only} through a gossip protocol \citep{yan2012distributed,hosseini2013online,yi2023doubly, wan2024nearly}. 
A \emph{gossip matrix} is any symmetric and doubly stochastic matrix $W \in \mathbb{R}^{N \times N}$ supported on the graph $G$, i.e., $W(i,j) > 0$ only if $(i,j) \in E$ or $i=j$ and $\sum_{i}W(i,j)=\sum_{j} W(i,j)=1$. We use $ 0 \le \sigma_2(W) < 1$ to denote the second largest singular value of $W$ (as $W$ is doubly stochastic, the largest singular value is $1$). The \emph{spectral gap} is $\rho(W)\triangleq 1-\sigma_2(W)$; when the dependence on $W$ is clear,
we write $\rho$. 
In each gossip step, every agent $i\in V$ sends a message to its neighbors $\calN(i)$ on the communication network $G$
and updates its local state by forming a weighted average using a common gossip matrix $W$ supported on $G$.
In the $K$-armed setting, messages are $K$-dimensional; in the linear bandit setting, agents exchange
$\order(d)$-dimensional vectors. As in prior work, $W$ is a parameter shared among agents and not individually learned.
To describe gossip more formally, suppose each agent $i\in V$ holds a vector $\bx(i)\in\R^K$ (or $\R^{\order(d)}$ in the linear case)
and the goal of the agents is to compute an approximation of the global average $\barbx \triangleq \frac{1}{N}\sum_{i\in V}\bx(i)$. Depending on the learning problem, $\bx(i)$ may represent, e.g., (local) loss gradients \citep{hosseini2013online,wan2024nearly} or loss estimates \citep{yi2023doubly}.
To do so, agents run \(B\) rounds of (accelerated) gossip \citep{liu2011accelerated} where the update is defined by
\begin{align}\label{eq:accel_gossip}
    \bx^{b+1}(i) = (1+\kappa) \sum_{j \in \calN(i)} W(i,j) \bx^{b}(j) - \kappa\bx^{b-1}(i) \qquad b\ge 0,
\end{align}
with initialization $\bx^{-1}(i)=\bx^{0}(i)=\bx(i)$ for all $i\in V$, and mixing coefficient $\kappa \ge 0$. When $\kappa = 0$, this reduces to standard gossip \citep{xiao2004fast}: $\bx^{b+1}(i) = \sum_{j \in \calN(i)} W(i,j) \bx^{b}(j)$.

\section{Distributed \texorpdfstring{$K$}{K}-armed Bandits: A Black-Box Reduction to Delayed Feedback}
\label{sec: mab}
\begin{algorithm}[t]
   \caption{Reduction from distributed bandits to bandits with delay}
   \label{alg: black-box}
   \nl  \textbf{Input:} Agent index $i$, time horizon $T$, block length $B$, mixing coefficient $\kappa$, gossip matrix $W$,  delayed bandit algorithm $\alg$.
    
    \nl \textbf{Initialization:} Set $\bz_1^{-1}(i) = \bz_1^{0}(i) = \mathbf{0}\in\R^K$, exploration parameter $\alpha=\frac{1}{T}$.
    
    \nl \For{$\tau=1,\ldots,T/B$}{
    \nl Receive $\bp'_{\tau}(i)$ from $\alg$ and compute $\bp_{\tau}(i) = \left(1 - \alpha\right) \bp'_{\tau}(i) +  \frac{\alpha}{K} \bm{1}$ \label{line: uniform}
    
    \nl Set $b=0$ 
      
    \nl   \For{$t = (\tau-1) B+1, \ldots, \tau B$}{
                \nl Play $A_{t}(i) \sim  \bp_{\tau}(i)$ and observe $\ell_t(i,A_t(i))$
                
                \nl Compute the estimate $\bellhat_{t}(i)$: for each $k \in [K]$, ${\displaystyle \ellhat_{t}(i,k) = \frac{\ell_{t}\left(i,k\right)}{p_{\tau}(i,k)} \ind \{A_{t}(i)=k\}}$   \label{line:ipw}
                
                \nl Update $\bz_{\tau}^{b+1}(i)=(1+\kappa) \sum_{j \in \calN(i)} W(i,j) \bz_{\tau}^{b}(j)-\kappa \bz_{\tau}^{b-1}(i)$ and set $b \leftarrow b+1$ \label{line: gossip_alg}
        }
        \nl \lIf{$\tau\geq 2$}{
            send $\left(\tau-1, \bz_{\tau}^B(i)\right)$ to $\alg$
        }      
        \nl Set $\bz_{\tau+1}^{-1}(i)= \bz_{\tau+1}^{0}(i) = \sum_{t=(\tau-1) B+1}^{\tau B} \bellhat_{t}(i)$
    } 
\end{algorithm}
In this section, we present a black-box reduction that transforms a distributed $K$-armed bandit problem into a $K$-armed bandit problem with delayed feedback. The reduction, stated in~\pref{alg: black-box}, can be combined with any bandit algorithm $\alg$ that enjoys a regret guarantee under delayed feedback.

A central difficulty in distributed adversarial bandits is that agents observe only
noisy estimates of their local losses. If agents attempt to (i) update their policies every round and simultaneously (ii) aggregate these noisy estimates via insufficient gossip steps, the resulting local approximations remain far from the true global average loss estimate, leading to large regret. Previous work by \citet{yi2023doubly} attempts to mitigate this by maintaining explicit $\Omega(T^{-1/3})$ uniform exploration to control variance, but this comes at the cost of a suboptimal $T^{2/3}$ regret rate. \pref{alg: black-box} resolves this issue by \emph{decoupling learning from communication}:

\begin{itemize}[parsep=0pt,itemsep=0pt,leftmargin=*]
    \item \textbf{Blocking (allowing time for mixing).} 
    Inspired by \citep{garber2020improved, wan2024nearly}, we introduce a delay of one block to ensure sufficient information mixing across the network. By keeping the action distribution fixed for a block of length $B$ (to be defined later) and executing $B$ rounds of accelerated gossip within each block, we ensure that the local approximation of global average loss estimate fed to $\alg$ at the end of the block is close enough to the true global average estimate.
    \item \textbf{Small uniform exploration (bounded variance).} We mix in a $\frac{1}{T}$-amount of uniform exploration to ensure $p_\tau(i,k) \ge 1/(KT)$. This guarantees that the importance-weighted estimates are uniformly bounded by $KT$, which allows the gossip protocol to control the consensus error uniformly across all blocks without degrading the regret rate.
\end{itemize}
More specifically, we partition the time horizon $T$ into $T/B$ blocks indexed by $\tau$, where $\mathcal{T}_{\tau} \triangleq \{(\tau-1)B+1,\ldots,\tau B\}$ denotes the time steps in block $\tau$. The algorithm operates iteratively over these blocks as follows: 
\begin{itemize}[topsep=2pt,parsep=0pt,itemsep=0pt,leftmargin=*]
    \item First, at the beginning of block $\tau$, each agent $i$ queries its local instance of the delayed bandit algorithm $\alg$ to obtain a base policy $\bp'_\tau(i)$. It then mixes in a $\frac{1}{T}$-amount of uniform exploration (see \pref{line: uniform}) to form the strategy $\bp_\tau(i)$, which is then played for all time steps $t \in \mathcal{T}_\tau$.
    \item Second, while playing these actions, the agent performs two parallel tasks: (i) it computes the sum $\sum_{t\in\mathcal{T}_\tau}\bellhat_t(i)$ of its local loss estimates for the \emph{current} block; and (ii) it runs $B$ rounds of accelerated gossip (see \pref{line: gossip_alg}) to mix the estimators collected in the \emph{previous} block, $\tau-1$.
    \item Finally, at the end of block $\tau$ (for $\tau\ge 2$), the result of this gossip process, denoted $\bz_\tau^B(i)$, is fed back to $\alg$. This vector serves as a local approximation of the global average loss estimate $\barbz_\tau$ for block $\tau-1$, where
\begin{equation}\label{eqn:barz}
    \barbz_\tau \triangleq \frac{1}{N}\sum_{i=1}^N \sum_{t\in\mathcal{T}_{\tau-1}}\bellhat_{t}(i), \qquad \tau \geq 2~.
\end{equation}
\end{itemize}
Since the information from block $\tau-1$ is processed during block $\tau$ and delivered only at its conclusion, the learner effectively operates with a constant delay of one block.

\begin{algorithm}[t]
   \caption{Reduction from delayed OLO to OLO}
   \label{alg: bold}
    \textbf{Input:} Agent index $i \in V$, time horizon $T$, base OLO algorithm $\mathcal{B}$
    
    \textbf{Initialization:} Instances $\mathcal{B}^{(0)}(i),\mathcal{B}^{(1)}(i)$ of $\mathcal{B}$ with initial distributions $\bq_1^{(0)}(i), \bq_1^{(1)}(i) \in \Delta(K)$.
    
   \For{$\tau =1,\cdots,T/B$}{
        Query $\mathcal{B}^{(\tau\!\!\mod 2)}(i)$ and receive $\bq_{\lfloor(\tau+1)/2\rfloor}^{(\tau\!\!\mod 2)}(i)$

        Send $\bp'_{\tau}(i) = \bq_{\lfloor(\tau+1)/2\rfloor}^{(\tau\!\!\mod 2)}(i)$ to \pref{alg: black-box}
     
        \If{$\tau\geq 2$}{
            Receive $\big(\tau-1, \bz_{\tau}^B(i)\big)$ from \pref{alg: black-box} and 
            send $\bz_{\tau}^B(i)$ to $\mathcal{B}^{(\tau - 1 \!\! \mod 2)}(i)$
        }
   }
\end{algorithm}

\noindent As our reduction is agnostic to the choice of the delayed bandit algorithm, we choose a general approach and instantiate $\alg$ as \pref{alg: bold} using the technique of \citet{joulani2013online}, which converts any non-delayed OLO algorithm $\mathcal{B}$ into an OLO algorithm robust to delay. While this framework typically requires maintaining $d_{\max}$ parallel instances to handle a maximum delay of $d_{\max}$, our reduction imposes a fixed delay of only one block. Consequently, \pref{alg: bold} needs to instantiate only two independent copies of $\mathcal{B}$ (alternating between even and odd blocks), so each instance effectively observes
non-delayed feedback. Specifically, we employ Follow-the-Regularized-Leader (FTRL) as the base learner $\calB$ (\pref{alg: ftrl}). While in this section we instantiate $\psi$ with the negative entropy regularizer, in the next section we derive adaptive regret bounds using different regularizers.

\begin{algorithm}[ht]
   \caption{Follow the Regularized Leader for MAB}
   \label{alg: ftrl}
    \textbf{Input:} regularizer $\psi_t$ for $t \in [T]$

    \textbf{Initialize:} $\bq_1 = \frac{1}{K}\bone$
    
   \For{$t = 1,\ldots,T$}{
       Output $\bq_t$ and receive $\bz_t$

       Update ${\displaystyle \bq_{t+1} = \argmin_{\bq\in \Delta(K)}\big\{\sum_{s\in[t]} \inner{\bz_s,\bq} +\psi_t(\bq)\big\} }$
   }
\end{algorithm}

It remains to specify the block length $B$ and the mixing coefficient $\kappa$. Our choice is driven by the convergence properties of the accelerated gossip protocol \citep[Proposition 11]{ye2023multi}, included as \pref{lem: acc_gossip_ye} for completeness. That result shows that, for an appropriate choice of $\kappa$, the approximation error to the global average loss estimate shrinks exponentially fast with the number of gossip rounds $B$, scaled by the norm of the initial gap to the global average loss estimate. Note that the $\frac{1}{T}$ uniform exploration ensures that the importance-weighted loss estimates have an $\ell_2$-norm bounded by $KT$. Consequently, we can drive an initial error of magnitude $\order(KT)$ down to $\mathrm{poly}\big((KT)^{-1}\big)$ with a number of rounds logarithmic in $KT$. $B$ and $\kappa$ are thus set as follows:
\begin{equation}
\label{eqn: block}
    B  = \left\lceil\frac{\ln\big((KT)^6\sqrt{14 N}\big)}{(1-1/\sqrt{2})\sqrt{1-\sigma_2(W)}}\right\rceil
    \qquad \text{and} \qquad
    \kappa = \frac{1}{1+\sqrt{1-\sigma_2^2(W)}}\;.
\end{equation}
With these choices, the following lemma shows that the gossip process leads to a local approximation of global average loss estimate with only a $\order((KT)^{-5})$ error. The full proof is deferred to \pref{app: cons}.
\begin{restatable}{rlemma}{apprerror}
\label{lem: cons}
If all agents $i \in V$ run \pref{alg: black-box} with gossip matrix $W$ and parameters $\kappa,B$ defined in~\pref{eqn: block}, then
\begin{equation}
    \max_{i \in V} \max_{\tau \in [T/B]} \left\|\bz_{\tau}^B(i)-\barbz_{\tau} \right\|_2 \leq \frac{2 B}{(KT)^5},
\end{equation}
where $\barbz_{\tau} \triangleq\frac{1}{N} \sum_{i=1}^N \sum_{t \in \mathcal{T}_{\tau-1}}\bellhat_t(i)$ is defined in \pref{eqn:barz} for all $\tau\in [T/B]$.
\end{restatable}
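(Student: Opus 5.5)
Fix an agent $i\in V$ and a block $\tau$. For $\tau=1$ the quantity $\barbz_1$ is not really defined, and nothing is sent to $\alg$ at that block. I will either treat it as $\bzero$, matching the initialization $\bz_1^{0}(i)=\bzero$, in which case the error is $0$, or restrict attention to $\tau\ge 2$.

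For $\tau\ge 2$, the vectors $\bz_\tau^{-1}(j)=\bz_\tau^{0}(j)=\sum_{t\in\calT_{\tau-1}}\bellhat_t(j)$ are fixed at the end of block $\tau-1$. During block $\tau$ each agent runs exactly $B$ iterations of the accelerated gossip update \pref{eq:accel_gossip} on them, with the common matrix $W$ and the coefficient $\kappa$ from \pref{eqn: block}. The bandit randomness of block $\tau$ does not enter this recursion, so conditionally on the past it is a deterministic linear iteration. Stacking the states into the matrix $Z^b\in\R^{N\times K}$, I get $Z^{b+1}=(1+\kappa)WZ^b-\kappa Z^{b-1}$. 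Since $W$ is doubly stochastic, the column averages are preserved, and they equal $\barbz_\tau$. I then invoke \pref{lem: acc_gossip_ye} (Proposition 11 of Ye et al.). I expect it to give, for this $\kappa$,
\[
\Bigl(\sum_{j}\|\bz_\tau^{B}(j)-\barbz_\tau\|_2^2\Bigr)^{1/2}\le \sqrt{14}\,\Bigl(1-(1-\tfrac{1}{\sqrt2})\sqrt{1-\sigma_2(W)}\Bigr)^{B}\Bigl(\sum_j\|\bz_\tau^{0}(j)-\barbz_\tau\|_2^2\Bigr)^{1/2}.
\]
The factor $\sqrt{14}$ and the rate $(1-1/\sqrt2)\sqrt{\rho}$ are exactly the ones appearing in the definition of $B$. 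The per-agent error is bounded by the Frobenius norm on the left.

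Next I bound the initial gap. By \pref{line: uniform}, $p_{\tau-1}(j,k)\ge \alpha/K = 1/(KT)$. Each $\bellhat_t(j)$ has at most one nonzero coordinate, of size at most $KT$, so $\|\bz_\tau^{0}(j)\|_2\le B\,KT$. The average $\barbz_\tau$ obeys the same bound, so each gap is at most $2BKT$. The Frobenius norm of the initial gap is therefore at most $2B\,KT\sqrt N$.

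Finally I use $1-x\le e^{-x}$. The choice of $B$ gives
\[
\bigl(1-(1-\tfrac1{\sqrt2})\sqrt{\rho}\bigr)^B\le \exp\bigl(-(1-\tfrac1{\sqrt2})\sqrt{\rho}\,B\bigr)\le \frac{1}{(KT)^6\sqrt{14N}}.
\]
Multiplying the three factors gives the error bound
\[
\sqrt{14}\cdot\frac{1}{(KT)^6\sqrt{14N}}\cdot 2BKT\sqrt N=\frac{2B}{(KT)^5}.
\]
This holds uniformly over $i$ and $\tau$, deterministically, so the maximum is bounded as claimed.

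The main obstacle is matching the exact form of \pref{lem: acc_gossip_ye}: its constant, whether it is stated in Frobenius norm or deviation norm, and its requirement $\bx^{-1}=\bx^0$. The parameters in \pref{eqn: block} were clearly reverse-engineered from it, so I expect that lemma to do the real work. One bookkeeping point also needs checking: the $B$ gossip steps of block $\tau$ start from $\bz_\tau^{0}$, which was set at the end of block $\tau-1$. Hence exactly $B$ steps are applied before $\bz_\tau^B$ is sent.
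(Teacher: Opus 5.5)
Your proposal is correct and follows the paper's proof essentially step by step: you stack the states, apply \pref{lem: acc_gossip_ye} in Frobenius norm, bound the initial gap by $2BKT\sqrt{N}$ using the $1/(KT)$ exploration floor, and use the choice of $B$ to make the contraction factor at most $1/((KT)^6\sqrt{14N})$. The only cosmetic differences are that you bound the initial gap row-wise instead of via $\|X^0_\tau\|_F+\|\barX_\tau\|_F\le 2\|X^0_\tau\|_F$ (triangle inequality plus Jensen), and that you use $1-x\le e^{-x}$ where the paper uses the equivalent inequality $-\ln x\ge 1-x$.
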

We now present our main result for distributed $K$-armed bandits. The following theorem shows that our algorithm achieves a regret bound of $\otil(\sqrt{(\rho^{-1/2}+K/N)T})$.
\begin{restatable}{rtheorem}{mainmab}
\label{thm:mainmab}
Let $\alg$ be \pref{alg: bold} run with base OLO algorithm $\mathcal{B}$ set to \pref{alg: ftrl} with $\psi_t(\bq) = \frac{1}{\eta}\sum_{k=1}^K q(k)\log q(k)$ for all $t\in[T]$ where $\eta=\sqrt{\frac{\log K}{2(B+\frac{3K}{N})T}}$. Assume all agents $i \in V$ run~\pref{alg: black-box} with bandit algorithm $\alg$, gossip matrix $W$, and parameters $\kappa$ and $B$ defined in~\pref{eqn: block}. Then the regret is bounded as
\begin{align*}
    \Reg_T &= 2\sqrt{2(\log K)\left(B+\frac{3K}{N}\right)T} + 10
    = \otil \left(\sqrt{\left(\rho^{-1/2} + \frac{K}{N}\right)T}\right)~.
\end{align*}
\end{restatable}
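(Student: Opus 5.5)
We decompose the regret of a fixed agent $i$ at the block level. Since $\bp_\tau(i)$ is constant on $\calT_\tau$, for any comparator $k$,
\[
\sum_{t=1}^T \inner{\bp_t(i)-\be_k,\bellbar_t}=\sum_{\tau=1}^{T/B}\Big\langle \bp_\tau(i)-\be_k,\sum_{t\in\calT_\tau}\bellbar_t\Big\rangle .
\]
First we replace $\bp_\tau(i)$ by $\bp'_\tau(i)$. The mixing with $\alpha=1/T$ costs at most $\alpha\sum_t\|\bellbar_t\|_\infty\le 1$.

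Next we replace $\sum_{t\in\calT_\tau}\bellbar_t$ by the block estimate $\barbz_{\tau+1}$ from \pref{eqn:barz}. Conditioned on the history up to block $\tau$, $\bp_\tau(j)$ is fixed for every agent $j$, so $\E[\bellhat_t(j)]=\bell_t(j)$ and $\E[\barbz_{\tau+1}\mid\calF_{\tau-1}]=\sum_{t\in\calT_\tau}\bellbar_t$. Since $\bp'_\tau(i)$ is $\calF_{\tau-1}$-measurable and the adversary is oblivious, the substitution is exact in expectation.

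Then we replace $\barbz_{\tau+1}$ by what $\alg$ actually receives, namely $\bz^B_{\tau+1}(i)$. By \pref{lem: cons} and Cauchy–Schwarz with $\|\bp'_\tau(i)-\be_k\|_2\le 2$, this costs at most $\frac{T}{B}\cdot 2\cdot\frac{2B}{(KT)^5}\le 4$ in total. The last block, whose feedback is never delivered, costs at most $B$. This term should either be absorbed in the constant, since $B\le\sqrt{BT}$, or handled by noting that $\alg$ never uses it. We expect the additive $10$ to collect these pieces; a cleaner accounting can be done at the end.

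It remains to bound $\E\big[\sum_\tau\inner{\bp'_\tau(i)-\be_k,\bz^B_{\tau+1}(i)}\big]$. By construction of \pref{alg: bold}, this splits into the odd and even subsequences, and each instance $\calB^{(0)},\calB^{(1)}$ is a non-delayed FTRL run on roughly $T/(2B)$ loss vectors. We apply the standard entropic FTRL bound for each instance,
\[
\frac{\log K}{\eta}+\eta\sum_{\tau}\sum_k q_\tau(k)\,z_\tau(k)^2 ,
\]
which is valid for nonnegative losses. The losses are nonnegative up to the $\order((KT)^{-5})$ gossip error, and this error can be folded into the previous step.

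The main obstacle is the second-moment term. Up to negligible gossip error, we need to bound
\[
\E\Big[\sum_k p'_\tau(i,k)\,\barz_{\tau+1}(k)^2\Big],\qquad \barz_{\tau+1}(k)=\frac1N\sum_j\sum_{t\in\calT_\tau}\ellhat_t(j,k).
\]
We expand the square. For the cross terms with $(j,t)\ne(j',t')$, the estimates are conditionally independent given $\calF_{\tau-1}$, since each agent samples independently at each round. These terms therefore contribute at most $\big(\sum_{t\in\calT_\tau}\bellbar_t(k)\big)^2\le B^2$ per block, weighted by $p'$, which gives $B^2\cdot T/B=BT$ overall. For the diagonal terms, $\E[\ellhat_t(j,k)^2]\le 1/p_\tau(j,k)$, so they contribute
\[
\frac{1}{N^2}\sum_j\sum_{t\in\calT_\tau}\sum_k \frac{p'_\tau(i,k)}{p_\tau(j,k)} .
\]
The difficulty is that agent $i$'s distribution is weighted against agent $j$'s sampling probability. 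We plan to argue that the ratio $p'_\tau(i,k)/p_\tau(j,k)$ is controlled: all agents receive the same $\barbz$ up to $\order((KT)^{-5})$ and run identical FTRL from identical initialization, so $\bq(i)$ and $\bq(j)$ nearly coincide. Concretely, the exponential weights differ by a factor $\exp\big(\eta\cdot T\cdot 2B(KT)^{-5}\big)\le 1+o(1)$. Hence $p'_\tau(i,k)\le(1+o(1))\,p_\tau(j,k)/(1-\alpha)$, and with slack this ratio is at most $3$. The diagonal terms then give $\frac{1}{N^2}\cdot N\cdot B\cdot 3K=3KB/N$ per block, and $3KT/N$ in total.

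Summing over both instances, the variance term is at most $\eta\cdot 2(B+3K/N)T$ and the penalty term is $2\log K/\eta$. With the stated $\eta$, these combine to $2\sqrt{2(\log K)(B+3K/N)T}$, plus the constant $10$. Finally, $B=\otil(\rho^{-1/2})$ by \pref{eqn: block}, which yields the $\otil$ form.
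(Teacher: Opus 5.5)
Your argument is correct up to the leading constant (see the last paragraph). Its outer layer matches the paper, but its core takes a different route. The outer layer is the paper's \pref{lem:delay_reduction}: pay $\alpha T=1$ for the uniform mixing, replace $\sum_{t\in\calT_\tau}\bellbar_t$ with $\barbz_{\tau+1}$ using conditional unbiasedness, and replace $\barbz_{\tau+1}$ with $\bz^B_{\tau+1}(i)$ via \pref{lem: cons}. The difference is inside \pref{lem: delayftrlmab}. The paper does four things there:
\begin{enumerate}
\item It introduces a ghost FTRL sequence $\barbp$ driven by the exact averages $\barbz$.
\item It splits the delayed regret as $\clubsuit+\heartsuit+\spadesuit$.
\item It bounds $\clubsuit$ via the $\ell_2$-Lipschitzness of the FTRL argmin map (\pref{lem:stab}) and the crude bound $\|\barbz_{\tau+1}\|_2\le BKT$.
\item It obtains the weight ratio $\barp(k)/p_\tau(j,k)\le 3$ (\pref{lem: stablity}) from the additive closeness $|\barp(k)-q(j,k)|\le 2/(KT)$ combined with the $\alpha/K$ floor.
\end{enumerate}
You instead run the FTRL analysis directly on the actual iterates $\bp'_\tau(i)$ and their actual inputs $\bz^B_{\tau+1}(i)$, so no $\clubsuit$ term arises. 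You also get the cross-agent ratio multiplicatively from the closed form of exponential weights; it is in fact $1+\order(1/T)$, and the floor is not needed for this step. Your route is shorter and slightly sharper. The paper's route uses only $1$-strong convexity of the regularizer in $\ell_2$ and analyzes a single agent-independent FTRL sequence on the exact estimates. That is why it carries over verbatim to the log-barrier and Tsallis hybrids of \pref{sec: ada_bound}, where no closed form exists.

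Two execution details need fixing.
\begin{itemize}
\item \textbf{Negative losses.} The slight negativity of $\bz^B_{\tau+1}(i)$ cannot be folded into the substitution step, because the iterates are generated from $\bz^B(i)$, not from $\barbz$. Instead, either shift every loss by $\epsilon\bone$ with $\epsilon=2B/(KT)^5$ (this changes neither the iterates nor the regret), or use the exponential-weights bound that only needs $\eta z\ge -1$.
\item \textbf{Last block.} Take your second option. The FTRL bound holds for a final loss that is never fed back, so you may use $\barbz_{T/B+1}$ itself there. Absorbing a cost of $B$ would be fine for the $\otil$ claim but not for the additive $10$.
\end{itemize}

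Finally, the closing arithmetic is off. With $\eta=\sqrt{\log K/(2(B+3K/N)T)}$, the penalty $2\log K/\eta$ alone already equals $2\sqrt{2(\log K)(B+3K/N)T}$. Adding $2\eta(B+3K/N)T$ then gives $3\sqrt{2(\log K)(B+3K/N)T}$. The paper's proof contains the same slip: after summing the two parity classes it writes $\log K/\eta$ rather than $2\log K/\eta$. So the stated leading constant is not actually established there either. If you use your per-instance bound $\log K/\eta+\eta\sum_k q(k)z(k)^2$ and retune to $\eta=\sqrt{2\log K/((B+3K/N)T)}$, you do get $2\sqrt{2(\log K)(B+3K/N)T}$ plus a small constant. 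The $\otil$ form is unaffected in all cases.
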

\pref{thm:mainmab} significantly improves over the state-of-the-art by \citet{yi2023doubly}, which obtained a suboptimal regret of $\otil\big(\rho^{-1/3}(KT)^{2/3}\big)$. Moreover, we show that our bound is minimax optimal up to logarithmic factors, as we provide a matching lower bound in \pref{thm:lowerbound}. To gain a better understanding of this result, note that our regret guarantee decomposes into two distinct terms: a \emph{communication cost} of $\otil\big(\sqrt{\rho^{-1/2}T}\big)$, which scales with the block length $B$ (and thus inversely with the spectral gap), and a \emph{bandit information cost} of $\otil\big(\sqrt{KT/N}\big)$, accounting for the acceleration provided by $N$ cooperating agents. The full proof is deferred to \pref{app:dmab}.


\paragraph{Proof Sketch of \pref{thm:mainmab}.}
The proof follows by combining two key ingredients. 
The first ingredient is the lemma showing that if, for all $i\in V$, the instance of $\calA$ run by $i$ enjoys a bounded regret with respect to the one-block-delayed estimates $\bz_\tau^B(i)$, then the regret of each agent with respect to the true global average loss is also bounded, up to a constant additive overhead. Formally, this reduction is stated as follows:

\begin{restatable}{rlemma}{delayreduction}\label{lem:delay_reduction}
Suppose that every agent $i\in V$ runs an instance of \pref{alg: black-box} using a base algorithm $\alg$ that guarantees
\begin{align}      
    \label{eq:delayed_guarantee}
    \max_{i\in V}\max_{k\in[K]}\E\left[\sum_{\tau=1}^{T/B}\left\langle \bp'_\tau(i)-\be_k, \bz_{\tau+1}^B(i)\right\rangle\right] \leq \Regdel \;,
\end{align}
for some $\Regdel \geq 0$, then we have $\Reg_T \le \Regdel + 4$~.
\end{restatable}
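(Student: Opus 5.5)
We decompose the true regret of agent $i$ block by block and relate it to the quantity controlled in \pref{eq:delayed_guarantee}. Since $\bp_\tau(i)$ is held fixed during block $\tau$, we have $\Reg_T(i)=\max_k \E\big[\sum_{\tau=1}^{T/B}\inner{\bp_\tau(i)-\be_k,\sum_{t\in\calT_\tau}\bellbar_t}\big]$. We then write
\[
\inner{\bp_\tau(i)-\be_k,\textstyle\sum_{t\in\calT_\tau}\bellbar_t}
=\inner{\bp'_\tau(i)-\be_k,\textstyle\sum_{t\in\calT_\tau}\bellbar_t}+\alpha\inner{\tfrac{1}{K}\bone-\bp'_\tau(i),\textstyle\sum_{t\in\calT_\tau}\bellbar_t}.
\]
The second term is at most $\alpha B$ per block, because losses lie in $[0,1]$. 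Summed over blocks it is at most $\alpha T=1$.

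Next we replace the true block loss by the averaged estimate $\barbz_{\tau+1}=\frac1N\sum_j\sum_{t\in\calT_\tau}\bellhat_t(j)$. Let $\calF_\tau$ be the history before block $\tau$. Then $\bp_\tau(j)$ is $\calF_\tau$-measurable for every agent $j$, and the actions inside block $\tau$ are drawn independently given $\calF_\tau$. Hence $\E[\ellhat_t(j,k)\mid\calF_\tau]=\ell_t(j,k)$, which uses $p_\tau(j,k)\ge 1/(KT)>0$, and so $\E[\barbz_{\tau+1}\mid\calF_\tau]=\sum_{t\in\calT_\tau}\bellbar_t$. Both $\bp'_\tau(i)$ and $\be_k$ are $\calF_\tau$-measurable, so the tower rule gives
\[
\E\inner{\bp'_\tau(i)-\be_k,\textstyle\sum_{t\in\calT_\tau}\bellbar_t}=\E\inner{\bp'_\tau(i)-\be_k,\barbz_{\tau+1}}.
\]
The main point needing care is this measurability argument. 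The estimates for block $\tau$ are only gossiped during block $\tau+1$, so $\bp'_\tau(i)$ cannot depend on them, and the claim is consistent with the construction. The maximum over $k$ sits outside the expectation, which matches \pref{eq:delayed_guarantee}.

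Finally we swap $\barbz_{\tau+1}$ for $\bz^B_{\tau+1}(i)$. By Cauchy--Schwarz and \pref{lem: cons},
\[
\big|\inner{\bp'_\tau(i)-\be_k,\barbz_{\tau+1}-\bz^B_{\tau+1}(i)}\big|\le \|\bp'_\tau(i)-\be_k\|_2\cdot\frac{2B}{(KT)^5}\le\frac{2\sqrt2 B}{(KT)^5}.
\]
Summing over $T/B$ blocks gives at most $2\sqrt2T/(KT)^5\le 1$. The last block has no successor; it is either absent from the sum or contributes at most $B$ directly. To keep the constant at $4$, we assume $B\le 2$ or that the final block is covered by the statement's convention. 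More cleanly, the sum in \pref{eq:delayed_guarantee} can be read as running up to $T/B$ with $\bz_{T/B+1}^B$ defined, in which case no extra term appears. Collecting the pieces gives $\Reg_T(i)\le \Regdel+1+1+(\text{slack})\le\Regdel+4$, and taking the maximum over $i$ completes the proof.
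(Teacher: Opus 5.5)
Your argument is correct and follows the paper's proof. It uses the same three pieces: the exploration cost bounded by $\alpha T = 1$, unbiasedness of $\barbz_{\tau+1}$ given the history before block $\tau$, and the gossip error via Cauchy--Schwarz and \pref{lem: cons}. Your one deviation is to peel off the exploration term against the true losses before switching to $\barbz_{\tau+1}$, which only removes the $(1-\alpha)$ factor and tightens the constant to $2$.

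Drop the ``$B\le 2$'' hedge, since it is false for the $B$ of \pref{eqn: block}. The hypothesis \pref{eq:delayed_guarantee} already contains the $\tau=T/B$ term, and hence $\bz^B_{T/B+1}(i)$, obtained from one further virtual gossip block to which \pref{lem: cons} applies verbatim. The paper's proof relies on exactly this convention too.
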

The full proof of \pref{lem:delay_reduction} is deferred to \pref{app: delay_reduction}, here we only sketch the main argument. A direct calculation shows that the overall regret is bounded by the regret of $\calA$ (the left-hand side of \pref{eq:delayed_guarantee}) plus two residual error terms:
(i) $(1-\alpha)\E[\sum_{\tau=1}^{T/B}\inner{\bp'_\tau(i)-\be_k, \barbz_{\tau+1}-\bz_{\tau+1}^{B}(i)}]$, representing the gossip approximation error; and
(ii) $\alpha\,\E[\sum_{\tau=1}^{T/B}\inner{\bq- \be_k,\barbz_{\tau+1}}]$, representing the cost of uniform exploration. To control the first term, we utilize our key concentration result \pref{lem: cons}, which guarantees that the gossip error $\|\barbz_{\tau+1}-\bz_{\tau+1}^{B}(i)\|_2$ is bounded by $\order(B/(TK)^5)$. Since $\|\bp'_\tau(i)-\be_k\|_2 \leq \sqrt{2}$, summing this error over all $T/B$ blocks yields a total contribution of $\order(1)$. To control the second term, we recall that $\barbz_{\tau+1}$ is an unbiased estimator of the cumulative global average loss over a block $\frac{1}{N}\sum_{i=1}^N\sum_{t\in\calT_{\tau}}\bell_t(i)$, which has magnitude $\order(B)$. Summing over $T/B$ blocks, and recalling that $\alpha = \order(T^{-1})$, the total expected cost becomes $\order(\alpha (T/B) B) = \order(1)$. Combining these bounds proves \pref{lem:delay_reduction}.

The second ingredient is to show that our subroutine $\calA$ defined in \pref{thm:mainmab} satisfies \pref{eq:delayed_guarantee} with $\Regdel=\otil\big(\sqrt{(\rho^{-1/2}+K/N)T}\big)$, formally stated in \pref{lem: delayftrlmab}.
\begin{restatable}{rlemma}{delayftrlmab}
\label{lem: delayftrlmab}
Under the same assumptions as~\pref{thm:mainmab},
\begin{align}\label{eqn:delayed_guarantee}
   \max_{i \in V}\max_{k \in [K]} \mathbb{E} \left[\sum_{\tau=1}^{T/B}\left\langle \bp'_{\tau}(i)-\be_k, \bz_{\tau+1}^B(i)\right\rangle\right]
\leq
    2\sqrt{2\left(B+\frac{3K}{N}\right)(\log K)T}+6~.
\end{align}
\end{restatable}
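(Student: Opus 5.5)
The plan is to split the left-hand side of \pref{eqn:delayed_guarantee} into the two interleaved FTRL instances of \pref{alg: bold}. By construction, instance $\mathcal{B}^{(0)}(i)$ plays on the even blocks and $\mathcal{B}^{(1)}(i)$ on the odd blocks. The feedback $\bz_{\tau+1}^B(i)$ for block $\tau$ reaches the instance that played $\bp'_\tau(i)$ before that instance is queried again. Hence each instance is a standard, non-delayed FTRL run over about $T/(2B)$ rounds, with loss vectors $\bz_{\tau+1}^B(i)$.

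\textbf{Step 1: FTRL bound for each instance.} For each instance I will use the standard local-norm bound for negative-entropy FTRL:
\[
\sum_\tau \inner{\bq_\tau - \be_k, \bz_\tau} \le \frac{\log K}{\eta} + \eta \sum_\tau \sum_k q_\tau(k)\, z_\tau(k)^2 .
\]
This bound is valid whenever $\eta z_\tau(k) \ge -1$. The vectors $\bz_{\tau+1}^B(i)$ may have slightly negative entries because of gossip error, but by \pref{lem: cons} every entry is at least $-2B/(KT)^5$, so the condition holds. Summing the two instances gives
\[
\frac{2\log K}{\eta} + \eta\, \E\Bigl[\sum_{\tau}\sum_k p'_\tau(i,k)\, z^B_{\tau+1}(i,k)^2\Bigr].
\]

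\textbf{Step 2: replace $\bz_{\tau+1}^B(i)$ by $\barbz_{\tau+1}$.} Write $z^2 \le \barz^2 + 2|\barz|\,|z-\barz| + |z-\barz|^2$. Importance weighting gives $|\barz_{\tau+1}(k)| \le BKT$, and \pref{lem: cons} bounds $|z-\barz|$ by $2B/(KT)^5$. Summed over all blocks, the cross and squared error terms are therefore $o(1)$, and after multiplication by $\eta$ they fit inside the additive constant $6$.

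\textbf{Step 3: bound the second moment of $\barz_{\tau+1}(k)$.} Conditionally on the past, each $\ellhat_t(j,k)$ is independent across agents $j$ and rounds $t$, and unbiased. Hence
\[
\E\bigl[\barz_{\tau+1}(k)^2\bigr] \le B^2\,\overline{m}^2 + \frac{1}{N^2}\sum_j \sum_{t\in\calT_\tau} \frac{1}{p_\tau(j,k)},
\]
where $\overline{m} \le 1$ is the average mean loss. The first term gives $\sum_k p'_\tau(i,k)B^2 \le B^2$ per block, which totals $BT$ over the $T/B$ blocks. The second term gives
\[
\frac{B}{N^2}\sum_j\sum_k \frac{p'_\tau(i,k)}{p_\tau(j,k)} .
\]

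\textbf{Main obstacle: cross-agent ratios.} The difficulty is that the ratio $p'_\tau(i,k)/p_\tau(j,k)$ compares different agents. To control it, I will use that all agents start FTRL at the uniform distribution and feed their instances loss vectors that differ only by the gossip error. By \pref{lem: cons}, the cumulative loss vectors of agent $i$ and agent $j$ in the same instance differ in every coordinate by at most $(T/B)\cdot 4B/(KT)^5 \le 4/(KT)^4$. Negative-entropy FTRL produces exponential weights, so this implies
\[
\frac{p'_\tau(i,k)}{p'_\tau(j,k)} \le \exp\!\bigl(8\eta/(KT)^4\bigr) \le 1 + o(1).
\]
Combining this with $p_\tau(j,k) \ge (1-1/T)\,p'_\tau(j,k)$ and $T \ge 3$ gives $p'_\tau(i,k)/p_\tau(j,k) \le 3$. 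The second term is then at most $3BK/N$ per block, i.e.\ $3KT/N$ in total.

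\textbf{Conclusion.} Altogether the left-hand side of \pref{eqn:delayed_guarantee} is at most
\[
\frac{2\log K}{\eta} + \eta\Bigl(B+\frac{3K}{N}\Bigr)T + 6 .
\]
Substituting $\eta=\sqrt{\log K/(2(B+3K/N)T)}$ gives $\frac{2\log K}{\eta} = 2\sqrt{2(\log K)(B+3K/N)T}$ and $\eta(B+3K/N)T = \sqrt{(\log K)(B+3K/N)T/2}$. The total is $\frac{5}{2}\sqrt{2(\log K)(B+3K/N)T}+6$, which exceeds the stated leading constant $2$. So I still need to shave a constant, and I will try two refinements. The first is to use the tighter per-instance bound $\frac{\log K}{\eta} + \frac{\eta}{2}\sum_\tau\sum_k q_\tau(k)z_\tau(k)^2$ for non-negative losses, which should be recoverable up to the tiny negative parts. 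The second is to split the leading constant differently between the $B$ term and the $K/N$ term, since it is not yet clear which bookkeeping yields exactly $2\sqrt{2(\cdot)}$.
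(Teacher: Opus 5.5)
Your argument is sound and proves the lemma up to the leading constant, by a genuinely different route from the paper. The paper never bounds the agent's own FTRL run directly. It introduces a ghost exponential-weights sequence $\barbp$ fed with the exact averages $\barbz_{\tau+1}$, and splits the regret into three terms:
$\clubsuit$, the gap between actual and ghost policies, controlled by the $\ell_2$-Lipschitzness of the FTRL argmin (\pref{lem:stab}) times $\|\barbz_{\tau+1}\|_2\le BKT$;
$\heartsuit$, the gossip error;
$\spadesuit$, the ghost's regret on the nonnegative losses $\barbz_{\tau+1}$.
It then handles the importance-weight mismatch additively: $|\barp_\tau(k)-p'_\tau(i,k)|\le 2/(KT)$, combined with the $1/(KT)$ exploration floor, gives $\barp_\tau(k)/p_\tau(i,k)\le 3$ (\pref{lem: stablity}).

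You instead apply the local-norm bound to the agent's own run on the gossiped losses, correctly noting that their negative parts are only of order $B/(KT)^5$. You then move the second moment from $z^B$ to $\barz$ and compare agents multiplicatively through the closed form of exponential weights. Your route is shorter, and it gives a ratio of essentially $1/(1-1/T)$ without using the exploration floor for that step; you still need the floor for \pref{lem: cons} and for $|\barz|\le BKT$. The paper's ghost/Lipschitz route uses only $1$-strong convexity of $\psi$. That is why it transfers verbatim to the hybrid regularizers of \pref{thm:main_mab_sl} and \pref{thm:main_mab_bbobw}, where no closed form is available.

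On the constant, stop trying to shave it. With the prescribed $\eta=\sqrt{\log K/(2(B+3K/N)T)}$, the two regularization penalties alone already give $\frac{2\log K}{\eta}=2\sqrt{2(\log K)(B+3K/N)T}$. So no bound of the form ``penalty plus a nonnegative variance term'' can reach the displayed leading term. Your $\eta/2$ refinement gives $\frac94$, and rebalancing cannot help either.

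The paper reaches the constant $2$ only through a slip. When it sums the two parity subsequences, it writes $\frac{\log K}{\eta}$ where $\frac{2\log K}{\eta}$ is due. Corrected, its own decomposition gives $\frac{2\log K}{\eta}+2\eta(TB+\frac{3KT}{N})+6=3\sqrt{2(\log K)(B+3K/N)T}+6$, so your $\frac52$ is actually the sharper bookkeeping.

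If you want exactly the displayed right-hand side, retune $\eta$ rather than refine the analysis. With $\eta=\sqrt{2\log K/((B+3K/N)T)}$, your bound reads $\frac{2\log K}{\eta}+\eta(B+\frac{3K}{N})T+6=2\sqrt{2(\log K)(B+3K/N)T}+6$. Your other steps only need $\eta\le 1$, which still holds since $B\ge 6\ln(KT)/(1-1/\sqrt2)$. Either way, only the constant in \pref{thm:mainmab} changes, not the rate.
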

The proof of \pref{lem: delayftrlmab} is deferred to \pref{app: delayftrlmab} and we provide a sketch here. Fix an agent $i \in V$. Recall that \pref{alg: bold} handles the one-block delay by alternating two FTRL instances across odd and even blocks. This mechanism ensures that each instance of $\calB$ receives the feedback from its previous execution before it is queried again, so each instance effectively observes
non-delayed feedback. Consequently, we can decompose the analysis by considering the sequences generated by the two instances separately and then summing their regrets.

Focusing on the odd blocks, let $\{\bp'_{2\tau-1}(i)\}_{\tau=1}^{T/2B}$ denote the sequence of action distributions generated by the first instance, $\calB^{(1)}(i)$, based on the feedback $\bz_{2\tau}^B(i)$. For the analysis, we introduce a ghost sequence $\{\barbp_{2\tau-1}\}_{\tau=1}^{T/2B}$, generated by a hypothetical FTRL instance receiving the true global average loss estimates $\barbz_{2\tau}$. We then decompose the regret of $\calB^{(1)}(i)$ into three parts: (i) $\E [\sum_{\tau} \langle \bp'_{2\tau-1}(i) - \barbp_{2\tau-1}, \barbz_{2\tau} \rangle]$, capturing the discrepancy between the actual and ghost policies; (ii) $\E [\sum_{\tau} \langle \bp'_{2\tau-1}(i) - \be_k, \bz_{2\tau}^B(i) - \barbz_{2\tau} \rangle]$, which is the approximation error of the global average loss estimate; and (iii) $\E [\sum_{\tau} \langle \barbp_{2\tau-1} - \be_k, \barbz_{2\tau} \rangle]$, which is the regret of the ghost policy on the true global average losses.

We bound these terms as follows. For term (i), \pref{lem: cons} together with the stability properties of FTRL imply that $\|\barbp_{2\tau-1} - \bp'_{2\tau-1}(i)\|_1$ is bounded by $\order(B/(T^4K^5))$. Since uniform exploration ensures $\|\bellhat_t(i)\|_2 \leq KT$ (and thus $\|\barbz_{2\tau}\|_2 \le BKT$), summing these over the blocks yields an $\order(1)$ bound. Similarly, term (ii) is bounded by $\order(1)$ as the approximation error $\|\bz_{2\tau}^B(i) - \barbz_{2\tau}\|_2$ is negligible by \pref{lem: cons}. For term (iii), a standard FTRL analysis leads to a bound of $\order(\log K/\eta + \eta\sum_\tau \sum_k \barbp_{2\tau-1}(k)\mathbb{E}[\barbz_{2\tau}(k)^2])$. A key deviation from the classic analysis is that $\barbz_{2\tau}$ is constructed using actions sampled from the actual policy $\bp_{2\tau-1}(i)$, instead of the ghost policy $\barbp_{2\tau-1}$. Fortunately, we show in \pref{lem: stablity} (using \pref{lem: cons} again) that $\barp_{2\tau-1}(k) \leq 3p_{2\tau-1}(i,k)$ for all $i\in [N]$ and $k\in [K]$. This constant-factor bound on the importance weight mismatch results in an $\order(\log K/\eta + \eta(B+K/N)T)$ upper bound on term (iii). Picking $\eta$ as defined in \pref{thm:mainmab} and summing the contributions from both subsequences yields the stated bound.

\section{Adaptive Bounds for Distributed \texorpdfstring{$K$}{K}-armed Bandits}
\label{sec: ada_bound}
To demonstrate the versatility of our black-box reduction beyond worst-case minimax guarantees, we show in this section that our framework naturally yields \emph{adaptive} regret bounds for distributed bandits. The underlying principle is straightforward: if \pref{alg: black-box} is instantiated with a delayed bandit algorithm $\calA$ that achieves a specific adaptive regret guarantee under delayed feedback (implying that \pref{lem:delay_reduction} holds with $\Regdel$ being a data-dependent quantity), then this same bound carries over to the distributed setting via our reduction.
For illustration, we continue to use \pref{alg: bold} as our subroutine $\calA$, but we modify its base algorithm $\mathcal{B}$ to target specific guarantees. We explore two regimes: \emph{small-loss} guarantees in \pref{sec: small_loss} and \emph{best-of-both-worlds} guarantees in \pref{sec: bobw}.

To obtain small-loss guarantees, we follow \citet{van2022nonstochastic} who derived small-loss bounds for delayed adversarial bandits using FTRL with a hybrid negative entropy and log-barrier regularizer. To obtain \emph{best-of-both-worlds} guarantees, we use FTRL with a negative Tsallis entropy and log-barrier regularizer, as proposed by \citet{masoudian2022best}. These examples highlight a general principle: as long as an adaptive bound is achievable in the delayed-feedback model, our reduction automatically transfers it to the distributed bandit setting. The full proof is deferred to \pref{app: Omitted Proof Details for Small-loss Bounds}.

\subsection{Small-Loss Bound}\label{sec: small_loss}
We first formalize our small-loss guarantee. In this regime, our goal is to improve the regret bound by replacing the worst-case dependence on the horizon $T$ with the cumulative global average loss of the optimal action, $L^\star$, defined as
$
    L^\star\triangleq\min_{k\in [K]} \sum_{t=1}^T \ellbar_t(k) = \min_{k\in [K]}\sum_{t=1}^T\frac{1}{N}\sum_{i=1}^N \ell_t(i,k).
$
Since $\ell_t(i,k)\in[0,1]$, we have $L^\star \le T$. Therefore, replacing $T$ with $L^\star$ keeps the worst-case guarantee while offering better bounds in benign environments, where the optimal action incurs low cumulative loss. The following theorem establishes that instantiating \pref{alg: bold} with FTRL (\pref{alg: ftrl}) using a hybrid negative entropy and log-barrier regularizer achieves an optimal small-loss guarantee for distributed bandits.
\begin{restatable}{rtheorem}{tdelayftrlmabsl}
    \label{thm:main_mab_sl}
Let $\alg$ be \pref{alg: bold} run with base OLO algorithm $\mathcal{B}$ set to \pref{alg: ftrl} with $\psi_t(\bq) = \frac{1}{\eta}\sum_{i=1}^K q(i)\log q(i) - \frac{1}{\gamma} \sum_{k=1}^K \log q(k)$ for all $t\in[T]$ where $\eta = \min \big\{\frac{1}{4B},\sqrt{\frac{\log K}{BL^{\star}}} \big\}$ and $\gamma = \min \big\{\frac{N}{12},\sqrt{\frac{KN\log T}{L^{\star}}} \big\}$.
Assume all agents $i \in V$ run~\pref{alg: black-box} with bandit algorithm $\alg$, gossip matrix $W$, and $\kappa$ and $B$ defined in~\pref{eqn: block}. Then
\begin{align*}
\Reg_T &= \mathcal{O}\Big(\sqrt{B L^{\star} \log K}+ \sqrt{\frac{K L^{\star} \log T}{N}}+  B \log K  +  \frac{K \log T}{ N}\Big)
\\&= \otil\Big(\sqrt{\rho^{-1/2} L^{\star}}+ \sqrt{\frac{K L^{\star}}{N}}+  \rho^{-1/2}+  \frac{K}{N}\Big)~.
\end{align*}
\end{restatable}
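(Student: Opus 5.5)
The plan mirrors the proof of \pref{thm:mainmab}. I would first invoke \pref{lem:delay_reduction}, which reduces the claim to showing that, for every agent $i$ and arm $k$,
\[
\E\Big[\sum_{\tau}\big\langle \bp'_\tau(i)-\be_k,\bz^B_{\tau+1}(i)\big\rangle\Big]
=\order\Big(\sqrt{BL^\star\log K}+\sqrt{\tfrac{KL^\star\log T}{N}}+B\log K+\tfrac{K\log T}{N}\Big).
\]
The additive $4$ from the lemma is absorbed into this bound.

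\pref{alg: bold} alternates two FTRL instances, so each instance sees non-delayed feedback. I would therefore analyze the odd-block instance $\calB^{(1)}(i)$ and add the identical bound for the even blocks. As in the sketch of \pref{lem: delayftrlmab}, I introduce a ghost FTRL sequence $\barbp_{2\tau-1}$ that uses the same hybrid regularizer but is fed the exact averages $\barbz_{2\tau}$. The regret then splits into three terms:
(i) the policy discrepancy $\langle \bp'-\barbp,\barbz\rangle$;
(ii) the gossip error $\langle \bp'-\be_k,\bz^B-\barbz\rangle$;
(iii) the ghost regret.
Terms (i) and (ii) are $\order(1)$ by \pref{lem: cons}. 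For (i), I need FTRL stability under the new regularizer; it is $1/\eta$-strongly convex in $\ell_1$ because of the entropy part, so the same argument as before should give $\|\bp'-\barbp\|_1=\order(\eta\,B/(KT)^5)$. I would also reuse \pref{lem: stablity} in the form $\barp_{2\tau-1}(k)\le 3p_{2\tau-1}(i,k)$, re-proving it for the hybrid regularizer if needed.

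For the ghost regret (iii), I would compare to $\bu=(1-\tfrac1T)\be_k+\tfrac{1}{KT}\bone$ so the log-barrier penalty stays finite. The comparator shift costs $\order(1)$ in expectation. The penalty is $\log K/\eta+K\log(KT)/\gamma$. For the stability term, I would follow \citet{van2022nonstochastic} and use the local norm of the hybrid regularizer, whose Hessian is at least $\mathrm{diag}\big(1/(\eta q(k))+1/(\gamma q(k)^2)\big)$. This requires $\barbz_{2\tau}\ge0$ and a multiplicative-stability condition, which is why the parameters are capped at $\eta\le 1/(4B)$ and $\gamma\le N/12$.

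The key variance computation conditions on block $2\tau-1$. Write $\bar\ell_B(k)=\frac1N\sum_j\sum_{t\in\calT}\ell_t(j,k)$ for the block-average loss. The cross-agent and cross-time terms give $\bar\ell_B(k)^2\le B\,\bar\ell_B(k)$. The diagonal terms give $\frac{1}{N^2}\sum_{j,t}\ell_t(j,k)^2/p(j,k)\le \frac{3}{N}\bar\ell_B(k)/\barp(k)$.

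I would split the second moment between the two parts of the regularizer. The $B\,\bar\ell_B(k)$ part is charged to the entropy, weighted by $\eta\,\barp(k)$. The diagonal part is charged to the log-barrier, weighted by $\gamma\barp(k)^2$, which cancels the $1/\barp(k)$ and yields $\gamma\bar\ell_B(k)/N$. Summing gives
\[
\text{(iii)}\lesssim \frac{\log K}{\eta}+\frac{K\log T}{\gamma}+\eta B\,\E[\bar L]+\frac{\gamma}{N}\,\E[\bar L],
\]
where $\bar L=\sum_t\langle\barbp_t,\bellbar_t\rangle$ is the ghost's cumulative loss. I expect making this split rigorous to be the main obstacle. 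The local-norm bound must dominate the Hessian by the sum of the two parts while the two variance pieces go to different parts; I would bound the inverse Hessian by $\min\{\eta q(k),\gamma q(k)^2\}$ and apply each branch to its respective piece.

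Finally, I would use $\bar L\le L^\star+\text{(iii)}$, which follows from term (iii) and the approximation bounds. Because $\eta B\le 1/4$ and $\gamma/N\le 1/12$, the self-bounding inequality can be solved. Plugging in the stated $\eta,\gamma$ handles both the case where the min is attained by the square-root expression and the case where it is attained by the cap. This gives the claimed $\sqrt{BL^\star\log K}+\sqrt{KL^\star\log T/N}+B\log K+K\log T/N$, and the second form follows from $B=\otil(\rho^{-1/2})$.
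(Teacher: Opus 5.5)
Your proposal is correct and follows essentially the same route as the paper: the reduction via \pref{lem:delay_reduction}, the ghost-FTRL three-term split with the first two terms handled by \pref{lem: cons} and \pref{lem: stablity}, and the comparator $(1-\frac{1}{T})\be_k+\frac{1}{KT}\bone$. It also uses the same local-norm weight $\frac{\eta\gamma y^2}{\gamma y+\eta}\le\min\{\eta y,\gamma y^2\}$ on the squared-mean and variance parts (with $\barp\le 3p$), and the same self-bounding step. Two refinements. Because $\barbz\ge 0$, the intermediate point already satisfies $\by\le\barbp$ coordinatewise, so no multiplicative-stability condition is needed, and the caps $\eta\le\frac{1}{4B}$, $\gamma\le\frac{N}{12}$ only serve to make $B\eta+\frac{3\gamma}{N}\le\frac12$ in the self-bounding step. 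Also, subtracting $\psi$ at the uniform starting point gives a log-barrier penalty of $(K-1)\log T/\gamma$ rather than $K\log(KT)/\gamma$, which is what yields the stated $\log T$.
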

To the best of our knowledge, \pref{thm:main_mab_sl} provides the first small-loss regret guarantee for distributed $K$-armed bandits. We remark that while \pref{thm:main_mab_sl} requires tuning the learning rates $\eta$ and $\gamma$ based on the unknown quantity $L^\star$, this restriction can be removed using a standard doubling trick, since the approximation error of the global average loss estimate is negligible---of order $\order(B/(TK)^5)$. We refer the reader to \citet{wei2018more,pmlr-v125-lee20a} for a detailed exposition on adapting to unknown $L^\star$.

\subsection{Best of Both Worlds}\label{sec: bobw}
In this section, we extend our results to provide simultaneous guarantees for both adversarial and stochastic environments. Specifically, for the stochastic setting, we assume that for each agent $i\in V$ and arm $k\in [K]$, the loss $\ell_t(i,k)\in[0,1]$ at each round $t\in[T]$ is drawn i.i.d.\ from a distribution with mean $\mu(i,k) = \mathbb{E}[\ell_t(i,k)]$. We define the global average mean loss as $\bmu \in [0,1]^K$, where $\mu(k) \triangleq \frac{1}{N}\sum_{i=1}^N\mu(i,k)$. Following previous works~\citep{zimmert2020optimal}, we assume there exists a unique optimal arm $k^\star \triangleq \argmin_{k\in[K]}\mu(k)$ and define the sub-optimality gaps $\bdelta \in [0,1]^K$ as $\delta(k) \triangleq \mu(k)-\mu(k^\star)$. In this environment, the regret defined in \pref{eq: regret} can be equivalently written as $\Reg_T = \max_{i\in V}\mathbb{E}\big[\sum_{t=1}^T \inner{\bp_t(i), \bdelta}\big]$.

The following theorem shows that by instantiating the base algorithm $\mathcal{B}$ with FTRL using a hybrid regularizer, specifically a combination of Shannon entropy and Tsallis entropy, we obtain bounds that are optimal up to logarithmic factors in the adversarial setting, while also enjoying logarithmic instance-dependent regret in the stochastic setting.
\begin{restatable}{rtheorem}{tdelayftrlmabbobw}
    \label{thm:main_mab_bbobw}
Let $\alg$ be \pref{alg: bold} run with base OLO algorithm $\mathcal{B}$ set to \pref{alg: ftrl} with $\psi_t(\bq) = \frac{1}{\eta_t}\sum_{i=1}^K q(i)\log q(i) - \frac{2}{\gamma_t} \sum_{k=1}^K \sqrt{q(k)}$ where $
    \eta_t = \min \Big\{\frac{1}{B},\sqrt{\frac{\log K}{tB^2}} \Big\}$
and $\gamma_t = \sqrt{\frac{N}{tB}}$
and. Assume all agents $i \in V$ run~\pref{alg: black-box} with bandit algorithm $\alg$, gossip matrix $W$, and parameters $\kappa$ and $B$ defined in~\pref{eqn: block}, Then, in the adversarial environment, the agents' regret is bounded as
\[
\Reg_T =\mathcal{O}\Big(\sqrt{B T \log K}+ \sqrt{\frac{K T}{N}}+  B \log K\Big) = \otil\Big(\sqrt{ \rho^{-1/2}T}+ \sqrt{\frac{K T}{N}}+  \rho^{-1/2}\Big)~.
\]
In the stochastic environment, the regret of the agents is bounded as
\[
\Reg_T = \order \Big( \frac{B}{N}  \sum_{k \neq k^{\star}} \frac{\log(T/B)}{\delta(k)} + \sum_{k \neq k^{\star}} \frac{B^2}{\delta(k)\log K}  + B^2\log K\Big)~.
\]
\end{restatable}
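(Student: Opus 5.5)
The plan is to reduce to the delayed-feedback guarantee via \pref{lem:delay_reduction}. It then suffices to bound, for each agent $i$, the quantity $\max_k\E[\sum_\tau\langle\bp'_\tau(i)-\be_k,\bz^B_{\tau+1}(i)\rangle]$ by the claimed adversarial and stochastic expressions. This reduction is only lossless in the adversarial case. For the stochastic bound I will additionally use the identity $\Reg_T=\max_i\E[\sum_\tau B\langle\bp_\tau(i),\bdelta\rangle]$ and keep the self-bounding form $\E[\sum_\tau B\langle\bp'_\tau(i),\bdelta\rangle]$ intact. This is legitimate because the uniform mixing with $\alpha=1/T$ costs only $\order(1)$, and the reduction in \pref{lem:delay_reduction} is an identity up to the $\order(1)$ gossip-error terms. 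As in \pref{lem: delayftrlmab}, I split the blocks into odd and even subsequences handled by $\calB^{(1)}(i)$ and $\calB^{(0)}(i)$. I introduce ghost FTRL iterates $\barbp$ driven by the exact averages $\barbz$. The discrepancy terms (i) and (ii) are then $\order(1)$ by \pref{lem: cons} together with FTRL stability, now for the hybrid regularizer. Since its Hessian dominates that of the entropy part, the same $\order(B/(T^4K^5))$ stability argument goes through. The importance-weight comparison $\barp(k)\le 3p(i,k)$ of \pref{lem: stablity} also carries over.

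The core step is a time-varying FTRL bound for the ghost sequence on each subsequence. With nonincreasing $\eta_t,\gamma_t$, the standard analysis gives
\[
\sum_k\Big(\tfrac{1}{\eta_{T'}}-\tfrac{1}{\eta_1}\Big)\text{-type penalty}+\sum_\tau\min\Big\{\eta_\tau\sum_k \barp(k)\barz(k)^2,\ \gamma_\tau\sum_k \barp(k)^{3/2}\barz(k)^2\Big\}.
\]
Here I use the local norm of whichever component of the Hessian is larger, which requires $\eta_t\|\barbz\|$-type smallness. This smallness is where $\eta_t\le 1/B$ enters, because $\barz(k)$ can be as large as $B/\barp(k)$ only through the $1/N$-averaged estimator. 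I will compute the conditional second moments. A block sum over $B$ rounds of agent-averaged IW estimates satisfies
\[
\E[\barz(k)^2]\le B^2\,\bar\ell(k)^2+\frac{B}{N}\cdot\frac{3}{\barp(k)}.
\]
The first term comes from the squared mean of the block loss, and the second from the variance of $NB$ independent IW terms using \pref{lem: stablity}. The Shannon part then absorbs the $B^2$ term, giving $\eta_\tau B^2$, and the Tsallis part absorbs the variance term, giving $\gamma_\tau\frac{B}{N}\sum_k\sqrt{\barp(k)}$. Together with the penalties $\log K/\eta_\tau$ and $\sqrt{K}/\gamma_\tau$ over $T/B$ blocks, this yields $\order(\sqrt{BT\log K}+\sqrt{KT/N}+B\log K)$ in the adversarial case, with the $B\log K$ coming from the $\eta\le 1/B$ cap.

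For the stochastic regime, I would write both penalty and stability in refined form, excluding $k^\star$. For Tsallis, the penalty is $\sum_\tau(1/\gamma_\tau-1/\gamma_{\tau-1})\sum_{k\ne k^\star}\sqrt{\barp(k)}$ and the stability is $\gamma_\tau\frac{B}{N}\sum_{k\neq k^\star}\sqrt{\barp(k)}$, using $1-\barp(k^\star)\le\sum_{k\ne k^\star}\barp(k)$ as in \citet{zimmert2020optimal}. For the entropy part the terms are $\eta_\tau B^2\sum_{k\neq k^\star}\barp(k)$ plus a penalty involving $\sum_{k\ne k^\star}\barp(k)\log(1/\barp(k))$. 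I would then apply the self-bounding trick $\Reg=2\Reg-\Reg$ with $\Reg\ge\E\sum_\tau B\sum_{k\ne k^\star}\barp_\tau(k)\delta(k)-\order(1)$, and optimize each block and arm via AM-GM. The Tsallis part gives $\frac{B}{N}\sum_k\frac{\log(T/B)}{\delta(k)}$, since $\sum_\tau 1/(\tau\delta)$ over $T/B$ blocks is a log. The entropy part with $\eta_\tau^2B^2/(B\delta)$ summed, where $\eta_\tau^2=\log K/(\tau B^2)$, gives the $\frac{B^2}{\delta(k)\log K}$-type and $B^2\log K$ terms.

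I expect the main obstacle to be making the local-norm stability bound valid for the hybrid regularizer with block-sized losses. This means verifying that $\eta_t\barz(k)$ stays bounded below $1$ in the relevant direction. Since $\barz(k)$ is not nonnegative-small, I would handle it by comparing $\barbz$ to the shifted loss vector and controlling the large but rare IW values through the variance term rather than pointwise. Failing that, I would fall back to the delayed-bandit BOBW analysis of \citet{masoudian2022best} with delay $d=B$, treating agent averaging as reducing the variance by $N$.
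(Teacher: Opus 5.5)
Your plan follows the paper's proof essentially step for step. The shared steps are the reduction via \pref{lem:delay_reduction}, the odd/even split with a ghost FTRL sequence, charging the squared-mean part of the block second moment to the Shannon term and the importance-weighting variance to the Tsallis term, the $1-\barp(k^\star)$ refinement, and self-bounding in the stochastic case. The paper also resolves your anticipated obstacle just as you suggest: centering by $\ell_t(i,k^\star)$ makes $\wtbz\ge -B$ coordinatewise, so $\eta_t\le 1/B$ gives $\by\le 3\barbp$ at the intermediate point, and large positive importance weights enter only through $\E[\wtz^2]$.

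One refinement is needed in the stochastic part. Keep the factor $\delta(k)^2$ that the independent cross terms produce in the Shannon stability term, rather than bounding that term by $\eta_\tau B^2\sum_{k\neq k^\star}\barp(k)$. That factor is what lets this piece be absorbed into a gap-independent $\order(B^2\log K)$ term after self-bounding; without it you get a gap-dependent term that the stated bound does not in general dominate.
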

Since $B=\Theta(\rho^{-1/2}{\log(KNT)})$, the bound for the stochastic setting is dominated by the second term, which scales as $\order(\sum_{k\neq k^\star}\frac{\log^2(KTN)}{\rho\cdot \delta(k)\log K})$. To the best of our knowledge, no prior work establishes minimax rates for best-of-both-worlds guarantees in distributed bandits, not even for the pure stochastic setting with heterogeneous losses and gossip-only communication.
Specifically, in the pure stochastic setting, \citet{ZhangWCQYG25} achieve $\order (\sum_{k \neq k^{\star}} \frac{\log T}{N\delta(k)} + \frac{K\sqrt{N}}{\rho} )$; however, their protocol requires additional communication overhead beyond the standard gossip permitted in our framework. Similarly, \citet{martinez2019decentralized} obtain $\order (\sum_{k \neq k^{\star}} \frac{\log T}{N\delta(k)} + \rho^{-1/2}K\log N )$, but their result is restricted to the homogeneous setting where $\mu(i,k)=\mu(k)$ for all agents. Determining the tight minimax rate for distributed stochastic bandits with heterogeneous losses remains an interesting open direction.

\section{Distributed Adversarial Linear Bandits}
\label{sec: alb}

In this section, we extend our black-box reduction to the setting of adversarial linear bandits with $K$ actions in $\R^d$. To adapt the importance-weighting estimator (\pref{line:ipw} in \pref{alg: black-box}) from $K$-armed to linear bandits, a standard approach is to construct a local loss coefficient estimator $\bthetahat_t(i) = M_\tau^{-1}(i) \ba_{A_t(i)} \ell_t(i,A_t(i))$ at each round $t \in \calT_{\tau}$, where $M_{\tau}(i) = \sum_{j\in[K]} p_{\tau}(i,j) \ba_j \ba_j^\top$ denotes the correlation matrix of the sampling distribution.

\begin{algorithm}[ht]
   \caption{Black-box reduction for linear bandits}
   \label{alg: black-box-linear bandit}
    \textbf{Input:} Agent index $i$, time horizon $T$, block length $B$, mixing coefficients $\kappa$, gossip matrix $W$, delayed bandit algorithm $\alg$, action set $\Omega = \left\{\ba_1,\ba_2, \ldots,\ba_K\right\}$, volumetric spanner $\calS = \{\bb_1,\bb_2, \ldots,\bb_{|\calS|}\} \subseteq \Omega$, exploration parameters $\beta$.
    
    \textbf{Initialization:} Set $\bz_1^{-1}(i) = \bz_1^{0}(i) = \mathbf{0} \in \R^{|\calS|}$, exploration parameters $\alpha=\frac{1}{T}$.
    
    \For{$\tau=1,\cdots ,T/B$}{

    Receive $\bp'_{\tau}(i)$ from $\alg$ and compute $\bp_{\tau}(i)=\big(1-\alpha-\beta\big)\,\bp'_{\tau}(i) + \frac{\alpha}{K}\mathbf{1}+ \frac{\beta}{|\calS|}\bone_{\calS}$

    Compute $M_{\tau}(i) = \sum_{j\in[K]} p_{\tau}(i,j) \ba_j \ba_j^\top $

    Set $b=0$
      
    \For{$t \in \{(\tau-1) B+1, \ldots, \tau B\}$}{
                Play $A_t(i) \sim  \bp_{\tau}(i)$ and observe $\ell_t\big(i,A_t(i)\big) = \langle \ba_{A_t(i)},\btheta_t(i)\rangle$

                Compute $\bthetahat_{t}(i)=M_{\tau}(i)^{-1}\ba_{A_t(i)}\ell_t\big(i,A_t(i)\big)$

                Compute the estimator $\bellhat_{t}(i) \in \R^{|\calS|}$:
                For each $k \in \big[|\calS|\big]$, $\ellhat_{t}(i,k)=\langle \bb_k,\bthetahat_{t}(i)\rangle$
                
                Update ${\displaystyle \bz_{\tau}^{b+1}(i)=(1+\kappa) \sum_{j \in \calN(i)} W(i,j) \bz_{\tau}^{b}(j)-\kappa \bz_{\tau}^{b-1}(i) }$ and set $b \leftarrow b+1$
        }        
        \lIf{$\tau\geq 2$}{
            send $\left(\tau-1, \bz_{\tau}^B(i)\right)$ to $\alg$
        }   
        
        Set $\bz_{\tau+1}^{-1}(i)= \bz_{\tau+1}^{0}(i) = \sum_{t=(\tau-1) B+1}^{\tau B} \bellhat_{t}(i)$
    } 
\end{algorithm}

A naive strategy to estimate the global average loss would be to directly gossip $\bthetahat_t(i)$. However, unlike the MAB setting where the estimator's norm is bounded by $KT$ due to the uniform exploration, $\bthetahat_t(i)$ may have an arbitrarily large $\ell_2$-norm, preventing the direct application of \pref{lem: cons} to control the consensus error. Alternatively, one could explicitly control the scale of the loss estimator by forcing exploration on a certain basis, and then gossiping the reconstructed loss estimate $\widehat{\ell}_t(i,k)=\langle\ba_k,\bthetahat_t(i)\rangle$ for all $k \in [K]$. While this ensures boundedness of $\bellhat_t(i)$, it will incur an $\order(K)$ communication cost per agent per round, which is prohibitive in linear bandits, where $K$ is typically large. 
Therefore, in order to achieve $\order(d)$ communication cost, matching the communication efficiency of previous D-OCO protocols, we exploit the linear structure of the losses. Specifically, we restrict the gossip process to loss estimates defined only on a \emph{volumetric spanner} $\mathcal{S} \subseteq \Omega$. As formalized below, this spanner forms a good basis for $\Omega$, ensuring that any action can be represented as a linear combination with bounded coefficients.
\begin{definition}[Volumetric Spanner \citep{hazan2016volumetric}]
\label{def: vul_span}
    A subset $\calS = \{\bb_1,\bb_2,\ldots,\bb_{|\calS|}\} \subset \R^d$ is a volumetric spanner of $\Omega = \{\ba_1, \ba_2 ,\ldots,\ba_K\} \subset \R^d$ if $\calS \subseteq \Omega$ and for every $\ba_{k} \in \Omega$, $\ba_k$ can be expressed as $\ba_k = \sum_{j=1}^{|\calS|}\lambda^{(k)}(j)\,\bb_j$ for some coefficients $\blambda^{(k)} \in \R^{|\calS|}$ satisfying $\|\blambda^{(k)}\|_2 \leq 1$.
\end{definition}
This property allows us to reduce the communication complexity from $K$ to $\order(d)$. Specifically, agents need only gossip the estimated losses for the basis vectors in $\calS$. Using the linear relation in \pref{def: vul_span}, each agent can then locally reconstruct an approximation of the global average loss estimate for any action $\ba_k \in \Omega$. Crucially, the constraint $\|\blambda^{(k)}\|_2 \leq 1$ ensures that the consensus error does not grow during this reconstruction. The following proposition guarantees that such a spanner set has size at most $3d$ and can be found efficiently. In light of this, we assume that all agents pre-compute a common spanner $\calS$ during initialization.
\begin{proposition}[\citet{bhaskara2023tight}]
\label{prop: vol}
    Given a set $\Omega \subset \R^d$ of size $K$, there exists an efficient algorithm to compute a volumetric spanner $\calS$ of $\Omega$ of size $|\calS| \le 3d$ in time $\order(K d^3 \log d)$. 
\end{proposition}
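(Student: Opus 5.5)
The plan is to turn the spanner condition into a bound on leverage scores and then find a good subset by determinant-maximizing local search. First I reduce to the full-rank case: if $\Omega$ spans a subspace of dimension $d'<d$, I work in an orthonormal basis of that span, since spanner coefficients do not depend on the ambient coordinates. For a multiset $\calS=\{\bb_1,\dots,\bb_r\}\subseteq\Omega$ with $M_\calS\triangleq\sum_{j}\bb_j\bb_j^\top$ invertible, every $\ba\in\Omega$ is represented by $\blambda(j)=\bb_j^\top M_\calS^{-1}\ba$. This gives $\sum_j\blambda(j)\bb_j=M_\calS M_\calS^{-1}\ba=\ba$ and
\begin{equation*}
\|\blambda\|_2^2=\ba^\top M_\calS^{-1}\ba .
\end{equation*}
So \pref{def: vul_span} holds as soon as every $\ba\in\Omega$ has leverage $\ba^\top M_\calS^{-1}\ba\le 1$ with respect to $\calS$. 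The task therefore becomes: find $r\le 3d$ actions whose Gram matrix dominates every $\ba\ba^\top$ in this leverage sense.

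To find such a set I would maximize $\det(M_\calS)$ over subsets of size $r=3d$ by local search. A move removes some $\bb\in\calS$ and inserts some $\ba\in\Omega$ whenever this raises the determinant by a factor above $1+\delta$. The matrix determinant lemma gives the exact swap ratio:
\begin{equation*}
\frac{\det(M-\bb\bb^\top+\ba\ba^\top)}{\det M}=(1+\ba^\top M^{-1}\ba)(1-\bb^\top M^{-1}\bb)+(\ba^\top M^{-1}\bb)^2 .
\end{equation*}
At a local optimum this ratio is at most $1+\delta$ for every pair $(\ba,\bb)$. I then average this inequality over $\bb\in\calS$, using two identities:
\begin{equation*}
\sum_{\bb\in\calS}\bb^\top M^{-1}\bb=\tr(I)=d, \qquad \sum_{\bb\in\calS}(\ba^\top M^{-1}\bb)^2=\ba^\top M^{-1}\ba .
\end{equation*}
Writing $u\triangleq\ba^\top M^{-1}\ba$, the averaged inequality reads
\begin{equation*}
(1+u)(r-d)+u\le r(1+\delta), \qquad\text{so}\qquad u\le\frac{d+r\delta}{r-d+1}.
\end{equation*}
With $r=3d$ and $\delta$ a small constant multiple of $1/d$, the right-hand side is below $1$, which is the required leverage bound. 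Choosing constants so that the bound is at most exactly $1$, and not $1+o(1)$, is the first delicate point. If the slack does not close, my fallback is to rescale: replace $\calS$ by a multiset with slightly increased multiplicities, or accept a spanner of size $(3+\epsilon)d$. For the exact constant $3d$ I would follow the sharper averaging in \citet{bhaskara2023tight}.

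For the running time, I would initialize with a greedy volume-maximization pass (pick the vector of largest norm in the orthogonal complement of the current span, $d$ times) and pad to $r$ elements. This start is within a factor $d^{O(d)}$ of the optimal determinant. Each successful swap multiplies the determinant by at least $1+\delta$, so the number of swaps is $O(\log(d^{O(d)})/\delta)$. The main obstacle is making this count small enough to meet $\order(Kd^3\log d)$: with $\delta=\Theta(1/d)$ the naive count is $O(d^2\log d)$ swaps. I would first try the standard trick of maintaining $M^{-1}$ and all leverage scores $\ba^\top M^{-1}\ba$ and cross terms $\ba^\top M^{-1}\bb$ under rank-one (Sherman--Morrison) updates, so that one swap costs $O(Kd)$ amortized rather than $O(Kd^2)$. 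If that is still too slow, I would run the search in phases with geometrically decreasing $\delta$, so that most of the determinant gain is made at coarse precision with few swaps.
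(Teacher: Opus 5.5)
Your argument is correct and is essentially the determinant-maximizing local search of \citet{bhaskara2023tight}; the paper does not reprove this result and only cites it. The hedges in your proposal are unnecessary. With $r=3d$, your averaged inequality gives $u\le (d+3d\delta)/(2d+1)\le 2d/(2d+1)<1$ already for the constant $\delta=1/3$, so neither a $\delta=\Theta(1/d)$ choice nor a $(3+\epsilon)d$ fallback is needed. For the running time, the greedy start is within $\binom{3d}{d}(d!)^2=d^{\order(d)}$ of the optimum (greedy volume bound plus Cauchy--Binet, which you asserted without this justification). This gives $\order(d\log d)$ swaps at $\order(Kd^2)$ each, i.e.\ exactly $\order(Kd^3\log d)$, without any Sherman--Morrison bookkeeping.
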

With the volumetric spanner $\calS$ in hand, we present our distributed linear bandit algorithm in \pref{alg: black-box-linear bandit}. While the overall structure mirrors the MAB reduction (\pref{alg: black-box}), we introduce three specific modifications to leverage the linear geometry and control communication costs. First, in each block $\tau$, the sampling distribution $\bp_\tau(i)$ mixes the base policy not only with $\frac{1}{T}$-uniform exploration over the action set $\Omega$ but also with an explicit exploration term over the spanner $\calS$ (controlled by parameter $\beta$). This ensures the second moment matrix $M_\tau(i)$ is well-conditioned, making the estimated loss $\langle\ba_k, \bthetahat_t(i)\rangle$ for $ k \in [K]$ bounded by $|\calS|/\beta$ (as shown in \pref{lem:bounded_volumetric_l} in \pref{app: Omitted Proof Details for Linear Bandit}). Second, after playing an action $A_t(i)$ and observing the feedback, the agent constructs the standard unbiased local loss coefficient estimate $\bthetahat_t(i)$. Instead of communicating this potentially unbounded $\bthetahat_t(i)$ directly, the agent projects the loss onto the spanner basis and computes $\bellhat_t(i) \in \R^{|\calS|}$ where $\ellhat_t(i,k) = \langle\bb_k, \bthetahat_t(i)\rangle$. Finally, the agents run $B$ rounds of accelerated gossip on these $|\calS|$-dimensional vectors. When the delayed feedback $\bz_\tau^B(i)$ (an approximation of the global average loss estimate over the spanner) is finally passed to the base learner $\mathcal{B}$, we employ a modified base FTRL algorithm (\pref{alg: ftrl-linear} deferred to \pref{app: alg_base_linear}). Specifically, the learner first \emph{reconstructs} the estimated loss for every original action $\ba_k \in \Omega$ using the coefficients $\blambda^{(k)}$ from \pref{def: vul_span}, and then performs a standard FTRL update with entropy regularizer over $\Omega$~\citep{dani2007price,cesa2012combinatorial, bubeck2012towards}.
The following theorem shows the guarantee of \pref{alg: black-box-linear bandit} for the linear bandits setting.
\begin{restatable}{rtheorem}{mainalb}
\label{thm:mainalb}
    Let $\alg$ be \pref{alg: bold} run with base OLO algorithm $\mathcal{B}$ set to \pref{alg: ftrl-linear} with $\psi(\bq) = \frac{1}{\eta}\sum_{k=1}^K q(k)\log q(k)$ and $\eta = \min\Big\{\frac{1}{6Bd},\sqrt{\log K/(dTB+\frac{dT}{N}})\Big\}$.
    Assume all agents $i \in V$ run \pref{alg: black-box-linear bandit} with linear bandit algorithm $\alg$, action set $\Omega$, volumetric spanner $\calS \subseteq \Omega$, gossip matrix $W$, $\beta = 3Bd\eta$, and exploration parameters  $\kappa$ and $B$ defined in~\pref{eqn: block}. Then, the regret of each agent is bounded as
	\begin{align}
		\Reg_T = \order\left(\sqrt{(\log K)\Bigl(B+\tfrac{1}{N}\Bigr)dT}+dB\log K\right) \notag
        = \otil\left(\sqrt{\Bigl(\rho^{-1/2}+\tfrac{1}{N}\Bigr)dT}+d\rho^{-1/2}\right)~.
	\end{align}
\end{restatable}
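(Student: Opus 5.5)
We follow the same two-ingredient template as \pref{thm:mainmab}, adapted to the spanner-based estimates. The plan has four steps.

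\textbf{Step 1: boundedness of the gossiped vectors.} Since $p_\tau(i,k)\ge \beta/|\calS|$ for every $\bb_k\in\calS$, we have $M_\tau(i)\succeq \frac{\beta}{|\calS|}\sum_{\bb\in\calS}\bb\bb^\top$. By \pref{def: vul_span}, every $\ba\in\Omega$ satisfies $\ba^\top(\sum_{\bb\in\calS}\bb\bb^\top)^{-1}\ba\le \|\blambda\|_2^2\le 1$ (Cauchy--Schwarz on the representation). Hence $|\ba_j^\top M_\tau(i)^{-1}\ba_k|\le |\calS|/\beta$ for all $j,k$, and $|\ellhat_t(i,k)|\le |\calS|/\beta$; this is \pref{lem:bounded_volumetric_l}. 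With $\beta=3Bd\eta\ge 3Bd\cdot\min\{\cdot\}$, this is at most $\mathrm{poly}(KT)$. Actually $\beta$ could be tiny if $\eta$ is small, so I would rather bound through the $\alpha/K$ exploration together with the spanner bound, which gives norm at most $\mathrm{poly}(dKT)$.

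\textbf{Step 2: consensus error.} Rerun the proof of \pref{lem: cons} with initial gap $\mathrm{poly}(KT)$. With $B$ from \pref{eqn: block} (possibly adjusting the exponent), this gives $\|\bz^B_\tau(i)-\barbz_\tau\|_2\le \mathrm{poly}((KT)^{-1})$. Reconstructing $\widetilde z(k)=\langle\blambda^{(k)},\bz\rangle$ preserves the error, because $\|\blambda^{(k)}\|_2\le1$.

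\textbf{Step 3: delay reduction.} Redo \pref{lem:delay_reduction} for the mixed policy $\bp_\tau=(1-\alpha-\beta)\bp'_\tau+\cdots$. The reconstructed estimator is unbiased for $\ellbar$ on all of $\Omega$ because $\ba_k$ lies in the span of $\calS$, so the regret splits into three pieces:
\begin{itemize}
\item the delayed regret of $\calA$ on the reconstructed $\bz$;
\item an $\order(1)$ gossip error;
\item an exploration cost of $(\alpha+\beta)\cdot 2T=\order(1+\beta T)=\order(Bd\eta T)$.
\end{itemize}

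\textbf{Step 4: delayed FTRL regret.} Split into odd and even instances as in \pref{lem: delayftrlmab}, compare to a ghost FTRL fed $\barbz$, and use stability. Since the losses are now not nonnegative, use the exponential-weights bound $\frac{\log K}{\eta}+\eta\sum_\tau\sum_k\barq(k)\,\E[\barz_\tau(k)^2]$. This requires $\eta|\barz_\tau(k)|\le 1$, which holds because $|\barz(k)|\le B|\calS|/\beta=1/\eta$; that is exactly why $\beta=3Bd\eta$ and $\eta\le 1/(6Bd)$.

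For the variance, write $\barz_\tau(k)=\sum_{t\in\calT}\frac1N\sum_i\ellhat_t(i,k)$. Its mean is $\sum_t\ellbar_t(k)$, with square at most $B^2$. Summed against $\barq$ over $T/B$ blocks, this gives the $BT$ term. The fluctuation part is independent across agents and times, so its variance is at most $\frac{B}{N^2}\sum_i\E[\ellhat_t(i,k)^2]$. Each term satisfies $\sum_k\barq(k)\,\ba_k^\top M^{-1}\ba\ba^\top M^{-1}\ba_k\lesssim \mathrm{tr}(M_\tau(i)^{-1}\bar M)$, where $\bar M$ is the ghost correlation matrix.

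\textbf{Main obstacle.} The step I expect to be hardest is bounding this trace. We need the analogue of \pref{lem: stablity}, namely $\barq\le 3p'_\tau(i)$ coordinatewise, using Step 2 and the $\eta$-stability of exponential weights. This gives $\bar M\preceq \frac{3}{1-\alpha-\beta}M_\tau(i)$, so the trace is at most $\order(d)$. The total variance contribution is then $\eta\,(T/B)\cdot B\cdot d/N=\eta dT/N$.

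Combining everything, the regret is $\frac{\log K}{\eta}+\eta(dBT+dT/N)+\order(1)$. Choosing $\eta$ as in the statement gives $\order(\sqrt{(\log K)(B+1/N)dT}+dB\log K)$, where the last term comes from the cap $\eta\le 1/(6Bd)$.
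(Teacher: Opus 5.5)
Your proposal is correct and follows essentially the paper's route: the spanner bound $|\ellhat_t(i,k)|\le|\calS|/\beta$ (\pref{lem:bounded_volumetric_l}), the consensus bound of \pref{lem: cons_linear} transferred to all of $\Omega$ through $\|\blambda^{(k)}\|_2\le 1$, the reduction \pref{lem:delay_reduction_l} with exploration cost $2(\alpha+\beta)T$, and the odd/even ghost-FTRL analysis of \pref{lem: delayftrlmabl}, which uses $\eta|\barz(k)|\le 1$ and the trace bound $\tr(M_\tau(i)^{-1}\barM)=\order(d)$. The deviations are minor and both valid. First, you bound the gossiped vectors via the $\alpha/K$ floor; this detour is unnecessary, because the stated $\eta$ already forces $\beta\ge\mathrm{poly}(1/(KT))$, so no change to the exponent in $B$ is needed. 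Second, you get $\barq\le 3p'_\tau(i)$ from the multiplicative stability of exponential weights instead of the paper's additive $\ell_2$-stability plus exploration floor (\pref{lem: stablity_l}).
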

The proof is deferred to \pref{app: Omitted Proof Details for Linear Bandit}. To the best of our knowledge, \pref{thm:mainalb} provides the first regret guarantee for adversarial distributed linear bandits using only gossip communication. Crucially, our algorithm achieves this rate with $\order(d)$ communication per round and a dependence on the action set size $K$ that is merely logarithmic. In \pref{sec: lowerbound}, we complement this result with a lower bound of $\Omega(\rho^{-1/4}\sqrt{T}+\sqrt{dT/N})$. Comparing the two results reveals a gap of $\sqrt{d\log KNT}$ in the communication-dependent term. This is likely an artifact of our block-based approach, and we leave closing this gap as an open problem.

\section{Lower Bound}
\label{sec: lowerbound}
In this section, we provide lower bounds for both distributed $K$-armed and distributed linear bandits. Recall that these bounds feature the sum of a communication cost term and a bandit cost term. For each setting, the proof shows that there exist problem instances forcing either term in the bound. Hence, the overall lower bound is asymptotic to the sum of these two terms. We start with the lower bound for $K$-armed bandits, which is a combination of the lower bound construction 
for the $K$-armed bandits with $NT$ observed losses~\citep{seldin2014prediction} and the one in~\citet{yi2023doubly}. The full proof is deferred to \pref{app: lower}.

\begin{restatable}{rtheorem}{lowerBound}\label{thm:lowerbound}
For any distributed $K$-armed bandit algorithm with $K \ge 2$, and for any $N,T$ large enough, there exists a communication graph $G$ with $N$ nodes, a gossip matrix $W$, and a sequence $\ell_1,\ldots,\ell_T$ of loss matrices $\ell_t(\cdot,\cdot) \in [0,1]^{N\times K}$ such that
\(
    \Reg_T = \Omega\Big(\sqrt{\rho^{-1/2}T\log K + \frac{KT}{N}}\Big).
\)
\end{restatable}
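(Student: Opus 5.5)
Since $\sqrt{a+b}\le\sqrt{a}+\sqrt{b}\le 2\max\{\sqrt a,\sqrt b\}$, it suffices to prove two separate lower bounds. For each we exhibit a graph, a gossip matrix and a loss sequence; the regret on the corresponding instance is then at least $\tfrac12\sqrt{\rho^{-1/2}T\log K+KT/N}$ up to constants. The two terms are the bandit term $\Omega(\sqrt{KT/N})$ and the communication term $\Omega(\rho^{-1/4}\sqrt{T\log K})$.

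\textbf{Bandit term.} Take any connected graph, even the complete graph with $W=\frac1N\bone\bone^\top$, and homogeneous losses $\bell_t(i)=\bell_t$ for every $i$. Then $\bellbar_t=\bell_t$. All information available to the agents consists of $N$ bandit observations of the same loss vector per round, and this holds even under unrestricted communication. The joint behavior of the $N$ agents is therefore a single learner that, each round, picks $N$ arms and observes their losses. In that model we apply the lower bound of \citet{seldin2014prediction} for $K$-armed bandits with $N$ observations per round, which is $\Omega(\sqrt{KT/N})$ whenever $T\ge K/N$ (this is where ``$N,T$ large enough'' enters). The regret of the combined learner is bounded by the average over agents, so $\max_i\Reg_T(i)$ is bounded by the same amount. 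The only technical point is to check that the Seldin et al.\ construction (a stochastic instance with one slightly better arm, analysed via KL over $NT$ samples) allows the sampled arms to depend on past observations in this way. It does, because gossip is a particular adaptive strategy.

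\textbf{Communication term.} Here we adapt \citet{yi2023doubly}, in the spirit of the D-OCO lower bound of \citet{wan2024nearly}. Use a path (or a similar line graph) of $N$ nodes with a lazy gossip matrix, for which $\rho=\Theta(1/N^2)$, so the diameter is $D=\Theta(\rho^{-1/2})$. Concentrate all loss on one endpoint agent: $\bell_t(1)=N\cdot\bg_t$ scaled appropriately and $\bell_t(j)=0$ otherwise. Rescaling keeps the losses in $[0,1]$, and this is handled by choosing the active set to be the $\Theta(N)$ nodes at one end. The agent $i$ at the far end receives information about $\bellbar_t$ only after $\Omega(D)$ gossip steps. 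Split the horizon into $T/D$ blocks of length $D$. Within each block draw $\bg$ as i.i.d.\ random signs on $\log K$-bit encoded experts, constant inside the block, as in the standard full-information lower bound. Agent $i$'s action in a block is then independent of that block's losses (except at block boundaries), so it faces a full-information expert problem with $T/D$ rounds and losses scaled by $D$. This gives regret $\Omega(D\sqrt{(T/D)\log K})=\Omega(\sqrt{DT\log K})=\Omega(\rho^{-1/4}\sqrt{T\log K})$.

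The main obstacle is making the delay argument rigorous under gossip. Information injected at the active nodes must be shown to be unable to influence a node at distance $D$ before $D$ communication steps, which holds because each gossip step moves messages only one hop. We also need to handle the overlap, where part of a block's losses becomes known before the block ends. I would address this by letting the loss sign of each block be fixed and the delay be at least the block length, i.e.\ choosing the block length $D/2$, so that agent $i$'s choices throughout a block are measurable with respect to information from earlier blocks only. Finally, $\rho$ must be tied to the diameter for the chosen $W$; the spectral gap of the lazy path walk is the standard $\Theta(1/N^2)$.
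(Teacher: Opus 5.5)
\textbf{Verdict: correct, but the communication term follows a different route from the paper's.} Your bandit-term argument is the same as the paper's: clique, identical local losses, and reduction to the $N$-observations-per-round lower bound of \citet{seldin2014prediction}.

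For the communication term the paper constructs nothing. It invokes the lower bound $\Omega\big((d_{\max}/\lambda_{N-1}(L))^{1/4}\sqrt{T\log K}\big)$ of \citet[Theorem~3.1]{yi2023doubly} and converts the Laplacian eigenvalue into a spectral gap by choosing $W = I - \frac{\alpha}{2d_{\max}}L$. This gives $\rho^{-1} = 2d_{\max}/(\alpha\lambda_{N-1}(L))$.

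You instead re-derive such a bound from first principles, and the argument goes through:
\begin{itemize}
\item On a path with $W = I - L/4$ one has $\rho = \tfrac12(1-\cos(\pi/N)) = \Theta(N^{-2})$.
\item So the $\Theta(N)$ distance between a constant fraction of loss-carrying nodes at one end and the far endpoint is $\Theta(\rho^{-1/2})$.
\item One hop of communication per round turns this distance into a full-information delay.
\item Block-constant i.i.d.\ losses, with block length at most the delay, make the far agent's expected loss in each block independent of its play.
\item This yields $\Omega(\sqrt{\rho^{-1/2}T\log K})$ once $T \gtrsim \rho^{-1/2}\log K$, so that there are at least $\log K$ blocks.
\end{itemize}

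What each route buys: the paper's is a two-line reduction that inherits the graph family of \citet{yi2023doubly}. Yours is self-contained and exposes the mechanism: a delay induced by the diameter, on a graph where diameter and $\rho^{-1/2}$ agree up to constants. It applies verbatim to any protocol in which information travels one hop per round, not only gossip. It also makes explicit the condition on $T$ hidden in ``large enough''. The cost is that it realizes only $\rho = \Theta(N^{-2})$ for each $N$, which suffices for the existential statement.

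When writing it up, make two changes:
\begin{itemize}
\item Drop the sentence placing loss $N\cdot\bg_t$ on a single node, which violates $[0,1]$. State directly the version with the loss on the first $N/3$ nodes.
\item The two terms in the bound refer to the $\rho$ of whatever instance you pick, so combine them carefully. The cleanest way is to run your bandit argument on the same path instance, since it works on any graph. Then, for the given algorithm, take whichever of the two loss sequences yields the larger bound.
\end{itemize}
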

Next, we state the lower bound for the linear setting. The proof is a direct combination of the $K$-armed lower bound and a lower bound for cooperative linear bandits~\citep{ito2020delay}. The full proof is deferred to \pref{app: lower}.
\begin{restatable}{rtheorem}{LinearLowerBound}\label{thm:linearlowerbound}
For any distributed linear bandit algorithm, and for any $N,T$ large enough, there exists an action set $\Omega \subset \R^d$ with $|\Omega| = K \ge 2$, a communication graph $G$ with $N$ nodes, a gossip matrix $W$ supported on $G$, and local loss coefficients $\btheta_t(i) \in \R^d$ for $t \in T$ and $i \in V$ such that
$
    \Reg_T = \Omega\left(\sqrt{(\log K)\left(\rho^{-1/2} + \frac{d}{N}\right)T}\right).
$
\end{restatable}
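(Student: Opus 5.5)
We prove \pref{thm:linearlowerbound} by showing two lower bounds separately, one of order $\sqrt{\rho^{-1/2}T\log K}$ and one of order $\sqrt{dT\log K/N}$. Each is realised by its own instance, and we then take the harder of the two. Since $\max\{a,b\}\ge (a+b)/2$ and $\sqrt{x}+\sqrt{y}\ge\sqrt{x+y}$, the maximum of the two bounds is $\Omega\bigl(\sqrt{(\log K)(\rho^{-1/2}+d/N)T}\bigr)$. For any fixed algorithm, at least one of the two instances forces this regret, which proves the claim. We must only make sure that both instances are valid distributed linear bandit instances with the same $d$ and $K$: losses in $[-1,1]$, a gossip matrix supported on $G$, and $|\Omega|=K\ge 2$.

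\textbf{Communication term.} We embed the $K$-armed instance of \pref{thm:lowerbound} into the linear model. Take $d\ge K$ and $\Omega=\{\be_1,\dots,\be_K\}\subset\R^d$ (if $d<K$ we instead choose $K$ points in general position and use a $\log K$-sized subfamily of the hard instances). Set $\btheta_t(i)=\bell_t(i)$, padded with zeros. Then $\langle\btheta_t(i),\ba_k\rangle=\ell_t(i,k)\in[0,1]\subseteq[-1,1]$. Under this embedding, any linear bandit algorithm is exactly a $K$-armed distributed algorithm with the same gossip constraints. In particular, messages of size $\order(d)$ are a restriction, not an extension, when $d\ge K$. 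Hence the communication-cost part of the construction behind \pref{thm:lowerbound} (the adaptation of \citet{yi2023doubly}) gives $\Omega(\sqrt{\rho^{-1/2}T\log K})$ on the same graph $G$ and gossip matrix $W$. This is the step we would check most carefully. The $\log K$ factor there must come from the full-information-style construction, in which losses are concentrated on a far-away set of agents and information takes about $\rho^{-1/2}$ gossip rounds to propagate. That construction must not use more arms than the linear embedding allows.

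\textbf{Bandit term.} We use the cooperative linear bandit lower bound of \citet{ito2020delay}. There, $N$ agents facing an identical linear loss $\btheta_t$ and sharing all their feedback still suffer $\Omega(\sqrt{dT\log K/N})$ regret. This holds even with unrestricted communication: the setting is equivalent to a single learner receiving $N$ bandit observations per round, for which the regret is that of $NT$ rounds divided by $N$. Set $\btheta_t(i)=\btheta_t$ for all $i$. Then $\bellbar_t$ equals the common loss. Gossip on any graph $G$ with any $W$ is a special case of unrestricted communication, so the lower bound transfers directly. We choose $G$ and $W$ to be the same as in the communication instance, for example the graph family used in \pref{thm:lowerbound}, so that a single $\rho$ parametrises both bounds.

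\textbf{Main obstacle.} We expect the main difficulty to be reconciling the parameter regimes. The construction of \citet{ito2020delay} requires a specific action set, namely a hypercube-type $\Omega$ with $K$ tied to $d$ through $\log K\approx d$ or a subfamily. The communication instance, on the other hand, wants a simplex-like $\Omega$. To resolve this, we would fix $\Omega$ to be the union of the two constructions, or note that the theorem only asserts existence of some $\Omega$ with $|\Omega|=K$ in each case separately. The latter suffices because the maximum argument only needs one hard instance per algorithm. We would also rescale losses so that they lie in $[-1,1]$, and let ``$N,T$ large enough'' absorb the constraints of both constructions.
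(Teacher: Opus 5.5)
Your proposal is correct and is essentially the paper's proof: the communication term comes from viewing the $K$-armed instance of \pref{thm:lowerbound} as a linear bandit with standard-basis actions, and the bandit term from the cooperative lower bound of \citet{ito2020delay}, using identical local losses on $\Omega=\{-1,1\}^d$, where $\log K \asymp d$ turns $d\sqrt{T/N}$ into $\sqrt{dT\log K/N}$. Two minor fixes: first, when $d\ge K$, $\order(d)$-sized messages are at least as rich as $K$-sized ones, so the embedding is not a restriction; justify it instead by taking $d=K$, or by noting that the communication lower bound is propagation-based. Second, since your two instances have different $K$, combine them by a case split on whether $\rho^{-1/2}$ or $d/N$ is larger, using each instance's own $K$, rather than by $\max\{a,b\}\ge (a+b)/2$, which presupposes a common $K$.
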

%

\acks{
NCB acknowledges the financial support from the EU Horizon CL4-2021-HUMAN-01 research and innovation action under grant agreement 101070617, project ELSA (European Lighthouse on Secure and Safe AI) and the EU Horizon CL4-2022-HUMAN-02 research and innovation action under grant agreement 101120237, project ELIAS (European Lighthouse of AI for Sustainability).
}

\bibliography{ref}

\appendix

\crefalias{section}{appendix} 

\section{Auxiliary Results}
\label{app: aux}
\begin{rlemma}\label{lem:stab}
Let $\psi:\Delta(K)\to\mathbb{R}\cup\{+\infty\}$ be a proper, convex function that is
$\mu$-strongly convex with respect to $\|\cdot\|_2$ on $\Delta(K)$, i.e., for all $\bx,\by\in\Delta(K)$ and all $\bg\in\partial\psi(\bx)$,
\[
\psi(\by) \ge \psi(\bx) + \langle \bg, \by-\bx\rangle + \frac{\mu}{2}\|\by-\bx\|_2^2~.
\]
Let $\bxi : \mathbb{R}^K \to \Delta(K)$ be the argmin map
\[
    \bxi(\ba) = \argmin_{\bu\in\Delta(K)}\ \langle \ba,\bu\rangle+\psi(\bu)~.
\]
Then the argmin map is $1/\mu$-Lipschitz w.r.t.\ the $\ell_2$-norm,
\[
    \|\bxi(\ba)-\bxi(\boldb)\|_2 \le \frac{1}{\mu}\,\|\ba-\boldb\|_2~.
\]
\end{rlemma}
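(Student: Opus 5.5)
We prove the Lipschitz property through the first-order optimality conditions of the two minimization problems and the strong monotonicity of $\partial\psi$. Set $\bx=\bxi(\ba)$ and $\by=\bxi(\boldb)$. Because $\psi$ is strongly convex and $\Delta(K)$ is compact and convex, each minimizer exists and is unique, so $\bxi$ is well defined. We treat the simplex constraint as part of $\psi$ (its value is $+\infty$ off $\Delta(K)$), so the constraint is absorbed into the subdifferential and no separate normal cone is needed.

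The optimality condition for $\bx$ gives $-\ba\in\partial\psi(\bx)$. The possible subtlety is whether the subdifferential sum rule applies to $\langle\ba,\cdot\rangle+\psi$. It does, because the linear term is finite and differentiable everywhere. Alternatively, we can avoid the sum rule altogether. Since $\bx$ minimizes $\langle\ba,\cdot\rangle+\psi$ and this function is $\mu$-strongly convex, the growth inequality at a minimizer gives, for every $\bu\in\Delta(K)$,
\[
\langle \ba,\bu\rangle+\psi(\bu)\ \ge\ \langle \ba,\bx\rangle+\psi(\bx)+\tfrac{\mu}{2}\|\bu-\bx\|_2^2 .
\]
This inequality follows from strong convexity alone. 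To prove it, apply the definition of strong convexity along the segment between $\bx$ and $\bu$ (so no subgradient is needed), use minimality at the midpoint-type convex combinations, and let the combination parameter tend to $0$.

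Next we instantiate this twice: with $\bu=\by$ for the problem defined by $\ba$, and with $\bu=\bx$ for the problem defined by $\boldb$. Adding the two inequalities cancels the $\psi$ terms and leaves
\[
\langle \ba-\boldb,\ \by-\bx\rangle\ \ge\ \mu\|\bx-\by\|_2^2 .
\]

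Finally, Cauchy–Schwarz bounds the left side by $\|\ba-\boldb\|_2\|\bx-\by\|_2$. If $\bx=\by$ there is nothing to prove. Otherwise we divide by $\|\bx-\by\|_2$ and obtain $\|\bx-\by\|_2\le \|\ba-\boldb\|_2/\mu$.

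The only delicate step is deriving the quadratic growth inequality at the minimizer without differentiability. The first thing to try is the route through $-\ba\in\partial\psi(\bx)$ combined with the given subgradient inequality, which yields the growth bound at once. If the sum rule is in doubt, we fall back on the convex-combination argument described above. Everything else is routine.
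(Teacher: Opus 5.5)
Your primary argument is correct and essentially the paper's. The paper adds the two first-order optimality conditions and invokes strong monotonicity of $\partial\psi$. That amounts to adding the two strong-convexity inequalities with $\bg_x=-\ba$ and $\bg_y=-\boldb$, which is exactly what your quadratic-growth bounds do, followed by the same Cauchy--Schwarz step.

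The route through $-\ba\in\partial\psi(\bx)$ follows directly from the definition of a subgradient once the constraint is absorbed into $\psi$, and it is the route that matches the hypothesis as stated. Your segment-based fallback is therefore unnecessary, and it would in any case need a Jensen-type form of strong convexity rather than the subgradient form the lemma assumes.
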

\begin{proof}
~Pick $\ba,\boldb \in \mathbb{R}^K$ and let $\bx = \bxi(\ba)$ and $\by = \bxi(\boldb)$. By first-order optimality of $\bx,\by$, there exist $\bg_x\in\partial\psi(\bx)$ and $\bg_y\in\partial\psi(\by)$ such that
\[
    \langle \ba+\bg_x, \bu-\bx \rangle \ge 0
\qquad\text{and}\qquad
    \langle \boldb + \bg_y, \bu-\by\rangle \ge 0 \qquad \forall \bu\in\Delta(K)~.
\]
Taking $\bu=\by$ in the first inequality and $\bu=\bx$ in the second inequality, and adding them together gives
\[
    \langle \ba - \boldb, \by - \bx \rangle \geq \langle \bg_x - \bg_y, \bx - \by \rangle~.
\]
Direct calculation shows that strong convexity implies strong monotonicity of the subgradient mapping:
\[
\langle \bg_x - \bg_y, \bx - \by \rangle \ge \mu\| \bx - \by \|_2^2.
\]
Hence $\langle \ba - \boldb, \by - \bx \rangle \ge \mu\| \bx - \by \|_2^2$. By Cauchy-Schwarz inequality, we know that
$\langle \ba - \boldb, \bx - \by \rangle \le \| \ba - \boldb \|_2\, \| \bx - \by \|_2$. Dividing $\|\bx-\by\|_2$ on both sides yields the claim.
\end{proof}

\subsection{Convergence properties for accelerated gossip}
\label{app: gossio_prop}
Given a set of vectors denoted as $\bx(1),\bx(2), \ldots, \bz(N) \in \mathbb{R}^K$ and let $\bx^0(i) = \bx^{-1}(i) = \bx(i)$ for all $i \in V$. To approximate the average $\overline{\bx}=\frac{1}{N} \sum_{i \in V} \bx(i)$, \cite{liu2011accelerated} considers the following accelerated gossip process:
\begin{equation}
\label{eqn:gossip_vector_1}
    \bx^{b+1}(i)=(1+\kappa) \sum_{j \in \calN(i)} W(i, j) \bx^b(j)-\kappa \bx^{b-1}(i)~,
\end{equation}
for all $b \geq 0$, where $\kappa$ is the mixing coefficient.
Let $X^b \triangleq \big[\bx^b(1)^{\top}, \bx^b(2)^{\top}, \ldots, \bx^b(N)^{\top}\big] \in \mathbb{R}^{N \times K}$ and $\barX \triangleq \big[\barbx^{\top},\barbx^{\top}, \ldots, \barbx^{\top}\big]\in \mathbb{R}^{N \times K}$. According to iteration in~\pref{eqn:gossip_vector_1}, it is not hard to verify that
\begin{equation*}
    X^{b+1}=(1+\kappa) W X^b-\kappa X^{b-1}~,
\end{equation*}
for all $b \geq 0$. \citet{ye2023multi} shows the following convergence property.

\begin{rlemma}[Proposition 1 in \citet{ye2023multi}] 
\label{lem: acc_gossip_ye}
For $B \geq 1$, the iterations of \pref{eqn:gossip_vector_1} with gossip matrix $W$ and parameters $\kappa$ defined in~\pref{eqn: block} ensure
\begin{equation*}
    \left\|X^B-\barX\right\|_F \leq \sqrt{14}\left(1-\left(1-\frac{1}{\sqrt{2}}\right) \sqrt{1-\sigma_2(W)}\right)^B\left\|X^0-\bar{X}\right\|_F.
\end{equation*}
\end{rlemma}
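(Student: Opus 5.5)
The approach is to reduce the matrix recursion to a family of scalar second-order linear recurrences, one per eigenvalue of $W$, and bound each one uniformly.

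\textbf{Step 1 (error recursion).} Set $E^b \triangleq X^b-\barX$. Since $W$ is doubly stochastic, $W\barX=\barX$, and $(1+\kappa)\barX-\kappa\barX=\barX$. Subtracting from the iteration gives
\[
E^{b+1}=(1+\kappa)WE^b-\kappa E^{b-1}, \qquad E^{0}=E^{-1}=X^0-\barX.
\]
Every column of $E^0$ is orthogonal to $\bone$, and this property is preserved by the recursion.

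\textbf{Step 2 (diagonalization).} Write $W=\sum_{j=1}^N\lambda_j\bu_j\bu_j^\top$ with orthonormal eigenvectors and $\bu_1=\bone/\sqrt N$. Because $W$ is symmetric, $|\lambda_j|\le\sigma_2(W)\triangleq\sigma$ for $j\ge2$. Projecting each column of $E^b$ onto $\bu_j$, $j\ge 2$, gives scalars $e_b$ satisfying
\[
e_{b+1}=(1+\kappa)\lambda e_b-\kappa e_{b-1}, \qquad e_0=e_{-1}.
\]
Hence it suffices to show $|e_b|\le\sqrt{14}\,r^B|e_0|$ with $r\triangleq 1-(1-1/\sqrt2)\sqrt{1-\sigma}$, uniformly over $|\lambda|\le\sigma$. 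Summing squares over $j$ and columns (Parseval) then gives the Frobenius bound.

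\textbf{Step 3 (characteristic roots).} The characteristic polynomial is $x^2-(1+\kappa)\lambda x+\kappa$. Let $s=\sqrt{1-\sigma^2}$, so $\kappa=1/(1+s)$. A direct computation gives
\[
\frac{(1+\kappa)^2\sigma^2}{4\kappa}=\frac{(2+s)^2(1-s)}{4}\le1 .
\]
So the discriminant is nonpositive for all $|\lambda|\le\sigma$, and both roots have modulus $\sqrt\kappa$. Writing the roots as $\sqrt\kappa e^{\pm i\theta}$, the solution with $e_0=e_{-1}$ is
\[
e_b=\kappa^{b/2}\Big(\cos(b\theta)+c\,\tfrac{\sin(b\theta)}{\sin\theta}\Big)e_0 ,
\]
with a bounded constant $c$. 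This is equivalent to a Chebyshev-polynomial representation $e_b=\kappa^{b/2}\big(T_b(x)+c'U_{b-1}(x)\big)e_0$ with $x=(1+\kappa)\lambda/(2\sqrt\kappa)\in[-1,1]$.

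\textbf{Step 4 (rate comparison).} Since $1-\sigma^2\ge1-\sigma=\rho$, we have $s\ge\sqrt\rho$. Therefore $\sqrt\kappa=(1+s)^{-1/2}\le(1+\sqrt\rho)^{-1/2}$, which is strictly smaller than the target rate $r=1-(1-1/\sqrt2)\sqrt\rho$.

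\textbf{Main obstacle.} The hard part is the degenerate regime $\theta\to0$ (double root, reached when $|\lambda|=\sigma$ and $s$ is small). There $|U_{b-1}|$ can be as large as $b$, producing a polynomial factor $b\,\kappa^{b/2}$. I would absorb this linear factor into the gap between $\sqrt\kappa$ and $r$. Concretely, I would bound $\sup_b b\,(\sqrt\kappa/r)^b$ by an absolute constant. Alternatively, I would bound the $2\times2$ companion-matrix powers directly in a norm adapted to the rotation, following \citet{ye2023multi}, and track constants to obtain the factor $\sqrt{14}$. If the sharp constant proves delicate, I would fall back on citing Proposition 1 of \citet{ye2023multi}, since this statement is exactly that result.
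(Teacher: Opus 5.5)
\textbf{There is a genuine gap: the step you flag as the main obstacle fails, and the statement is false without an extra hypothesis on $W$.} Your Steps 1--3 are fine. The per-step gap between $\sqrt{\kappa}$ and $r$ is only of order $\sqrt{\rho}$. As $\sigma\to 1$ we have $s\approx\sqrt{2\rho}$, so $\sqrt{\kappa}\approx 1-0.71\sqrt{\rho}$ while $r=1-0.29\sqrt{\rho}$. Hence $\sqrt{\kappa}/r\approx 1-0.41\sqrt{\rho}$, and $\sup_b b(\sqrt{\kappa}/r)^b\approx 0.9\,\rho^{-1/2}$, which is not an absolute constant.

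Capping $|\sin(b\theta)/\sin\theta|$ by $1/\sin\theta$ does not help, because $\sin\theta=\Theta(\sqrt{\rho})$ at $|\lambda|=\sigma$. A rotation-adapted norm for the companion matrix has the same $1/\sin\theta$ conditioning. A uniform constant can only come from the size of $c$ itself. With $e_{-1}=e_0$ your coefficient is $c=\cos\theta-\sqrt{\kappa}$. For $\lambda\in[0,\sigma]$, the ratio $g(\theta)=(\cos\theta-\sqrt{\kappa})/\sin\theta$ is decreasing in $\theta$ and equals $-\sqrt{\kappa}$ at $\theta=\pi/2$. At the extreme angle $\theta_0$ it is at most $(1-\sqrt{\kappa})/\sin\theta_0\le (s/2)/(\sqrt{3}s/2)=1/\sqrt{3}$. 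This gives $|e_b|\le\sqrt{1+g^2}\,\kappa^{b/2}|e_0|\le\sqrt{2}\,r^b|e_0|$ on the nonnegative part of the spectrum, with constant $\sqrt{2}$ rather than $\sqrt{14}$.

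On the negative part of the spectrum this cancellation is lost. For $\lambda$ near $-\sigma$, $\theta$ is near $\pi$, $c\approx -2$, and $\sin\theta=\Theta(\sqrt{\rho})$. So a transient of size $\Theta(\rho^{-1/2})$ genuinely occurs, and the statement as written is false: it allows any symmetric doubly stochastic $W$, with $\sigma_2$ the second singular value.

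\emph{Counterexample.} Take $N=2$ and let $W$ have diagonal entries $0.005$ and off-diagonal entries $0.995$. Its eigenvalues are $1$ and $-0.99$, so $\rho=0.01$. Use a single column $X^0=(1,-1)^\top$. The recursion gives $|e_8|\approx 7.99$, whereas $\sqrt{14}\,r^8\approx 2.95$.

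So no argument uniform over $|\lambda|\le\sigma$ can succeed. You need the extra hypothesis $W\succeq 0$; as far as I know, the cited source works under $0\preceq W\preceq I$. Alternatively, gossip with the lazy matrix $(I+W)/2$, which costs only a factor $\sqrt{2}$ in $\sqrt{\rho}$. The paper gives no proof of this lemma and simply cites that proposition, so your fallback of citing it inherits the same restriction.
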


\section{Omitted Proof Details for Distributed \texorpdfstring{$K$}{K}-armed Bandits}
\label{app:dmab}


\subsection{Omitted proof details for \pref{lem: cons}}
\label{app: cons}

For completeness, we restate \pref{lem: cons} and provide its proof as follows. 
\apprerror*
\begin{proof}
Fix an epoch index $\tau$ and define $X_{\tau}^b \triangleq \big[\bz_{\tau}^b(1)^{\top}, \bz_{\tau}^b(2)^{\top}, \ldots, \bz_{\tau}^b(N)^{\top}\big] \in \mathbb{R}^{N \times K}$ for any integer $b \geq -1$. Let $\calT_{\tau}\triangleq\{(\tau-1)B+1, (\tau-1)B+2,\dots, \tau B\}$ be the round index in epoch $\tau$.
Recall that for each $i \in V$,
\begin{equation*}
    \bz_{\tau}^{-1}(i) = \bz_{\tau}^0(i) = \sum_{t \in \mathcal{T}_{\tau-1}}\bellhat_{t}(i)~,
\end{equation*}
which means that
\begin{equation}
\label{eq:X0}
    X_\tau^{-1} = X_\tau^{0} = \left[ \sum_{t \in \mathcal{T}_{\tau-1}}\bellhat_{t}(1)^\top, \ldots, \sum_{t \in \mathcal{T}_{\tau-1}}\bellhat_{t}(N)^\top\right]\;.
\end{equation}
According to the dynamic of $\bz_{\tau}^b(i)$ defined in \pref{alg: black-box}, it is straightforward to verify that for all $b\in\{0,1,\dots, B-1\}$,
\begin{equation}
\label{eq: matrix_gossip}
  X_\tau^{b+1}=(1+\kappa) W X_\tau^b-\kappa X_\tau^{b-1}.
\end{equation}
In order to bound $\|\barbz_\tau - \bz_\tau^B(i)\|_2$, we define $\barX_\tau \triangleq \left[\barbz_{\tau}^{\top},\ldots,\barbz_{\tau}^{\top}\right]$. When $\tau=1$, since $\barbz_1 = \bz_1^B(i)=\bzero$, the conclusion holds trivially. When $\tau\geq 2$, by definition of $X_\tau^B$ and $\overline{X}_\tau$, we know that $\left\|\bz_{\tau}^B(i)-\barbz_{\tau}\right\|_2 \le \left\|X_\tau^B-\barX_\tau\right\|_F$. Then, according to \citet[Proposition~1]{ye2023multi} (included in \pref{app: aux} as \pref{lem: acc_gossip_ye}), we have 
\begin{equation}
\label{ineq: goss_z}
    \left\|X_\tau^B-\barX_\tau\right\|_F \leq \sqrt{14}\left(1-c \sqrt{1-\sigma_2(W)}\right)^B\left\|X_\tau^0-\barX_\tau\right\|_F\;,
\end{equation}
where $c=1-1/\sqrt{2}$. Further plugging in the definition of $B$ in \pref{eqn: block} leads to the following
\begin{align}
\label{ineq: goss_alge}
   &\sqrt{14}\left(1-c \sqrt{1-\sigma_2(W)}\right)^B\left\|X_\tau^0-\barX_\tau\right\|_F \notag\\
&\leq
    \sqrt{14}\left(1-c \sqrt{1-\sigma_2(W)}\right)^{\frac{\ln(K^6T^6 \sqrt{14 N})}{c \sqrt{1-\sigma_2(W)}}}\left\|X_\tau^0-\barX_\tau\right\|_F
\nonumber
\\&\le
    \sqrt{14}\left(1-c \sqrt{1-\sigma_2(W)}\right)^{\frac{\ln (K^6T^6 \sqrt{14 N})}{-\ln(1-c \sqrt{1-\sigma_2(W)})}}\left\|X_\tau^0-\barX_\tau\right\|_F
\nonumber
\\& =\frac{\left\|X_\tau^0-\barX_\tau\right\|_F}{T^6 K^6 \sqrt{N}}\;,
\end{align}
where the first inequality is because of the definition of $B$ and the second inequality is due to $-\ln x \geq 1-x$ for any $x>0$.

Combining \pref{ineq: goss_z} and \pref{ineq: goss_alge}, for any $i \in V$ and $\tau \geq 2$, we have
\begin{align}
     \left\|\bz_{\tau}^B(i)-\barbz_{\tau}\right\|_2 \nonumber
&\le
    \frac{\left\|X_\tau^0-\barX_\tau\right\|_F}{T^6 K^6  \sqrt{N}} 
\\&\le
    \frac{\left\|X_\tau^0\right\|_F+\|\barX_\tau\|_F}{T^6 K^6 \sqrt{N}}  \tag{triangle inequality}
\\&\le
    \frac{2 \sqrt{\sum_{i=1}^N \sum_{k=1}^K \left(\sum_{t \in \mathcal{T}_{\tau-1}}\ellhat_t(i,k)\right)^2}}{T^6 K^6 \sqrt{N}}\;,
    \label{eq: gossip_inequality}
\end{align}
where the last inequality uses \pref{eq:X0} and the fact that $\|\barX_\tau\|_F \le \|X_\tau^0\|_F$ due to Jensen's inequality.

To further bound the right-hand side of \pref{eq: gossip_inequality}, since $\ell_t(i,k)\in[0,1]$ and $p_{\tau}(i,k)\geq 1/(KT)$ for all $t \in [T]$, $i \in V$, $k \in [K]$, and $\tau\geq 1$, and $\widehat \bell_{t}(i)$ has at most one nonzero coordinate, we know that that $\|\bell_t (i) \|_2\leq KT$. Therefore, by triangular inequality,
\begin{align}\label{eqn:norm_sum}
    \left\|\sum_{t \in \mathcal{T}_{\tau}}\bellhat_t(i)\right\|_2 \leq \sum_{t \in \mathcal{T}_{\tau}} \left\|\bellhat_t(i)\right\|_2  \leq |\mathcal{T}_{\tau}| KT \leq BKT,
\end{align}
where $B$ is an upper bound on the block length (i.e., $\left|\mathcal{T}_\tau\right| \leq B$). Substituting \pref{eqn:norm_sum} into \pref{eq: gossip_inequality} yields the final conclusion:
\begin{equation*}
    \left\|\bz_\tau^B(i)-\barbz_\tau\right\|_2 \leq \frac{2 B}{T^5 K^5}.
\end{equation*}
\end{proof}

\subsection{Omitted Proof Details for \pref{lem:delay_reduction}}
\label{app: delay_reduction}

In this subsection, we present the proof for \pref{lem:delay_reduction}, which formalizes the connection between the performance of the subroutine $\mathcal{A}$ used in \pref{alg: black-box} and our global regret objectives. The result shows that a regret guarantee for $\mathcal{A}$ on the given loss vectors implies a corresponding bound on the target regret, subject only to a small additive constant.

\delayreduction*

\begin{proof}
~Recall that \pref{alg: black-box} uses
$\bp_\tau(i)=(1-\alpha)\bp'_\tau(i)+\alpha\bq$ where $\bq=\frac{1}{K}\bm{1}$, $\alpha = \frac{1}{T}$, and $\bp'_\tau(i)$ is the distribution over $\Delta(K)$ received by \alg. Let \[k \triangleq \argmax_{k' \in [K]} \E\left[\sum_{\tau=1}^{T/B}\left\langle \bp_\tau(i)-\be_{k'},  \sum_{t \in \mathcal{T}_{\tau}} \bellbar_t \right\rangle\right]\;.\] Then, we decompose the regret as follows:
\begin{align}
    \Reg_T(i)
&=
    \E\left[\sum_{\tau=1}^{T/B}\left\langle \bp_\tau(i)-\be_k,  \sum_{t \in \mathcal{T}_{\tau}} \bellbar_t \right\rangle\right]
\notag
\\&=
    \E\left[\sum_{\tau=1}^{T/B}\left\langle \bp_\tau(i)-\be_k,\barbz_{\tau+1}\right\rangle\right] \tag{by definition of $\barbz_{\tau+1}$ in \pref{eqn:barz}}
\\&=
    \E\left[\sum_{\tau=1}^{T/B}\left\langle \left(1-\alpha\right)\bp'_\tau(i)+\alpha\bq-\be_k,\bz_{\tau+1}^{B}(i)+\barbz_{\tau+1}-\bz_{\tau+1}^{B}(i)\right\rangle\right]
\tag{by definition of $\bp_\tau(i)$}
\\&=
    (1-\alpha)\E\left[\sum_{\tau=1}^{T/B}\left\langle \bp'_\tau(i)-\be_k, \bz_{\tau+1}^{B}(i)\right\rangle\right]
\nonumber
\\&\quad
    + (1-\alpha)\E\left[\sum_{\tau=1}^{T/B}\left\langle \bp'_\tau(i)-\be_k, \barbz_{\tau+1}-\bz_{\tau+1}^{B}(i)\right\rangle\right]
\notag
\\&\quad
    + \alpha\,\E\left[\sum_{\tau=1}^{T/B}\left\langle\bq-\be_k,\barbz_{\tau+1}\right\rangle\right]
\notag
\\&\le
    \Regdel    \tag{using \pref{eq:delayed_guarantee}}
\notag
\\&\quad
    + (1-\alpha)\E\left[\sum_{\tau=1}^{T/B}\left\langle \bp'_\tau(i)-\be_k, \barbz_{\tau+1}-\bz_{\tau+1}^{B}(i)\right\rangle\right]
\label{eq:second-term}
\\&\quad
    + \alpha\,\E\left[\sum_{\tau=1}^{T/B}\left\langle \bq- \be_k,\barbz_{\tau+1}\right\rangle\right]~.
\label{eq:third-term}
\end{align}

We now bound \pref{eq:second-term} and \pref{eq:third-term} respectively.
\paragraph{Bounding~\pref{eq:second-term}.}
By Cauchy-Schwarz inequality, for each block $\tau\geq 1$,
\[
\left\langle \bp'_\tau(i)-\be_k,\barbz_{\tau+1}-\bz_{\tau+1}^{B}(i)\right\rangle
\le \big\|\bp'_\tau(i)-\be_k\big\|_2\cdot
\big\|\barbz_{\tau+1}-\bz_{\tau+1}^{B}(i)\big\|_2~.
\]
Since $\bp'_\tau(i)\in\Delta(K)$, $\|\bp'_\tau(i)-\be_k\|_2 \le \sqrt{2}$.
Moreover, by \pref{lem: cons}, we know that the second term is bounded as follows:
\[
\left\|\barbz_{\tau+1}-\bz_{\tau+1}^{B}(i)\right\|_2 \le \frac{2B}{T^5K^5}.
\]
Therefore,
\[
\left\langle \bp'_\tau(i)-\be_k, \barbz_{\tau+1}-\bz_{\tau+1}^{B}(i)\right\rangle
\le
    \frac{2\sqrt{2} B}{T^5K^5}~.
\]
Taking summation over $\tau=1,\dots,T/B$ yields
\begin{equation}\label{eq:disc_term_bound}
\left(1-\alpha\right)\E\left[\sum_{\tau=1}^{T/B}\left\langle \bp'_\tau(i)-\be_k,\barbz_{\tau+1}-\bz_{\tau+1}^{B}(i)\right\rangle\right]
\le \frac{T}{B}\, \frac{2\sqrt{2}B}{T^5K^5}
= \frac{2\sqrt{2}}{T^4K^5} \leq 3,
\end{equation}
where the last inequality uses the fact that $T\geq 1$ and $K\geq 1$.

\paragraph{Bounding~\pref{eq:third-term}.} Since $\ell_t(i,k)\in[0,1]$ for all $i\in[N]$, $t\in[T]$, and $k\in[K]$, we know that each coordinate of $\barbz_{\tau+1}$ is non-negative. Therefore, $
\left\langle \bq-\be_k,\barbz_{\tau+1}\right\rangle
\le \left\langle \bq,\barbz_{\tau+1}\right\rangle,
$
which leads to
\begin{align*}
\E\left[\sum_{\tau=1}^{T/B} \big\langle \bq-\be_k,\barbz_{\tau+1}\big\rangle \right]
&\le \sum_{\tau=1}^{T/B} \E\big[\left\langle \bq,\barbz_{\tau+1}\right\rangle\big] =\sum_{\tau=1}^{T/B} \frac{1}{K}\sum_{k=1}^K \E\big[\barz_{\tau+1}(k)\big].
\end{align*}
Since $\E\left[\ellhat_t(i,k)\right] = \ell_t(i,k)$ via a direct calculation, we know that
\[
\E \big[\barz_{\tau+1}(k)\big]
=\frac{1}{N}\sum_{i=1}^N\sum_{t\in\mathcal T_\tau}\E\Big[\ellhat_t(i,k)\Big]
\le \frac{1}{N}\sum_{i=1}^N\sum_{t\in\mathcal T_\tau} 1
= B
\]
and hence $\E\big[\langle q,\barbz_{\tau+1}\rangle\big]\le B$. This implies that
\begin{equation}\label{eq:smoothing_term_bound}
\alpha\,\E\left[\sum_{\tau=1}^{T/B}\left\langle \bq-\be_k, \barbz_{\tau+1}\right\rangle\right]
\le \alpha\,\sum_{\tau=1}^{T/B} B
=\alpha\,T
=1
\end{equation}
where we used $\alpha=\frac{1}{T}$ in the last equality. Finally, plugging in \pref{eq:disc_term_bound} and \pref{eq:smoothing_term_bound}
into the regret decomposition proves the statement.
\end{proof}

\subsection{Omitted Proof Details for~\pref{lem: delayftrlmab}}
\label{app: delayftrlmab}
In this section, we present the proof for \pref{lem: delayftrlmab}, which provides the concrete regret guarantee for \pref{alg: bold} with base OLO algorithm $\calB$ set to \pref{alg: ftrl}, using the entropy regularizer and $\eta$ defined in \pref{thm:mainmab}.

\delayftrlmab*
\begin{proof}
~Let $M\triangleq T/B$ denote the number of blocks and define the parity sets
\[
\mathcal{P}_0\triangleq\{\tau\in[M]: \tau\ \text{is even}\}~,
\qquad
\mathcal{P}_1\triangleq\{\tau\in[M]: \tau\ \text{is odd}\}.
\]
Let $m_r = |\mathcal{P}_r|$ for $r\in\{0,1\}$, i.e.,
\[
m_0=\left\lfloor\frac{M}{2}\right\rfloor~,
\qquad
m_1=\left\lceil\frac{M}{2}\right\rceil.
\]
Define the local-to-global index maps as follows:
\begin{equation}
\label{eqn: pi}
\pi_0(s)\triangleq2s-1,\quad s=1,\dots,m_0
\qquad\text{and}\qquad
\pi_1(s)\triangleq2s,\quad s=1,\dots,m_1
\end{equation}
so that $\mathcal{P}_r=\{\pi_r(s):s\in[m_r]\}$ for each $r\in\{0,1\}$.
%
Fix an agent $i \in V$ and $k\in[K]$, we decompose the regret as follows:
\begin{align*}
&\mathbb{E}\left[\sum_{\tau=1}^{M}\Big\langle \bp'_{\tau}(i)-\be_k, \bz_{\tau+1}^{B}(i)\Big\rangle\right]\notag\\
&=\underbrace{\mathbb{E}\left[\sum_{s=1}^{m_0}\left\langle \bp'_{\pi_0(s)}(i)-\be_k, \bz_{\pi_0(s)+1}^B(i)\right\rangle\right]}_{\triangleq \Reg_{\mathcal{P}_0}(i)}
+\underbrace{\mathbb{E} \left[\sum_{s=1}^{m_1}\left\langle \bp'_{\pi_1(s)}(i)-\be_k, \bz_{\pi_1(s)+1}^B(i)\right\rangle\right]}_{\triangleq \Reg_{\mathcal{P}_1}(i)}.
\end{align*}
We now analyze $\Reg_{\mathcal{P}_1}(i)$, the analysis for $\Reg_{\mathcal{P}_0}(i)$ is analogous. 

According to the update rule of $\bq_t$ in \pref{alg: ftrl}, we know that $\bp'_{\pi_1(s)}(i)$ is computed as follows:
\begin{align}\label{eqn:p_prime_ftrl}
\bp'_{\pi_1(s)}(i) &= \argmin_{\bq \in \Delta(K)} \sum_{s'=1}^{s-1} \left \langle \bz_{\pi_1(s')+1}^B(i), \bq \right \rangle + \frac{1}{\eta} \sum_{k=1}^K q(k) \log \left(q(k)\right) 
\end{align}
To analyze $\Reg_{\mathcal{P}_1}(i)$, we recall the definition of $\barbz_\tau$ defined in~\pref{eqn:barz}
\begin{align*}
\barbz_{\pi_1(s)+1} = \frac{1}{N}\sum_{i=1}^N \sum_{t\in\mathcal{T}_{\pi_1(s)}}\bellhat_{t}(i)
\end{align*}
and define $\barbp_{\pi_1(s)}$ as follows
\begin{align}\label{eqn:p_bar_ftrl}
\barbp_{\pi_1(s)} \triangleq \argmin_{\bq \in \Delta(K)} \sum_{s'=1}^{s-1} \left \langle \barbz_{\pi_1(s')+1},\bq \right \rangle + \frac{1}{\eta} \sum_{k=1}^K q(k) \log \left(q(k)\right),
\end{align}
which is the strategy output by FTRL with loss vectors $\{\barbz_{\pi_1(s')+1}\}_{s'\in[s-1]}$.
Now we decompose the regret into the following three terms and bound each term separately.
\begin{align}
    \Reg_{\mathcal{P}_1}(i)
&=
    \E \left[\sum_{s=1}^{m_1}\inner{\bp'_{\pi_1(s)}(i) - \be_k, \bz_{\pi_1(s)+1}^B(i)}\right]
\notag
\\&= \underbrace{\E \left[\sum_{s=1}^{m_1} \left\langle \bp'_{\pi_1(s)}(i) - \barbp_{\pi_1(s)}, \barbz_{\pi_1(s)+1} \right\rangle\right]}_{\clubsuit}
\notag
\\&\quad
    + \underbrace{\E \left[\sum_{s=1}^{m_1} \left\langle \bp'_{\pi_1(s)}(i) - \be_k,\bz_{\pi_1(s)+1}^B(i) - \barbz_{\pi_1(s)+1} \right\rangle\right]}_{\heartsuit}
\nonumber
\\&
    \quad + \underbrace{\E \left[\sum_{s=1}^{m_1} \left\langle \barbp_{\pi_1(s)} - \be_k, \barbz_{\pi_1(s)+1} \right\rangle\right]}_{\spadesuit}~.
\label{eq:regret_decomp_P1}
\end{align}

\paragraph{Bounding the term $\clubsuit$.}
By Cauchy-Schwarz inequality, for each $s\in[m_1]$, we have
\[
    \left\langle \bp'_{\pi_1(s)}(i) - \barbp_{\pi_1(s)},\barbz_{\pi_1(s)+1} \right\rangle
\le
    \| \bp'_{\pi_1(s)}(i) - \barbp_{\pi_1(s)} \|_2 \cdot  \|\barbz_{\pi_1(s)+1} \|_2~.
\]
Since $\bp'_{\pi_1(s)}(i)$ and $\barbp_{\pi_1(s)}$ follow the update rule of \pref{eqn:p_prime_ftrl} and \pref{eqn:p_bar_ftrl} and using $\psi(\bq)=\sum_{k=1}^Kq(k)\log(q(k))$ is $1$-strongly convex w.r.t. $\ell_2$-norm, \pref{lem: stablity} yields that
\[
\| \bp'_{\pi_1(s)}(i) - \barbp_{\pi_1(s)} \|_2 \le \frac{2\eta B}{T^4K^5}~.
\]
Moreover, since each $\bellhat_t(i)$ has at most one non-zero coordinate and $p_{\pi_1(s)}(i,j)\ge \frac{\alpha}{K} = 1/(KT)$ for all $j\in [K]$, we know that $\|\bellhat_t(i)\|_2 \le KT$, and hence
\[
    \|\barbz_{\pi_1(s)+1}\|_2
=
    \left\|\frac{1}{N}\sum_{t\in\mathcal T_{\pi_1(s)}}\sum_{i=1}^N \bellhat_t(i)\right\|_2
\le
    \frac{1}{N}\sum_{t\in\mathcal T_{\pi_1(s)}}\sum_{i=1}^N \|\bellhat_t(i)\|_2
\le
    BKT~.
\]
Therefore,
\begin{align}
    \clubsuit
\le
    \sum_{s=1}^{m_1}\E\Big[ \|\bp'_{\pi_1(s)}(i) - \barbp_{\pi_1(s)}\|_2 \cdot \|\barbz_{\pi_1(s)+1} \|_2 \Big]
\le
    \sum_{s=1}^{m_1} \frac{2\eta B}{T^4K^5}\cdot BKT
=
    \frac{2\eta\,m_1\,B^2}{T^3 K^4}
\le
    2
\label{ineq: clubsuit}
\end{align}
where the last inequality uses $m_1\le T/B$, $B\le T$ and $\eta\leq 1$.

\paragraph{Bounding the term $\heartsuit$.}
By Cauchy-Schwarz inequality, for each $s\in[m_1]$, we have
\[
    \left\langle \bp'_{\pi_1(s)}(i) - \be_k, \bz_{\pi_1(s)+1}^B(i) - \barbz_{\pi_1(s)+1}\right\rangle
\le
    \big\|\bp'_{\pi_1(s)}(i) - \be_k\big\|_2 \cdot \big\|\bz_{\pi_1(s)+1}^B(i) - \barbz_{\pi_1(s)+1} \big\|_2~.
\]
Since $\bp'_{\pi_1(s)}(i)\in\Delta(K)$ and $\be_k$ is a vertex of the simplex, we have $\|\bp'_{\pi_1(s)}(i) - \be_k \|_2 \le \sqrt{2}$.
Moreover, according to \pref{lem: cons}, we know that
\[
    \big\|\bz_{\pi_1(s)+1}^B(i) - \barbz_{\pi_1(s)+1}\big\|_2
\le
    \frac{2B}{T^5K^5}~.
\]
Therefore,
\begin{align}
    \heartsuit
=
    \E\left[\sum_{s=1}^{m_1} \left\langle \bp'_{\pi_1(s)}(i) - \be_k , \bz_{\pi_1(s)+1}^B(i) - \barbz_{\pi_1(s)+1} \right\rangle\right]
\le
    \sum_{s=1}^{m_1} \frac{2\sqrt{2} B}{T^5K^5}
\leq
    \frac{3m_1 B}{T^5 K^5}
\le 
    1,
\label{ineq: heartsuit}
\end{align}
where the last inequality uses $m_1\le T$ and $B\le T$.

\paragraph{Bounding the term $\spadesuit$.} 
Recall that the definitions of $\barbz_{\pi_1(s)+1}$ and $\ellhat_t(i,j)$ are as follows
\begin{equation*}
    \barbz_{\pi_1(s)+1} = \frac{1}{N}\sum_{t\in\mathcal T_{\pi_1(s)}} \sum_{i=1}^N \bellhat_t(i),~~~\ellhat_t(i,j) = \frac{\ell_t(i,j)}{p_{\pi_1(s)}(i,j)}\ind\{A_t(i)=j\}~. 
\end{equation*}

Since $\barbp_{\pi_1(s)}$ follows the update rule shown in \pref{eqn:p_bar_ftrl}, using the standard analysis of FTRL (e.g. Theorem 5.2 in \citet{hazan2016introduction}), we obtain
\begin{align}  
    \spadesuit
&=
    \mathbb{E}\left[\sum_{s=1}^{m_1}\left\langle \barbp_{\pi_1(s)} - \be_k, \barbz_{\pi_1(s)+1} \right\rangle\right]
\notag
\\&\le
    \frac{\log K}{\eta } + 2\eta\sum_{s=1}^{m_1} \sum_{j=1}^K \E \left[\barp_{\pi_1(s)}(j) \left(\frac{1}{N}\sum_{t\in \mathcal{T}_{\pi_1(s)}} \sum_{i=1}^N \ellhat_t(i,j)\right)^2 \right] \label{eq: exp3_analysis}
\end{align}

Let $\calF_{\pi_1(s)}$ be the filtration of all random events observed up to the beginning of block $\pi_1(s) $. Since $\barp_{\pi_1(s)}(j)$ is $\mathcal F_{\pi_1(s)}$-measurable, we know that
\begin{align}
    \E\Big[\barp_{\pi_1(s)}(j) \barz^2_{\pi_1(s)+1}(j)\Big]
&=
    \E\Big[ \E\big[\barp_{\pi_1(s)}(j) \barz^2_{\pi_1(s)+1}(j) \mid \mathcal F_{\pi_1(s)}\big]\Big]
\nonumber
\\&=
    \E\Big[ \barp^2_{\pi_1(s)}(j)\E\big[\barz_{\pi_1(s)+1}(j) \mid \mathcal F_{\pi_1(s)}\big]\Big]~.
\label{eq:tower_step}
\end{align}
Conditioned on $\mathcal F_{\pi_1(s)}$, for all $ t\in\mathcal{T}_{\pi_1(s)}$, direct calculation shows that
\[
    \E\left[\ellhat_t(i,j) \mid \mathcal F_{\pi_1(s)}\right] = \ell_t(i,j)
\qquad\text{and}\qquad
    \E\left[\ellhat_t(i,j)^2 \mid \mathcal F_{\pi_1(s)}\right]
=
    \frac{\ell^2_t(i,j)}{p_{\pi_1(s)}(i,j)}~.
\]
Since $A_t(i)$ are independently drawn across $t\in \mathcal{T}_{\pi_1(s)}$ and $i \in V$ conditioned on $\mathcal F_{\pi_1(s)}$, direct calculation shows that 
\begin{align}
    \E&\left[\left.\left(\sum_{t\in\mathcal T_{\pi_1(s)}}\sum_{i=1}^N \widehat\ell_t(i,j)\right)^2 \,\right\vert\, \mathcal F_{\pi_1(s)}\right]
    \notag\\
    &=\left(\E\left[\left.\sum_{t\in\mathcal T_{\pi_1(s)}}\sum_{i=1}^N \widehat\ell_t(i,j) \,\right\vert\, \mathcal F_{\pi_1(s)}\right]\right)^2 + \E\left[\left.\left(\sum_{t\in\mathcal T_{\pi_1(s)}}\sum_{i=1}^N (\widehat\ell_t(i,j)-\ell_t(i,j))\right)^2 \,\right\vert\, \mathcal F_{\pi_1(s)}\right]
    \notag\\ 
    &=
     \left(\sum_{t\in\mathcal T_{\pi_1(s)}}\sum_{i=1}^N \ell_t(i,j)\right)^2 + \sum_{t\in\mathcal T_{\pi_1(s)}}\sum_{i=1}^N \E\left[\left(\ellhat_t(i,j) - \ell_t(i,j)\right)^2 \mid \mathcal F_{\pi_1(s)} \right]
    \notag
\\&=
     \left(\sum_{t\in\mathcal T_{\pi_1(s)}}\sum_{i=1}^N \ell_t(i,j)\right)^2 + \sum_{t\in\mathcal T_{\pi_1(s)}}\sum_{i=1}^N \left(\frac{\ell^2_t(i,j)}{p_{\pi_1(s)}(i,j)}-\ell^2_t(i,j)\right) \label{eq: variance_loss}
\\&\le
    (NB)^2 + \sum_{t\in\mathcal T_{\pi_1(s)}}\sum_{i=1}^N \frac{1}{p_{\pi_1(s)}(i,j)}
    \notag
\end{align}
where the second equality uses $\E\left[\ellhat_t(i,j) \mid \mathcal F_{\pi_1(s)}\right] = \ell_t(i,j)$ and the conditional independence of $A_t(i)$ among $t\in \calT_{\pi_1(s)}$ and $i\in V$, and the last inequality holds because $\ell_t(i,j)\in[0,1]$. Hence,
\begin{align}
    \E\left[z_{\pi_1(s)+1}(j)^2 \mid \mathcal F_{\pi_1(s)}\right]
&=
    \frac{1}{N^2}\,\E\left[\left.\left(\sum_{t\in\mathcal T_{\pi_1(s)}}\sum_{i=1}^N \widehat\ell_t(i,j)\right)^2 \,\right\vert\, \mathcal F_{\pi_1(s)}\right]
\nonumber
\\&\le
    B^2 + \frac{1}{N^2}\sum_{t\in\mathcal T_{\pi_1(s)}}\sum_{i=1}^N\frac{1}{p_{\pi_1(s)}(i,j)}~.
\label{eq:cond_second_moment}
\end{align}
Combining \pref{eq:tower_step} and \pref{eq:cond_second_moment} yields
\begin{align}
    \spadesuit
&\le
    \mathbb{E}\left[\sum_{s=1}^{m_1}\left\langle \barbp_{\pi_1(s)} - \be_k, \barbz_{\pi_1(s)+1} \right\rangle\right]
\notag
\\&\le
    \frac{\log K}{\eta} + 2\eta\sum_{s=1}^{m_1} \sum_{j=1}^K \barp_{\pi_1(s)}(j)\left(B^2 + \frac{1}{N^2}\sum_{t\in\mathcal T_{\pi_1(s)}}\sum_{i=1}^N\frac{1}{p_{\pi_1(s)}(i,j)} \right)
\notag\\
&\le
    \frac{\log K}{\eta} + 2\eta \left(m_1 B^2 + \frac{3KBm_1}{N}\right)
\label{ineq: spadesuit}
\end{align}
where the last inequality uses $\sum_{j=1}^K \barp_s(j)=1$ and the fact that for all $s\in[m_1],\ i\in[N],\ j\in[K]$, with $\alpha=\frac{1}{T}$,
\begin{equation}
\label{eq: ratio}
    \frac{\barp_{\pi_1(s)}(j)}{p_{\pi_1(s)}(i,j)} = \frac{\barp_{\pi_1(s)}(j)}{(1-\alpha)p'_{\pi_1(s)}(i,j) + \alpha/K} 
\le
    3,
\end{equation}
where the equality is by the definition of $p_{\pi_1(s)}(i,j)$, and the inequality is due to \pref{lem: stablity}.

Substituting \pref{ineq: spadesuit}, \pref{ineq: clubsuit}, and \pref{ineq: heartsuit}
into the decomposition \pref{eq:regret_decomp_P1}, we obtain
\begin{align*}
    \Reg_{\mathcal{P}_1(i)} \leq \frac{\log K}{\eta }  + 2\eta \left(m_1B^2 + \frac{3KBm_1}{N}\right) + 3.
\end{align*}
The analysis for $\Reg_{\mathcal{P}_0}(i)$ is identical. Summing the bounds over the two parity
subsequences and using $m_1+m_2=M = T/B$ yields
\begin{align}
    \mathbb{E}\left[\sum_{\tau=1}^{M}\Big\langle \bp'_{\tau}(i) - \be_k, \bz_{\tau+1}^{B}(i)\Big\rangle\right]
&\le
    \frac{\log K}{\eta} + 2\eta\left(TB+\frac{3KT}{N}\right) + 6
\notag
\\&\le
    2\sqrt{2\log K\left(B+\frac{3K}{N}\right)T} + 6
\label{eq:final_bound_eta_opt}
\end{align}
where the second inequality follows by choosing
\[
\eta=\sqrt{\frac{\log K}{2(TB+\frac{3KT}{N})}}.
\]
\end{proof}



\subsection{Omitted Proof Details for~\pref{lem: stablity}}
\label{omitlem: stablity}
\begin{rlemma}
\label{lem: stablity}
Let $\alg$ be \pref{alg: bold} and $\mathcal{B}$ be an instance of \pref{alg: ftrl} with a regularizer $\psi$ that is $1$-strongly convex w.r.t.\ the $\ell_2$-norm. Suppose that each agent uses an instance of \pref{alg: black-box} with $\kappa$ and $B$ defined in \pref{eqn: block}. Define $\barbq_s^{(1)}$ and $\bq_s^{(1)}(i)$ for all $i\in[N]$ as follows
\begin{align*}
    \bq_s^{(1)}(i)
&=
    \argmin_{\bq \in \Delta(K)} \left\{\sum_{s'=1}^{s-1} \left \langle \bz_{\pi_1(s')+1}^B(i), \bq \right \rangle + \frac{1}{\eta} \psi(\bq)\right\}
\\
    \barbq_s^{(1)}
&=
    \argmin_{\bq \in \Delta(K)} \left\{\sum_{s'=1}^{s-1} \left \langle \barbz_{\pi_1(s')+1}, \bq \right \rangle + \frac{1}{\eta} \psi(\bq)\right\}
\end{align*}
where $\barbz_\tau$ is defined in~\pref{eqn:barz} and $\pi_1$ is defined in~\pref{eqn: pi}. Then we have
\begin{equation}
    \|\barbq_{s}^{(1)} - \bq_{s}^{(1)}(i)\|_2 \le \frac{2\eta B}{T^4K^5} \qquad \text{for all $i \in [N]$.}
\end{equation}
and 
\begin{equation*}
        \frac{\barq_{s}^{(1)}(k)}{(1-\alpha)q_{s}^{(1)}(i,k)+\alpha/K} \le 3 \quad \text{for all $i \in [N]$ and $k \in [K]$}
\end{equation*}
where $\alpha=\frac{1}{T}\leq \frac{1}{2}$.
\end{rlemma}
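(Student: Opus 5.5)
Both FTRL sequences are argmin maps of the same regularizer, evaluated at different cumulative loss vectors, so the first claim is a Lipschitz-stability statement and the second follows from it.

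\textbf{Step 1: rewrite as argmin maps.} Divide the objective by $1/\eta$. Then $\bq_s^{(1)}(i)=\bxi(\eta\ba)$ and $\barbq_s^{(1)}=\bxi(\eta\boldb)$, where $\bxi$ is the argmin map of \pref{lem:stab} with regularizer $\psi$, and
\[
\ba=\sum_{s'<s}\bz^B_{\pi_1(s')+1}(i),\qquad \boldb=\sum_{s'<s}\barbz_{\pi_1(s')+1}.
\]
Since $\psi$ is $1$-strongly convex, \pref{lem:stab} gives
\[
\|\barbq_s^{(1)}-\bq_s^{(1)}(i)\|_2\le \eta\|\ba-\boldb\|_2 .
\]
By the triangle inequality and \pref{lem: cons}, applied to at most $s-1\le T/B$ blocks, each with error at most $2B/(KT)^5$,
\[
\|\ba-\boldb\|_2\le \frac{T}{B}\cdot\frac{2B}{(KT)^5}=\frac{2}{T^4K^5}.
\]
This is in fact stronger than the stated $2\eta B/(T^4K^5)$, since $B\ge1$, so the first claim follows.

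\textbf{Step 2: the ratio bound.} Fix $k$ and write $q=q_s^{(1)}(i,k)$ and $\barq=\barq_s^{(1)}(k)$. Step 1 gives $|\barq-q|\le\epsilon:=2\eta B/(T^4K^5)$. The plan is to split into two cases according to whether $q$ is small relative to $\alpha/K$.

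Case $q\ge \alpha/K$: then
\[
\barq\le q+\epsilon\le q+\frac{\alpha}{K},
\]
using $\epsilon\le \alpha/K=1/(KT)$. This needs $\eta B\le T^3K^4/2$, which holds because $\eta\le1$ and $B\le T$. Since $\alpha\le 1/2$, the denominator $(1-\alpha)q+\alpha/K$ is at least $\tfrac12 q+\tfrac{\alpha}{K}$. Hence the ratio is at most
\[
\frac{q+\alpha/K}{\tfrac12 q+\alpha/K}\le 2 .
\]

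Case $q<\alpha/K$: then $\barq\le 2\alpha/K$, while the denominator is at least $\alpha/K$, so the ratio is at most $2$.

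In both cases the ratio is at most $2\le 3$.

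\textbf{Main obstacle.} The only delicate point is that the additive perturbation $\epsilon$ must be dominated by the floor $\alpha/K=1/(KT)$ created by uniform exploration. This is exactly why the gossip accuracy in \pref{lem: cons} is set polynomially smaller than $1/(KT)$. The check needs $\eta\le 1$, which holds for the choice of $\eta$ in \pref{thm:mainmab} (and likewise for the other tunings). It also needs $B\le T$ and that the number of accumulated blocks is at most $T/B$. Once this comparison is verified, the rest is bookkeeping.
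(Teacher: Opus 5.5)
Your proof is correct and follows the paper's argument. The first claim comes from Lipschitz stability of the argmin map (\pref{lem:stab}), the triangle inequality and \pref{lem: cons}; the ratio bound comes from an $\ell_\infty\le\ell_2$ perturbation dominated by the $\alpha/K$ exploration floor. Your tighter bookkeeping (counting only $T/B$ blocks, and using $\epsilon\le 1/(KT)$ instead of the paper's $2/(KT)$) gives the constant $2$ rather than $3$, and your case split is unnecessary because the first case's bound holds for every $q\ge 0$.
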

\begin{proof}
Since $\psi$ is $1$-strongly convex w.r.t. $\| \cdot \|_2$, we have
\begin{align*}
    \big\|\barbq_{s}^{(1)} - \bq_{s}^{(1)}(i)\big\|_2
&\le
    \eta \left\|\sum_{s'=1}^{s -1} \barbz_{\pi_1(s')+1} - \sum_{s'=1}^{s -1} \bz_{\pi_1(s')+1}^B(i)\right\|_2   \tag{\pref{lem:stab}}
\\&\le
    \eta\sum_{s'=1}^{s -1}\big\|\barbz_{\pi_1(s')+1} - \bz_{\pi_1(s')+1}^B(i)\big\|_2   \tag{triangle inequality}
\\&\le
    \frac{2\eta B}{T^4K^5}   \tag{\pref{lem: cons}}
\\&\le
    \frac{2}{KT}~.    \tag{$B \leq T$}
\end{align*}
To prove the second inequality, using $\|\cdot\|_{\infty} \leq\|\cdot\|_2$ we obtain  
\begin{align*}
    \Big| \barq_{s}^{(1)}(k) - {q}_{s}^{(1)}(i,k) \Big| \le \frac{2}{KT} \qquad \text{for all $k \in [K]$.}
\end{align*}
Now direct calculation shows that
\begin{align*}
     \frac{\barq_{s}^{(1)}(k)}{(1-\alpha){q}_{s}^{(1)}(i,k)+\alpha/K}
&\le
    \frac{{q}_{s}^{(1)}(i,k)+\frac{2}{KT}}{(1-\alpha){q}_{s}^{(1)}(i,k)+\alpha/K} 
\\&=
    1  + \frac{\alpha\,{q}_{s}^{(1)}(i,k) + \frac{2}{KT} - \alpha/K }{(1-\alpha){q}_{s}^{(1)}(i,k)+\alpha/K}
\\&\leq 1+\max\left\{\frac{\alpha}{1-\alpha}, \frac{2}{\alpha T}\right\}=    3
\end{align*}
where the last inequality uses the fact that $\alpha = 1/T\leq \frac{1}{2}$.
\end{proof}

\section{Omitted Details for Adaptive Bounds for Distributed \texorpdfstring{$K$}{K}-armed Bandits}
\subsection{Omitted Proof Details for \pref{thm:main_mab_sl}}
\label{app: Omitted Proof Details for Small-loss Bounds}

In this section, we present the proof for \pref{thm:main_mab_sl}, which provides the small-
loss guarantee. For completeness, we restate \pref{thm:main_mab_sl} and provide its proof as follows.

\tdelayftrlmabsl*

\begin{proof}
~We only show the steps that differ from the proof of \pref{thm:mainmab}, or more specifically \pref{lem:delay_reduction} and \pref{lem: delayftrlmab}. 
Note that the only algorithmic change compared to \pref{thm:mainmab} is to use \pref{alg: ftrl} with a different regularizer as the subroutine $\calB$. Since \pref{lem: cons} and \pref{lem:delay_reduction} is independent of the choice of $\calB$, the conclusion from \pref{lem: cons} and \pref{lem:delay_reduction} still hold. To analyze the three terms $\clubsuit$, $\heartsuit$, and $\spadesuit$ shown in the regret decomposition \pref{eq:regret_decomp_P1}, since $\psi$ is still $1$-strongly convex with respect to $\ell_2$-norm and $\bz_{\pi_1(s)+1}(i)$ still follows the accelerated gossip dynamic, we still have $\clubsuit\leq 2$ and $\heartsuit\leq 1$. Therefore, we have
\begin{align}
    \Reg_{\mathcal{P}_1}(i)
&\leq
    \underbrace{\E \left[\sum_{s=1}^{m_1} \left\langle \barbp_{\pi_1(s)} - \be_k, \barbz_{\pi_1(s)+1} \right\rangle\right]}_{\spadesuit} + 3~.
\label{eq:regret_decomp_P1_sl}
\end{align}
It remains to analyze $\spadesuit$ term $\E\left[\sum_{s=1}^{m_1}\left\langle \barbp_{\pi_1(s)} - \be_k, \barbz_{\pi_1(s)+1} \right\rangle\right]$.
Let $D_{\psi}(\cdot, \cdot)$ be the Bregman divergence based on $\psi$, and define $\wtbp_{\pi_1(s+1)}\triangleq \argmin _{\bq \in \R_{\geq 0}^d}\inner{\bq, \barbz_{\pi_1(s)+1}} + D_{\psi}(\bq, \barbp_{\pi_1(s)})$. 
Picking $\bp^{\star}=\left(1-\frac{1}{T}\right) \be_{k}+\frac{1}{T K} \mathbf{1}$ and using  Lemma~7.16 in \citet{orabona2025modernintroductiononlinelearning}, we obtain
\begin{align}
        \spadesuit
&=
    \mathbb{E}\left[\sum_{s=1}^{m_1}\left\langle \barbp_{\pi_1(s)} - \be_k, \barbz_{\pi_1(s)+1} \right\rangle\right] 
\notag
\\& = \mathbb{E}\left[\sum_{s=1}^{m_1}\left\langle \barbp_{\pi_1(s)} - \be_k + \bp^{\star}- \bp^{\star}, \barbz_{\pi_1(s)+1} \right\rangle\right]
\notag
\\ &\leq  \E \left[\sum_{s=1}^{m_1} \left\langle \bar \bp_{\pi_1(s)} - \bp^{\star}, \barbz_{\pi_1(s)+1}\right\rangle\right] + 2 \notag
   \\ & \leq  \psi(\bp^{\star}) -  \psi(\barbp_{\pi_1(1)})  +2
\notag
   \\& \quad + \sum_{s=1}^{m_1} \sum_{j=1}^K \left(\frac{\eta \gamma y^2_{\pi_1(s)}(j)}{\gamma y_{\pi_1(s)}(j) +\eta}\right)\E \left[ \barbz_{\pi_1(s)+1}^2 \right] \notag
\\ & \leq  \frac{\log K}{\eta} + \frac{K\log T}{\gamma } +2
\notag
   \\& \quad + \sum_{s=1}^{m_1} \sum_{j=1}^K \left(\frac{\eta \gamma y^2_{\pi_1(s)}(j)}{\gamma y_{\pi_1(s)}(j) +\eta}\right)\E \left[ \barbz_{\pi_1(s)+1}^2 \right]\label{eqn:decopsmallloss1}
\end{align}
where $\by_{\pi_1(s)}$ lies on the line segment between $\barbp_{\pi_1(s)}$ and $\wtbp_{\pi_1(s+1)}$, and the last inequality is because $\barbp_{\pi_1(1)}$ is the uniform distribution. According to the update rule of $\wtbp_{\pi_1(s+1)}$, we know $\wtbp_{\pi_1(s+1)}$ satisfies that for all $j\in[K]$,
\begin{equation*}
    \widetilde{p}_{\pi_1(s+1)}(j)+\log \widetilde{p}_{\pi_1(s+1)}(j)-\frac{\eta /\gamma }{\widetilde{p}_{\pi_1(s+1)}(j)}=\barp_{\pi_1(s)}(j)+\log \barp_{\pi_1(s)}(j)-\frac{\eta /\gamma}{\barp_{\pi_1(s)}(j)}-\eta \barz_{\pi_1(s)+1}(j),
\end{equation*}
Since $x \mapsto x+\log x- c / x$ with $c >0$ is strictly increasing on $(0, \infty), \barz_{\pi_1(s)+1}(j)\geq 0$ implies that for all $j\in[K]$,
\begin{equation*}
   \widetilde{p}_{\pi_1(s+1)}(j) \leq \barp_{\pi_1(s)}(j),
\end{equation*}
meaning that $\by_{\pi_1(s)}(j)\leq \barp_{\pi_1(s)}(j)$.
Plugging this inequality to \pref{eqn:decopsmallloss1} and using the fact that $\frac{x^2}{\gamma x+\eta}$ is increasing in $x$ for $x\in(0,1)$, we have that
\begin{align}
           \spadesuit & \leq  \frac{\log K}{\eta} + \frac{K\log T}{\gamma } +2  + \sum_{s=1}^{m_1} \sum_{j=1}^K \left(\frac{\eta \gamma \barp^2_{\pi_1(s)}(j)}{\gamma \barp_{\pi_1(s)}(j) +\eta}\right)\E \left[ \barbz_{\pi_1(s)+1}^2 \right] \notag
    \\& \leq  \frac{\log K}{\eta} + \frac{K\log T}{\gamma}   +2
   \\& \quad + \E \left
   [\frac{1}{N^2}\sum_{s=1}^{m_1} \sum_{j=1}^K \left(\frac{\eta \gamma \barp^2_{\pi_1(s)}(j)}{\gamma \barp_{\pi_1(s)}(j) +\eta}\right)\left(\left(\sum_{t\in\mathcal T_{\pi_1(s)}}\sum_{i=1}^N \ell_t(i,j)\right)^2 + \sum_{t\in\mathcal T_{\pi_1(s)}}\sum_{i=1}^N \left(\frac{\ell^2_t(i,j)}{p_{\pi_1(s)}(i,j)}\right)\right)\right] \tag{using \pref{eq: variance_loss}}
    \\& \leq  \frac{\log K}{\eta} + \frac{K\log T}{\gamma}   +2 + \E \left
   [\frac{B}{N}\sum_{s=1}^{m_1} \sum_{j=1}^K \eta  \barp_{\pi_1(s)}(j)\left(\sum_{t\in\mathcal T_{\pi_1(s)}}\sum_{i=1}^N \ell_t(i,j)\right)\right]  \notag
   \\&\quad +\E \left
   [\sum_{s=1}^{m_1} \sum_{j=1}^K \gamma \barp^2_{\pi_1(s)}(j)\left(\frac{1}{N^2}\sum_{t\in\mathcal T_{\pi_1(s)}}\sum_{i=1}^N\frac{\ell^2_t(i,j)}{p_{\pi_1(s)}(i,j)}\right)\right] \notag
    \\& \leq  \frac{\log K}{\eta} + \frac{K\log T}{\gamma}   + 2 + \E \left
   [\frac{B}{N}\sum_{s=1}^{m_1} \sum_{j=1}^K \eta  \barp_{\pi_1(s)}(j)\left(\sum_{t\in\mathcal T_{\pi_1(s)}}\sum_{i=1}^N\ell_t(i,j)\right)\right] \notag
   \\&\quad +3\E \left
   [\sum_{s=1}^{m_1} \sum_{j=1}^K \gamma \barp_{\pi_1(s)}(j)\left(\frac{1}{N^2}\sum_{t\in\mathcal T_{\pi_1(s)}}\sum_{i=1}^N\ell_t(i,j)\right) \right] \nonumber \\
   &\leq \frac{\log K}{\eta} + \frac{K\log T}{\gamma}   + 2 + \left(B\eta+\frac{3\gamma}{N}\right)\cdot \E \left
   [\sum_{s=1}^{m_1} \inner{\barp_{\pi_1(s)},\barbz_{\pi_1(s)+1}}\right], \label{eq: spade_sl_1}
\end{align}
where the fourth inequality holds by using \pref{eq: ratio} induced by \pref{lem: stablity}.
Rearranging \pref{eq: spade_sl_1} and using definition of $\barbz_{\pi_1(s)+1}$, we have
\begin{align*}       &(1-B\eta-\frac{3\gamma}{ N})\mathbb{E}\left[\sum_{s=1}^{m_1}\left\langle \barbp_{\pi_1(s)} -\be_k, \barbz_{\pi_1(s)+1} \right\rangle\right] \\& \qquad \qquad \qquad\leq \frac{\log K}{\eta} + \frac{K \log T}{\gamma} + 2 +  \left(B\eta + \frac{3\gamma}{N} \right) \sum_{s=1}^{m_1}\sum_{t\in \calT_{\pi_1(s)}}\ellbar_t(k).
\end{align*}
Picking $\eta \leq \frac{1}{4B}$ and $\gamma \leq \frac{N}{12}$ and rearranging the terms lead to
\begin{align*}       
\mathbb{E}\left[\sum_{s=1}^{m_1}\left\langle \barbp_{\pi_1(s)} -\be_k, \barbz_{\pi_1(s)+1} \right\rangle\right] \leq \frac{2\log K}{\eta} + \frac{2K \log T}{\gamma} + 4 +  \left(2B\eta + \frac{6\gamma}{N} \right) \sum_{s=1}^{m_1}\sum_{t\in \calT_{\pi_1(s)}}\ellbar_t(k).
\end{align*}

The analysis for $\Reg_{\mathcal{P}_0}(i)$ is identical. Summing the bounds over the two parity
subsequences, using $m_1+m_2=M = T/B$ and picking $\eta = \min \left\{\frac{1}{4B},\sqrt{\frac{\log K}{BL^{\star}}} \right\} $ and $\gamma = \min \left\{\frac{N}{12},\sqrt{\frac{KN\log T}{L^{\star}}} \right\} $, we obtain that
\begin{align*}
\mathbb{E}\left[\sum_{\tau=1}^{T/B}\left\langle \barbp_{\tau} -\be_k, \barbz_{\tau+1} \right\rangle\right] &\leq  \left(\frac{4\log K}{\eta} + \frac{4K \log T}{\gamma} + 8\right) +  2\left(B\eta + \frac{3\gamma}{N} \right) \sum_{t=1}^{T}\ellbar_t(k)
    \\& \leq \mathcal{O}\left(\sqrt{B L^{\star} \log K}+ \sqrt{\frac{K L^{\star} \log T}{N}}+  B \log K  +  \frac{K \log T}{ N}\right)~.
\end{align*}
Combining the bounds for $\heartsuit$ and $\clubsuit$, we have
\begin{align*}
    &\max_{i \in V}\max_{k \in [K]} \mathbb{E} \left[\sum_{\tau=1}^{T/B}\left\langle \bp'_{\tau}(i)-\be_k, \bz_{\tau+1}^B(i)\right\rangle\right] \\& \qquad \qquad \qquad\leq \mathcal{O}\left(\sqrt{B L^{\star} \log K}+ \sqrt{\frac{K L^{\star} \log T}{N}}+  B \log K  +  \frac{K \log T}{ N}\right).
\end{align*}
Combining \pref{lem:delay_reduction} with the above inequality finishes the proof.
\end{proof}

\subsection{Omitted Proof Details for \pref{thm:main_mab_bbobw}}

In this section, we present the proof for \pref{thm:main_mab_bbobw}, which provides the Best of Both Worlds guarantee. For completeness, we restate \pref{thm:main_mab_bbobw} and provide its proof as follows.

\tdelayftrlmabbobw*

\begin{proof}
We only show the steps that differ from the worst-case bound for distributed bandits.
Changing $\calB$ does not affect \pref{lem: cons} and \pref{lem:delay_reduction}.
Because $\psi_t(\bq)$ is $1$-strongly convex w.r.t. the $\ell_2$-norm, all the proof steps in \pref{lem: delayftrlmab} are the same , except $\spadesuit$ terms in $\Reg_{\mathcal{P}_0}$ and $\Reg_{\mathcal{P}_1}$.
We consider \begin{align*}
\barbp_{\pi_1(s)} \triangleq \argmin_{\bq \in \Delta(K)} \sum_{s'=1}^{s-1} \left \langle \wtbz_{\pi_1(s')+1},\bq \right \rangle + \psi_t(\bq)
\end{align*}
where 
\begin{equation}
    \widetilde{z}_{\pi_1(s')+1}(k) = \frac{1}{N}\sum_{i=1}^N \sum_{t\in\mathcal{T}_{\pi_1(s)}} \left(\widehat{\ell}_{t}(i,k) - \ell_{t}(i,k^{\star})\right)
\end{equation}
and 
\begin{equation*}
    \psi_t(\bq) = \frac{1}{\eta_{\pi_1(s)}}\sum_{k=1}^K q(k)\log q(k) - \frac{2}{\gamma_t} \sum_{k=1}^K \sqrt{q(k)}~.
\end{equation*}
Note that $\wtbz_{\pi_1(s')+1}(k) \geq -B$ for all $k \in [K]$ and  $s' \in [m_1]$. This centering does not change the argmin,
hence we also have $$\barbp_{\pi_1(s)+1}=\argmin_{\bq \in \Delta(K)} \sum_{s'=1}^{s-1} \left \langle \barbz_{\pi_1(s')+1},\bq \right \rangle + \psi_t(\bq).$$ We now analyze $\Reg_{\mathcal{P}_1}(i)$, and the analysis for $\Reg_{\mathcal{P}_0}(i)$ is analogous.  Let $D_{\psi}(\cdot, \cdot)$ be the Bregman divergence based on $\psi$, and define $\wtbp_{\pi_1(s+1)}\triangleq \argmin_{\bq \in \R_{\geq 0}^d}\inner{\bq, \wtbz_{\pi_1(s)+1}} + D_{\psi}\left(\bq, \barbp_{\pi_1(s)}\right)$. 
Using  Lemma~7.16 in \citet{orabona2025modernintroductiononlinelearning}, we obtain
\begin{align}
        \spadesuit
&=
    \mathbb{E}\left[\sum_{s=1}^{m_1}\left\langle \barbp_{\pi_1(s)} - \be_k, \wtbz_{\pi_1(s)+1} \right\rangle\right] 
\\&=
    \mathbb{E}\left[\sum_{s=1}^{m_1}\left\langle \barbp_{\pi_1(s)} - \be_k, \barbz_{\pi_1(s)+1} \right\rangle\right] 
\notag
   \\ & \leq \psi_{\pi_1(m_1)}(\be_k) - \psi_{\pi_1(1)}(\barp_{\pi_1(1)}) + \sum_{s=1}^{m_{1}-1}\left(\psi_{\pi_1(s)}\left(\barbp_{\pi_1(s+1)}\right)-\psi_{\pi_1(s+1)}\left(\barbp_{\pi_1(s+1)}\right)\right)
\notag
   \\& \quad + \sum_{s=1}^{m_1} \sum_{j=1}^K \left(\frac{2\eta_{\pi_1(s)} \gamma_t y^{3/2}_{\pi_1(s)}(j)}{\gamma_t \sqrt{y_{\pi_1(s)}(j)} +\eta_{\pi_1(s)}}\right)\E \left[ \wtbz_{\pi_1(s)+1}^2(j) \right] \label{eqn:decop_bobw}
\end{align}
where $\by_{\pi_1(s)}$ lies on the line segment between $\barbp_{\pi_1(s)}$ and $\wtbp_{\pi_1(s+1)}$. Moreover, $\by_{\pi_1(s)}$ has the following update expression:
\begin{equation*}
    \log y_{\pi_1(s)}(j)-\frac{\eta_{\pi_1(s)} /\gamma_{\pi_1(s)} }{\sqrt{y_{\pi_1(s)}(j)}}=\log \barp_{\pi_1(s)}(j)-\frac{\eta_{\pi_1(s)}/\gamma_{\pi_1(s)}}{\sqrt{\barp_{\pi_1(s)}(j)}}-\eta_{\pi_1(s)} \wtz_{\pi_1(s)+1}(j),  \quad \text{for all } j \in [K].
\end{equation*}
Rearranging the terms leads to
\begin{align*}
    \log y_{\pi_1(s)}(j)-\log \barp_{\pi_1(s)}(j) \leq \frac{\eta_{\pi_1(s)} /\gamma_{\pi_1(s)} }{\sqrt{y_{\pi_1(s)}(j)}} - \frac{\eta_{\pi_1(s)}/\gamma_{\pi_1(s)}}{\sqrt{\barp_{\pi_1(s)}(j)}}-\eta_{\pi_1(s)} \wtz_{\pi_1(s)+1}(j).
\end{align*}
We claim that this implies $y_{\pi_1(s)}(j)\leq 3\barp_{\pi_1(s)}(j)$. To see this, we consider the following two cases. If $y_{\pi_1(s)}(j)\leq \barp_{\pi_1(s)}(j)$, then we surely have $y_{\pi_1(s)}(j)\leq 3\barp_{\pi_1(s)}(j)$. Otherwise, we know that $\log y_{\pi_1(s)}(j)-\log \barp_{\pi_1(s)}(j) \leq -\eta_{\pi_1(s)}\wtz_{\pi_1(s)+1}(j)\leq \eta_{\pi_1(s)}B$, leading to the following
\begin{equation*}
    y_{\pi_1(s)}(j) \leq e^{\eta_{\pi_1(s)} B}\cdot\barp_{\pi_1(s)}(j)\leq 3\barp_{\pi_1(s)}(j)
\end{equation*}
where the last inequality holds because $\eta_{\pi_1(s)} \leq 1/B$. This concludes that $y_{\pi_1(s)}(j)\leq 3\barp_{\pi_1(s)}(j)$. Plugging this inequality to \pref{eqn:decop_bobw}, we have
\begin{align}
           \spadesuit & \leq  \psi_{\pi_1(m_1)}(\be_k) - \psi_{\pi_1(1)}(\barbp_{\pi_1(1)}) + \sum_{s=1}^{m_{1}-1}\left(\psi_{\pi_1(s)}\left(\barbp_{\pi_1(s+1)}\right)-\psi_{\pi_1(s+1)}\left(\barbp_{\pi_1(s+1)}\right)\right)
\notag
   \\& \quad + 6\sum_{s=1}^{m_1} \sum_{j=1}^K \left(\frac{\eta_{\pi_1(s)} \gamma_t \barp_{\pi_1(s)}^{3/2}(j)}{\gamma_t \sqrt{\barp_{\pi_1(s)}(j)} +\eta_{\pi_1(s)}}\right)\E \left[ \wtz_{\pi_1(s)+1}^2(j). \right]
   \label{eq:bobw_decomp}
\end{align}
Next, we derive our regret guarantees for the adversarial and the stochastic environment separately.

\paragraph{Adversarial environment guarantees.}

We first analyze the first summation of \pref{eq:bobw_decomp}.
\begin{align}
&\psi_{\pi_1(m_1)}(\be_k) - \psi_{\pi_1(1)}(\barbp_{\pi_1(1)}) +\sum_{s=1}^{m_1-1}\left(\psi_{\pi_1(s)}\left(\barbp_{\pi_1(s+1)}\right)-\psi_{\pi_1(s+1)}\left(\barbp_{\pi_1(s+1)}\right)\right) \notag
\\&\leq \frac{\log K}{\eta_{\pi_1(1)}} + \frac{2\sqrt{K}}{\gamma_{\pi_1(1)}} - \frac{2}{\gamma_{\pi_1(m_1)}}+\sum_{s=1}^{m_1-1}\left(\psi_{\pi_1(s)}\left(\barbp_{\pi_1(s+1)}\right)-\psi_{\pi_1(s+1)}\left(\barbp_{\pi_1(s+1)}\right)\right) \notag
\\&\leq  \frac{\log K}{\eta_{\pi_1(1)}} + \frac{2\sqrt{K}-2}{\gamma_{\pi_1(1)}} \notag
\\&\quad+ 2\sum_{s=1}^{m_1-1}\left(\frac{1}{\gamma_{\pi_1(s+1)}}-\frac{1}{\gamma_{\pi_1(s)}}\right)
\left(\sum_{j=1}^{K}\sqrt{\barp_{\pi_1(s+1)}
(j)}-1\right) \notag
\\&\quad- \sum_{s=2}^{m_1}  \left(\frac{1}{\eta_{\pi_1(s)}} - \frac{1}{\eta_{\pi_1(s-1)}}\right) \sum_{j=1}^K\barp_{\pi_1(s)}(j)\log \barp_{\pi_1(s)}(j). \label{eq:bobw_adv_bias}
\end{align}
We first consider the first summation in \pref{eq:bobw_adv_bias}. Recall the definition of $\gamma_t$ and $\pi_1(s)=2s$ for all $s \in [m_1]$ , we have 
\begin{align*}
\frac{1}{\gamma_{\pi_1(s+1)}}-\frac{1}{\gamma_{\pi_1(s)}} \le \sqrt{\frac{B}{N}}\cdot
\frac{\pi_1(s+1)-\pi_1(s)}{\sqrt{\pi_1(s+1)}} \le \sqrt{\frac{B}{N}}\cdot
\frac{2}{\sqrt{\pi_1(s+1)}}~.
\end{align*}
and
\begin{align*}
\frac{1}{\eta_{\pi_1(s+1)}}-\frac{1}{\eta_{\pi_1(s)}} \le \frac{B}{\sqrt{\log K}}\cdot
\frac{\pi_1(s+1)-\pi_1(s)}{\sqrt{\pi_1(s+1)}} \le \frac{B}{\sqrt{\log K}}\cdot
\frac{2}{\sqrt{\pi_1(s+1)}}~.
\end{align*}

Hence, we have
\begin{align}
&2\sum_{s=1}^{m_1-1}\left(\frac{1}{\gamma_{\pi_1(s+1)}}-\frac{1}{\gamma_{\pi_1(s)}}\right)
\left(\sum_{k=1}^{K}\sqrt{\barp_{\pi_1(s+1)}(k)}-1\right)
\notag
\\&\le \frac{4\sqrt{B}}{\sqrt{N}}
\sum_{s=1}^{m_1-1}\frac{1}{\sqrt{\pi_1(s+1)}}
\left(\sum_{k=1}^{K}\sqrt{\barp_{\pi_1(s+1)}(k)}-1\right) \notag\\
&\le \frac{4\sqrt{B}}{\sqrt{N}}
\sum_{s=1}^{m_1}\frac{1}{\sqrt{\pi_1(s)}}
\left(\sum_{k=1}^{K}\sqrt{\barp_{\pi_1(s)}(k)}-1\right) \label{eq:bobw_adv_bias_1}\\
&\le \order\left(\sqrt{\frac{TK}{N}}\right).\label{eq:bobw_adv_bias_order_1}
\end{align}
Regarding the second summation in \pref{eq:bobw_adv_bias}, we have
\begin{align}
    &-\sum_{s=2}^{m_1}  \left(\frac{1}{\eta_{\pi_1(s)}} - \frac{1}{\eta_{\pi_1(s-1)}}\right) \sum_{j=1}^K\barp_{\pi_1(s)}(j)\log \barp_{\pi_1(s)}(j)  \notag
    \\&\leq  - \sum_{s=1}^{m_1-1} \frac{B}{\sqrt{\log K}} \cdot \frac{2}{\sqrt{\pi_1(s)}}\sum_{j=1}^K\barp_{\pi_1(s)}(j)\log \barp_{\pi_1(s)}(j) \label{eq:bobw_adv_bias_2} \\
    &\leq \order\left(\sqrt{BT\log K}\right),\label{eq:bobw_adv_bias_order_2}
\end{align}
where the last inequality is because $-\sum_{j=1}^K\barp_{\pi_1(s)}(j)\log \barp_{\pi_1(s)}(j)\leq\log K$.
Plugging \pref{eq:bobw_adv_bias_order_1} and \pref{eq:bobw_adv_bias_order_2} to \pref{eq:bobw_adv_bias}, we have
\begin{align}
    &\psi_{\pi_1(m_1)}(\be_k) - \psi_{\pi_1(1)}(\barbp_{\pi_1(1)}) +\sum_{s=1}^{m_1-1}\left(\psi_{\pi_1(s)}\left(\barbp_{\pi_1(s+1)}\right)-\psi_{\pi_1(s+1)}\left(\barbp_{\pi_1(s+1)}\right)\right) \nonumber\\
    &\leq \frac{\log K}{\eta_{\pi_1(1)}} + \frac{2\sqrt{K}-2}{\gamma_{\pi_1(1)}} +\order\left(\sqrt{BT\log K}+\sqrt{\frac{TK}{N}}\right)\nonumber\\
    &\leq \order\left(B\log K + \sqrt{\frac{BK}{N}} + \sqrt{\frac{TK}{N}} + \sqrt{BT\log K}\right) \nonumber \\
    &\leq \order\left(B\log K + \sqrt{\frac{TK}{N}} + \sqrt{BT\log K}\right),\label{eq:bobw_adv_bias_order_3}
\end{align}
where the last inequality uses $B\leq T$.

Next, we analyze the second summation of \pref{eq:bobw_decomp}. Let $\calF_{\pi_1(s)}$ be the filtration of all random events observed up to the beginning of block $\pi_1(s) $. Since $\barp_{\pi_1(s)}(j)$ is $\mathcal F_{\pi_1(s)}$-measurable, we know that
\begin{align}
    \E \left [\wtz^2_{\pi_1(s)+1}(k) \middle| \calF_{\pi_1(s)}  \right]&= \E\left [\left(\frac{1}{N}\sum_{i=1}^N \sum_{t\in\mathcal{T}_{\pi_1(s)}} (\ellhat_{t}(i,k) - \ell_{t}(i,k^{\star}))\right)^2\middle| \calF_{\pi_1(s)}\right]\notag \\
    &= \frac{1}{N^2}\E\left[\left(\sum_{i=1}^N\sum_{t\in\calT_{\pi_1(s)}}(\ellhat_t(i,k)-\ell_t(i,k))\right)^2\middle| \calF_{\pi_1(s)}\right]\nonumber\\
    &\qquad + \frac{1}{N^2}\left(\E\left [\sum_{i=1}^N \sum_{t\in\mathcal{T}_{\pi_1(s)}} (\ellhat_{t}(i,k) - \ell_{t}(i,k^{\star}))\middle| \calF_{\pi_1(s)}\right]\right)^2\notag \nonumber \tag{using $\E[x^2]=\E[(x-\E[x])^2]+\E[x]^2$}\\
    &= \frac{1}{N^2}\sum_{i=1}^N\sum_{t\in\calT_{\pi_1(s)}}\E\left[(\ellhat_t(i,k)-\ell_t(i,k))^2\middle| \calF_{\pi_1(s)}\right] \notag \\
    &\qquad + \frac{1}{N^2} \left(\sum_{i=1}^N \sum_{t \in \mathcal{T}_{\pi_1(s)}}\left(\ell_t(i, k)-\ell_t\left(i, k^{\star}\right)\right)\right)^2\tag{since $A_t(i)$ are drawn independently over $i\in[N]$}
    \\&= \frac{1}{N^2} \sum_{i=1}^N \sum_{t \in \mathcal{T}_{\pi_1(s)}} \ell^2_t(i, k)\left(\frac{1}{p_{\pi_1(s)}(i, k)}-1\right)\notag
    \\&\quad+ \frac{1}{N^2} \left(\sum_{i=1}^N \sum_{t \in \mathcal{T}_{\pi_1(s)}}\left(\ell_t(i, k)-\ell_t\left(i, k^{\star}\right)\right)\right)^2\notag
    \\&= \frac{1}{N^2} \sum_{i=1}^N \sum_{t \in \mathcal{T}_{\pi_1(s)}} \left(\frac{\ell^2_t(i, k)}{p_{\pi_1(s)}(i, k)}-2\ell_t(i, k)\ell_t(i, k^{\star})+ \ell^2_t(i, k^{\star}) \right) \notag
    \\&\quad+ \frac{1}{N^2}\sum_{\substack{(i,t)\neq (j,t')\\ t,t'\in\mathcal{T}_{\pi_1(s)} \\ i, j \in [N]}}
\left(\ell_t(i, k)-\ell_t\left(i, k^{\star}\right)\right)
\left(\ell_{t'}(j, k)-\ell_{t'}\left(j, k^{\star}\right)\right)~.\label{eq:bobw_z}
\end{align}
Plugging \pref{eq:bobw_z} and \pref{eq:bobw_adv_bias_order_3} in \pref{eq:bobw_decomp}, we obtain
\begin{align}
    \spadesuit & \leq  \order\left(B\log K + \sqrt{\frac{TK}{N}} + \sqrt{BT\log K}\right)  \notag
    \\&+ \frac{6}{N^2} \sum_{s=1}^{m_1} \gamma_{\pi_1(s)} \sum_{k=1}^K \barp_{\pi_1(s)}^{3/2}(k)  \sum_{i=1}^N \sum_{t \in \mathcal{T}_{\pi_1(s)}} \left(\frac{\ell^2_t(i, k)}{p_{\pi_1(s)}(i, k)}-2\ell_t(i, k)\ell_t(i, k^{\star})+ \ell^2_t(i, k^{\star}) \right) \notag
    \\& + \E \left [\frac{6}{N^2} \sum_{s=1}^{m_1} \eta_{\pi_1(s)}\sum_{k=1}^K \barp_{\pi_1(s)}^{3/2}(k) \sum_{\substack{(i,t)\neq (j,t')\\ t,t'\in\mathcal{T}_{\pi_1(s)} \\ i, j \in [N]}}
\left(\ell_t(i, k)-\ell_t\left(i, k^{\star}\right)\right)
\left(\ell_{t'}(j, k)-\ell_{t'}\left(j, k^{\star}\right)\right) \right]~. \label{eq:bobw_spade_2}
\end{align}
Regarding the first summation in \pref{eq:bobw_spade_2}, we have
\begin{align}
   & \frac{6}{N^2} \sum_{s=1}^{m_1} \gamma_{\pi_1(s)} \sum_{k=1}^K \barp_{\pi_1(s)}^{3/2}(k)  \sum_{i=1}^N \sum_{t\in \mathcal{T}_{\pi_1(s)}}  \left(\frac{\ell^2_{t}(i, k)}{p_{\pi_1(s)}(i, k)}-2\ell_{t}(i, k)\ell_{t}(i, k^{\star})+ \ell^2_{t}(i, k^{\star}) \right) \notag
   \\& = \frac{6}{N^2} \sum_{s=1}^{m_1} \gamma_{\pi_1(s)} \sum_{k=1}^K \sum_{i=1}^N \sum_{t\in \mathcal{T}_{\pi_1(s)}}  \left( \barp_{\pi_1(s)}^{3/2}(k)\frac{\ell^2_{t}(i, k)}{p_{\pi_1(s)}(i, k)} -  \barp_{\pi_1(s)}^{3/2}(k)\ell_t(i, k)\ell_t(i, k^{\star})\right)  \notag
   \\& \qquad  + \frac{6}{N^2} \sum_{s=1}^{m_1} \gamma_{\pi_1(s)} \sum_{k=1}^K \sum_{i=1}^N \sum_{t\in \mathcal{T}_{\pi_1(s)}} \barp_{\pi_1(s)}^{3/2}(k) \left(\ell^2_{t}(i, k^{\star}) -\ell_{t}(i, k)\ell_{t}(i, k^{\star})\right)~.\label{eq:bobw_zero}
\end{align}
For the first summation of \pref{eq:bobw_zero}, the terms corresponding to $k \neq k^{\star}$ are together bounded by 
\begin{equation}
    \frac{18B}{N} \sum_{s=1}^{m_1}\gamma_{\pi_1(s)}\sum_{k \neq k^{\star}} \sqrt{\barp_{\pi_1(s)}(k)}
    \label{eq:bobw_first}
\end{equation}
by ignoring the second negative term and using the fact that $\frac{\barp_{\pi_1(s)}(j)}{p_{\pi_1(s)}(i,j)} \le 3$ according to \pref{lem: stablity} and the fact that $\ell_{t}(i, k) \in [0,1]$; the term corresponding to $k=k^{\star}$ is bounded as follows:
\begin{align}
     &\frac{6}{N^2} \sum_{s=1}^{m_1}\gamma_{\pi_1(s)}\sum_{i=1}^N \sum_{t\in \mathcal{T}_{\pi_1(s)}}  \left( \barp^{3/2}_{\pi_1(s)}(k^\star)\frac{\ell^2_{t}(i, k^\star)}{p_{\pi_1(s)}(i, k^\star)} -  \barp^{3/2}_{\pi_1(s)}(k^\star)\ell_t^2(i, k^\star)\right)  \notag
     \\&  =  \frac{6}{N^2}\sum_{s=1}^{m_1}\gamma_{\pi_1(s)}\sum_{i=1}^N \sum_{t\in \mathcal{T}_{\pi_1(s)}}  \frac{\barp^{3/2}_{\pi_1(s)}(k^\star)}{p_{\pi_1(s)}(i, k^\star)} \left( 1 -  p_{\pi_1(s)}(i, k^\star)\right) \ell^2_t(i, k^{\star}) \notag
     \\&\leq \frac{18B}{N^2}\sum_{s=1}^{m_1}\gamma_{\pi_1(s)} \sum_{i=1}^N \left( 1 -  p_{\pi_1(s)}(i, k)\right) \tag{according to \pref{lem: stablity} and $|\calT_{\pi_1(s)}|\leq B$}
     \\& = \frac{18B}{N^2} \sum_{s=1}^{m_1}\gamma_{\pi_1(s)} \sum_{k \neq k^{\star}} \sum_{i=1}^N  p_{\pi_1(s)}(i, k) \notag
     \\& \leq \frac{18B}{N}\sum_{s=1}^{m_1}\gamma_{\pi_1(s)} \sum_{k \neq k^{\star}} \left(\barp_{\pi_1(s)}(k) + \frac{2\eta B}{K^4T^5}\right)\tag{using \pref{lem: stablity}}
    \\& \leq \frac{18B}{N}\sum_{s=1}^{m_1}\gamma_{\pi_1(s)} \sum_{k \neq k^{\star}} \sqrt{\barp_{\pi_1(s)}(k)} + 36~.
    \label{eq:bobw_second}
\end{align}
Similarly, for the second summation of \pref{eq:bobw_zero}, the terms corresponding to $k \neq k^{\star}$ are together bounded by 
\begin{equation}
\frac{6B}{N} \sum_{s=1}^{m_1}\gamma_{\pi_1(s)}\sum_{k \neq k^{\star}} \sqrt{\barp_{\pi_1(s)}(k)}
    \label{eq:bobw_third}
\end{equation}
again by ignoring the second negative term; the term corresponding to $k= k^{\star}$ is simply 0. 

Regarding the second summation in \pref{eq:bobw_spade_2}, using $\ell_{t}(i, k) \in [0,1]$ we have
\begin{align}
&\E \left [\frac{6}{N^2} \sum_{s=1}^{m_1} \eta_{\pi_1(s)} \sum_{k=1}^K \barp_{\pi_1(s)}^{3/2}(k) \sum_{\substack{(i,t)\neq (j,t')\\ t,t'\in\mathcal{T}_{\pi_1(s)} \\ i, j \in [N]}}
\left(\ell_t(i, k)-\ell_t\left(i, k^{\star}\right)\right)
\left(\ell_{t'}(j, k)-\ell_{t'}\left(j, k^{\star}\right)\right) \right]\notag
\\& \leq 6 B^2 \sum_{s=1}^{m_1}\eta_{\pi_1(s)}  \sum_{k \neq k^*} \barp_{\pi_1(s)}(k)~. \label{eq:bobw_fourth}
\end{align}
Plugging \pref{eq:bobw_zero}, \pref{eq:bobw_first}, \pref{eq:bobw_second}, \pref{eq:bobw_third} and \pref{eq:bobw_fourth} in \pref{eq:bobw_spade_2}, we have
\begin{align}       
\spadesuit &= \mathbb{E}\left[\sum_{s=1}^{m_1}\left\langle \barbp_{\pi_1(s)} -\be_k, \barbz_{\pi_1(s)+1} \right\rangle\right]  \notag
\\&\leq \order\left(B\log K + \sqrt{\frac{TK}{N}} + \sqrt{BT\log K}\right) \nonumber \\
&\qquad + \frac{42B}{N} \sum_{s=1}^{m_1} \gamma_{\pi_1(s)}\sum_{k \neq k^{\star}} \sqrt{\barp_{\pi_1(s)}(k)} + 6 B^2 \sum_{s=1}^{m_1}\eta_{\pi_1(s)} \sum_{k \neq k^*} \barp_{\pi_1(s)}(k) + 36~.
\end{align}

The analysis for $\Reg_{\mathcal{P}_0}(i)$ is identical. Summing the bounds over the two parity
subsequences, using $m_1+m_2=M = T/B$ and using $\eta_t = \min \left\{\frac{1}{B},\sqrt{\frac{\log K}{tB^2}} \right\}$ and $\gamma_t = \sqrt{\frac{N}{tB}}$, we obtain 
\begin{align}
   \mathbb{E}\left[\sum_{s=1}^{T/B}\left\langle \barbp_{\pi_s} -\be_k, \barbz_{s+1} \right\rangle\right] &\leq  \order\left(B\log K + \sqrt{\frac{TK}{N}} + \sqrt{BT\log K}\right) \notag
   \\&\qquad + \frac{42B}{N} \sum_{s=1}^{T/B}\gamma_{s}\sum_{k \neq k^{\star}} \sqrt{\barp_{s}(k)} + 6 B^2 \sum_{s=1}^{T/B} \eta_s \sum_{k \neq k^*} \barp_{s}(k)\nonumber 
   \\& \leq  \mathcal{O} \left(\sqrt{BT \log K} + \sqrt{\frac{K}{N}T}  +B \log K \right)~,
\end{align}
where the last inequality holds by bounding $\sum_{k}\sqrt{\barp_s(k)}$ by $\sqrt{K}$. 
Combining the bounds for $\heartsuit$ and $\clubsuit$, we have
\begin{align*}
    \max_{i \in V}\max_{k \in [K]} \mathbb{E} \left[\sum_{\tau=1}^{T/B}\left\langle \bp'_{\tau}(i)-\be_k, \bz_{\tau+1}^B(i)\right\rangle\right] \leq \mathcal{O} \left(\sqrt{BT \log K} + \sqrt{\frac{K}{N}T}  +B \log K \right)
\end{align*}
Combining \pref{lem:delay_reduction} with the above inequality finishes the proof of the adversarial bound.

\paragraph{Stochastic environment guarantees.}
When the losses are stochastic, we still analyze the terms in \pref{eq:bobw_decomp}. For the first three terms in \pref{eq:bobw_decomp}, according to \pref{eq:bobw_adv_bias_1}, we know that
\begin{align}
&2\sum_{s=1}^{m_1-1}\left(\frac{1}{\gamma_{\pi_1(s+1)}}-\frac{1}{\gamma_{\pi_1(s)}}\right)
\left(\sum_{k=1}^{K}\sqrt{\barp_{\pi_1(s+1)}(k)}-1\right)
\notag\\
&\le \frac{4\sqrt{B}}{\sqrt{N}}
\sum_{s=1}^{m_1}\frac{1}{\sqrt{\pi_1(s)}}
\left(\sum_{k=1}^{K}\sqrt{\barp_{\pi_1(s)}(k)}-1\right) \notag\\
&\le \frac{4\sqrt{B}}{\sqrt{N}}
\sum_{s=1}^{m_1}\frac{1}{\sqrt{\pi_1(s)}}
\sum_{k\neq k^*}\sqrt{\barp_{\pi_1(s)}(k)}~.
\label{eq:bobw_bias_stochastic_1}
\end{align}
We also have the following according to \pref{eq:bobw_adv_bias_2}
\begin{align}
    &-\sum_{s=2}^{m_1}  \left(\frac{1}{\eta_{\pi_1(s)}} - \frac{1}{\eta_{\pi_1(s-1)}}\right) \sum_{j=1}^K\barp_{\pi_1(s)}(j)\log \barp_{\pi_1(s)}(j)  \notag
    \\&\leq  - \sum_{s=1}^{m_1-1} \frac{B}{\sqrt{\log K}} \cdot \frac{2}{\sqrt{\pi_1(s)}}\sum_{j=1}^K\barp_{\pi_1(s)}(j)\log \barp_{\pi_1(s)}(j) \notag
    \\&\leq \sum_{s=1}^{m_1} -\frac{B}{\sqrt{\log K}} \cdot \frac{2}{\sqrt{\pi_1(s)}}\sum_{k\neq k^{\star}}\barp_{\pi_1(s)}(k)\log \barp_{\pi_1(s)}(k) \notag
    \\& \qquad  + \sum_{s=1}^{m_1} \frac{B}{\sqrt{\log K}} \cdot \frac{2}{\sqrt{\pi_1(s)}}\sum_{k\neq k^{\star}}\barp_{\pi_1(s)}(k)~,\label{eq:bobw_bias_stochastic_2}
\end{align}
where the second inequality holds using
\begin{equation*}
    -\barp_{\pi_1(s)}(k^{\star})\log \barp_{\pi_1(s)}(k^{\star}) \leq (1- \barp_{\pi_1(s)}(k^{\star})) = \sum_{k \neq k^{\star}} \barp_{\pi_1(s)}(k^{\star})~.
\end{equation*}
Therefore, we know that
\begin{align}
&\psi_{\pi_1(m_1)}(\be_k) - \psi_{\pi_1(1)}(\barbp_{\pi_1(1)}) +\sum_{s=1}^{m_1-1}\left(\psi_{\pi_1(s)}\left(\barbp_{\pi_1(s+1)}\right)-\psi_{\pi_1(s+1)}\left(\barbp_{\pi_1(s+1)}\right)\right) \notag\\
&\leq \frac{\log K}{\eta_{\pi_1(1)}} + \frac{2\sqrt{K}-2}{\gamma_{\pi_1(1)}} + \frac{4\sqrt{B}}{\sqrt{N}}
\sum_{s=1}^{m_1}\frac{1}{\sqrt{\pi_1(s)}}
\sum_{k\neq k^*}\sqrt{\barp_{\pi_1(s)}(k)} \notag
    \\& \qquad + \sum_{s=1}^{m_1} -\frac{B}{\sqrt{\log K}} \cdot \frac{2}{\sqrt{\pi_1(s)}}\sum_{k\neq k^{\star}}\barp_{\pi_1(s)}(k)\log \barp_{\pi_1(s)}(k) \notag
    \\&\qquad+ \sum_{s=1}^{m_1} \frac{B}{\sqrt{\log K}} \cdot \frac{2}{\sqrt{\pi_1(s)}}\sum_{k\neq k^{\star}}\barp_{\pi_1(s)}(k)~.\label{eq:bow_bias_stochastic_final}
\end{align}

Plugging \pref{eq:bobw_zero}, \pref{eq:bobw_first}, \pref{eq:bobw_second}, \pref{eq:bobw_third}, \pref{eq:bobw_fourth}, \pref{eq:bow_bias_stochastic_final} and \pref{eq:bobw_z} in \pref{eq:bobw_decomp}, we obtain
\begin{align}
    \spadesuit &= \frac{\log K}{\eta_{\pi_1(1)}} + \frac{2\sqrt{K}-2}{\gamma_{\pi_1(1)}} + \frac{4\sqrt{B}}{\sqrt{N}}
\sum_{s=1}^{m_1}\frac{1}{\sqrt{\pi_1(s)}}
\sum_{k\neq k^*}\sqrt{\barp_{\pi_1(s)}(k)} \notag
    \\& \quad + \sum_{s=1}^{m_1} -\frac{B}{\sqrt{\log K}} \cdot \frac{2}{\sqrt{\pi_1(s)}}\sum_{k\neq k^{\star}}\barp_{\pi_1(s)}(k)\log \barp_{\pi_1(s)}(k) \notag
    \\&\quad+ \sum_{s=1}^{m_1} \frac{B}{\sqrt{\log K}} \cdot \frac{2}{\sqrt{\pi_1(s)}}\sum_{k\neq k^{\star}}\barp_{\pi_1(s)}(k)
    +\frac{42B}{N} \sum_{s=1}^{m_1} \gamma_{\pi_1(s)}\sum_{k \neq k^{\star}} \sqrt{\barp_{\pi_1(s)}(k)} \notag
    \\& \quad + \E \left [\frac{6}{N^2} \sum_{s=1}^{m_1} \eta_{\pi_1(s)} \sum_{k=1}^K \barp_{\pi_1(s)}^{3/2}(k) 
\sum_{\substack{(i,t)\neq (j,t')\\ t,t'\in\mathcal{T}_{\pi_1(s)} \\ i, j \in [N]}}
\left(\ell_t(i, k)-\ell_t\left(i, k^{\star}\right)\right)
\left(\ell_{t'}(j, k)-\ell_{t'}\left(j, k^{\star}\right)\right) \right] \label{eq:bobw_stochastic_spade}
\end{align}
Then we analyze the last summation of \pref{eq:bobw_stochastic_spade}, we have
\begin{align}
&\E \left [\frac{6}{N^2} \sum_{s=1}^{m_1} \eta_{\pi_1(s)} \sum_{k=1}^K \barp_{\pi_1(s)}^{3/2}(k)
\sum_{\substack{(i,t)\neq (j,t')\\ t,t'\in\mathcal{T}_{\pi_1(s)} \\ i, j \in [N]}}
\left(\ell_t(i, k)-\ell_t\left(i, k^{\star}\right)\right)
\left(\ell_{t'}(j, k)-\ell_{t'}\left(j, k^{\star}\right)\right) \right]\notag
\\& \leq\frac{6}{N^2} \sum_{s=1}^{m_1} \eta_{\pi_1(s)} \sum_{k=1}^K \barp_{\pi_1(s)}^{3/2}(k)
\sum_{\substack{(i,t)\neq (j,t')\\ t,t'\in\mathcal{T}_{\pi_1(s)} \\ i, j \in [N]}}
\left(\mu(i, k)-\mu\left(i, k^{\star}\right)\right)
\left(\mu(j, k)-\mu\left(j, k^{\star}\right)\right) \notag\\
&= \frac{6}{N^2} \sum_{s=1}^{m_1} \eta_{\pi_1(s)} \sum_{k=1}^K \barp_{\pi_1(s)}^{3/2}(k)
\Bigg[
\Big(\sum_{t\in\mathcal{T}_{\pi_1(s)}}\sum_{i=1}^N \big(\mu(i, k)-\mu(i, k^{\star})\big)\Big)^2
\\&\quad-\sum_{t\in\mathcal{T}_{\pi_1(s)}}\sum_{i=1}^N \big(\mu(i, k)-\mu(i, k^{\star})\big)^2
\Bigg]\notag\\
&= \frac{6}{N^2} \sum_{s=1}^{m_1} \eta_{\pi_1(s)} \sum_{k=1}^K \barp_{\pi_1(s)}^{3/2}(k)
\Bigg[
|\mathcal{T}_{\pi_1(s)}|^2\Big(\sum_{i=1}^N \big(\mu(i, k)-\mu(i, k^{\star})\big)\Big)^2
\\&\quad-|\mathcal{T}_{\pi_1(s)}|\sum_{i=1}^N \big(\mu(i, k)-\mu(i, k^{\star})\big)^2
\Bigg]\notag\\
&\le \frac{6}{N^2} \sum_{s=1}^{m_1} \eta_{\pi_1(s)} \sum_{k=1}^K \barp_{\pi_1(s)}^{3/2}(k)\;
|\mathcal{T}_{\pi_1(s)}|^2\left(\sum_{i=1}^N \big(\mu(i, k)-\mu(i, k^{\star})\big)\right)^2 \notag\\
&= \frac{6}{N^2} \sum_{s=1}^{m_1} \eta_{\pi_1(s)} \sum_{k=1}^K \barp_{\pi_1(s)}^{3/2}(k)
|\mathcal{T}_{\pi_1(s)}|^2 N^2\delta(k)^2\notag\\
&\le 6B^2\sum_{s=1}^{m_1} \eta_{\pi_1(s)} \sum_{k \neq k^{\star}}\barp_{\pi_1(s)}(k)\delta(k)
\label{eq:Bobw_entropy_stochast}
\end{align}
Plugging \pref{eq:Bobw_entropy_stochast} in \pref{eq:bobw_stochastic_spade}, we have
\begin{align}       
\spadesuit &= \mathbb{E}\left[\sum_{s=1}^{m_1}\left\langle \barbp_{\pi_1(s)} -\be_k, \barbz_{\pi_1(s)+1} \right\rangle\right]  \notag
\\&\leq  \frac{\log K}{\eta_{\pi_1(1)}} + \frac{2\sqrt{K}-2}{\gamma_{\pi_1(1)}} + \frac{46 \sqrt{B}}{\sqrt{N}} \sum_{s=1}^{m_1} \frac{1}{\sqrt{\pi_1(s)}}\sum_{k \neq k^{\star}} \sqrt{\barp_{\pi_1(s)}(k)} + 36 \notag
\\&\quad + \sum_{s=1}^{m_1} -\frac{B}{\sqrt{\log K}} \cdot \frac{2}{\sqrt{\pi_1(s)}}\sum_{k\neq k^{\star}}\barp_{\pi_1(s)}(k)\log \barp_{\pi_1(s)}(k) \notag
\\& \quad + \sum_{s=1}^{m_1} \frac{B}{\sqrt{\log K}} \cdot \frac{2}{\sqrt{\pi_1(s)}}\sum_{k\neq k^{\star}}\barp_{\pi_1(s)}(k) \notag
\\& \quad + 6B\sum_{s=1}^{m_1} \sqrt{\frac{\log K}{\pi_1(s)}}\sum_{k \neq k^{\star}}^K \barp_{\pi_1(s)}(k)\delta(k)
\end{align}
The analysis for $\Reg_{\mathcal{P}_0}(i)$ is identical. Summing the bounds over the two parity
subsequences, using $m_1+m_2=M = T/B$, we obtain that

\begin{align*}
\mathbb{E}\left[\sum_{s=1}^{T/B}\left\langle \barbp_{s} -\be_k, \barbz_{s+1} \right\rangle\right] &\leq  \frac{\log K}{\eta_{\pi_0(1)}} + \frac{2\sqrt{K}-2}{\gamma_{\pi_0(1)}}+ \frac{\log K}{\eta_{\pi_1(1)}} + \frac{2\sqrt{K}-2}{\gamma_{\pi_1(1)}} + 72
\\& \quad + \frac{C}{2} \Bigg (\frac{\sqrt{B}}{\sqrt{N}} \sum_{s=1}^{T/B} \frac{1}{\sqrt{s}}\sum_{k \neq k^{\star}} \sqrt{\barp_{\pi_1(s)}(k)} 
\\& \quad + \sum_{s=1}^{T/B} -\frac{B}{\sqrt{\log K}} \cdot \frac{1}{\sqrt{s}}\sum_{k\neq k^{\star}}\barp_{s}(k)\log \barp_{s}(k) 
\\& \quad + \sum_{s=1}^{T/B} \frac{B}{\sqrt{\log K}} \cdot \frac{1}{\sqrt{s}}\sum_{k\neq k^{\star}}\barp_{s}(k)
\\&\quad+ B\sum_{s=1}^{T/B} \sqrt{\frac{\log K}{s}} \sum_{k \neq k^{\star}} \barp_{s}(k)\delta(k) \Bigg),
\end{align*}
where $C> 0$ is a universal constant. Note that by definition of $\delta(k)$, we have
\begin{equation*}
    \mathbb{E}\left[\sum_{s=1}^{T/B}\left\langle \barbp_{s} -\be_k, \barbz_{s+1} \right\rangle\right] = \sum_{s=1}^{T/B} \sum_{k \neq k^{\star}} \barp_s(k) \delta(k)~.
\end{equation*}
Therefore, we are able to rewrite the regret as follows:
\begin{align}
   &\mathbb{E}\left[\sum_{s=1}^{T/B}\left\langle \barbp_{s} -\be_k, \barbz_{s+1} \right\rangle\right] \\&\qquad=  2\mathbb{E}\left[\sum_{s=1}^{T/B}\left\langle \barbp_{s} -\be_k, \barbz_{s+1} \right\rangle\right] - \mathbb{E}\left[\sum_{s=1}^{T/B}\left\langle \barbp_{s} -\be_k, \barbz_{s+1} \right\rangle\right]\notag
   \\ &\qquad\leq \order\left(B \log K + \sqrt{\frac{BK}{N}}\right)\notag
\\& \qquad\quad + C \frac{\sqrt{B}}{\sqrt{N}} \sum_{s=1}^{T/B} \frac{1}{\sqrt{s}}\sum_{k \neq k^{\star}} \sqrt{\barp_{\pi_1(s)}(k)} - \frac{1}{4}\sum_{s=1}^{T/B} \sum_{k \neq k^{\star}} \barp_s(k) \delta(k)\notag
\\& \qquad\quad + C \sum_{s=1}^{T/B} -\frac{B}{\sqrt{\log K}} \cdot \frac{1}{\sqrt{s}}\sum_{k\neq k^{\star}}\barp_{s}(k)\log \barp_{s}(k) -\frac{1}{4}\sum_{s=1}^{T/B} \sum_{k \neq k^{\star}} \barp_s(k) \Delta(k)\notag
\\& \qquad\quad + C \sum_{s=1}^{T/B} \frac{B}{\sqrt{\log K}} \cdot \frac{1}{\sqrt{s}}\sum_{k\neq k^{\star}}\barp_{s}(k)-\frac{1}{4}\sum_{s=1}^{T/B} \sum_{k \neq k^{\star}} \barp_s(k) \delta(k)\notag
\\&\qquad\quad+ C B\sum_{s=1}^{T/B} \sqrt{\frac{\log K}{s}}\sum_{k \neq k^{\star}}^K \barp_{s}(k)\delta(k) - \frac{1}{4}\sum_{s=1}^{T/B} \sum_{k \neq k^{\star}} \barp_s(k) \delta(k). \label{eq: bobw_stochastic_decop}
\end{align}
Next, we analyze the four summations in \pref{eq: bobw_stochastic_decop} separately.
\paragraph{Bounding the first summation in \pref{eq: bobw_stochastic_decop}.} Direct calculation shows that
\begin{align}
  \sum_{s=1}^{T/B}\sum_{k \neq k^{\star}} \left(   C\sqrt{\frac{B}{N}} \frac{1}{\sqrt{s}}  \sqrt{\barp_{\pi_1(s)}(k)} - \frac{1}{4}\barp_{\pi_1(s)}(k) \delta(k)\right)&\leq  \sum_{s=1}^{T/B}\sum_{k \neq k^{\star}} \frac{BC^2}{Ns\delta(k)} \notag
    \\&\leq \order \left(\frac{B}{N} \ln \left(\frac{T}{B}\right) \sum_{k \neq k^{\star}} \frac{1}{\delta(k)}\right)~.\label{eq:bobw_stochastic_first}
\end{align}
where the first inequality holds by using $2 \sqrt{x y}-y \leq x$ for all $x, y\geq 0$.

\paragraph{Bounding the second summation in \pref{eq: bobw_stochastic_decop}.} We define
\[
b_s \triangleq \frac{CB}{4\sqrt{s\log K}}~.
\]
Fix $k\neq k^\star$ and $\delta(k)>0$. Define $g(z)\triangleq -b_s z\log z-\delta(k)z$.
Direct calculation shows that $g(z)$ is concave and $z^\star\triangleq \argmin_{z}g(z)=\exp(-\delta(k)/b_s-1)$, hence
\[
-b_s z\log z\le \delta(k)z+g(z^\star) = \delta(k)z + b_s\exp\left(-\frac{\delta(k)}{b_s}-1\right).
\]
Summing over all $s\le T/B$ shows that 
\begin{align}
    &\sum_{s=1}^{T/B} -b_sz\log z-\delta(k)z \nonumber\\
    &\leq \sum_{s=1}^{T/B} \sum_{k \neq k^{\star}} b_s\exp\Big(-\frac{\delta(k)}{b_s}-1\Big) \nonumber\\
&= \sum_{s=1}^{T/B} \sum_{k \neq k^{\star}} \frac{BC}{4\sqrt{s\log K}}
\exp\Big(-\frac{4\delta(k)\sqrt{s\log K}}{BC}-1\Big) \notag
\\& \leq \sum_{k \neq k^{\star}}\int_{0}^{\infty} \frac{BC}{4\sqrt{x\log K}}
\exp\Big(-\frac{4\delta(k)\sqrt{x\log K}}{BC}-1\Big)dx\notag
\\&\leq \order \left(\sum_{k \neq k^{\star}} \frac{B^2}{\delta(k)\log K}\right) \label{eq:bobw_stochastic_second}
\end{align}
where the last inequality holds by using
$\int_0^{\infty} \frac{a}{\sqrt{x}}\exp(-\frac{\delta\sqrt{x}}{a}-1)\,dx
= \frac{2a^2}{e\delta}$
with $a=\frac{BC}{4\sqrt{\log K}}$ and $\delta=\delta(k)$.

\paragraph{Bounding the third summation in \pref{eq: bobw_stochastic_decop}.} Define $s^\star(k)\triangleq \left\lceil \frac{16C^2B^2}{\delta(k)^2 \log K} \right \rceil$. Then, we know that for all $s\geq s^{\star}(k)$,
\begin{equation*}
   C\cdot\frac{B}{\sqrt{\log K}} \cdot \frac{1}{\sqrt{s}}\barp_s(k)  - \frac{1}{4}\barp_{s}(k) \delta(k)  \leq 0
\end{equation*}
Therefore, it suffices to bound the third summation for $s$ from $1$ to $s^\star(k)$ for each $k\neq k^\star$. Then direct calculation shows that
\begin{align}
      &\sum_{k \neq k^{\star}} \sum_{s=1}^{s^{\star}(k)}\left(\frac{CB}{\sqrt{\log K}} \cdot \frac{1}{\sqrt{s}} - \frac{1}{4}\barp_{s}(k) \delta(k)\right) \nonumber \\
      &\leq \sum_{k \neq k^{\star}} \sum_{s=1}^{s^{\star}(k)}\frac{CB}{\sqrt{\log K}} \cdot \frac{1}{\sqrt{s}} \nonumber
      \\&= \sum_{k \neq k^{\star}} \sum_{s=1}^{\left\lceil \frac{16C^2B^2}{\delta(k)^2 \log K} \right \rceil}\frac{CB}{\sqrt{\log K}} \cdot \frac{1}{\sqrt{s}} \leq \order\left(\sum_{k \neq k^{\star}} \frac{B^2}{\delta(k)\log K}\right)~.\label{eq:bobw_stochastic_third}
\end{align}

\paragraph{Bounding the fourth summation in \pref{eq: bobw_stochastic_decop}.} Define $s^\star = \left\lceil 16C^2B^2\log K \right \rceil $. Since when $s\geq s^\star$, we have
\begin{equation*}
  BC  \sqrt{\frac{\log K}{s}}  \barp_{s}(k) \delta(k)- \frac{1}{4}\barp_{s}(k) \delta(k) \leq 0~.
\end{equation*}
Therefore, it suffices to bound the fourth summation for $s$ from $1$ to $s^\star$. Direct calculation shows that
\begin{align}
    &\sum_{s=1}^{s^{\star}}\sum_{k \neq k^{\star}} \left( BC  \sqrt{\frac{\log K}{s}}  \barp_{s}(k) \delta(k)- \frac{1}{4}\barp_{s}(k) \delta(k)\right) \notag
    \\& \leq \sum_{s=1}^{ \left\lceil 16C^2B^2\log K \right \rceil}\sum_{k \neq k^{\star}} BC  \sqrt{\frac{\log K}{s}}  \barp_{s}(k) \delta(k) \notag
     \\& \leq \sum_{s=1}^{ \left\lceil 16C^2B^2\log K \right \rceil} BC  \sqrt{\frac{\log K}{s}} \notag
    \\& \leq \order \left(B^2 \log K \right).
    \label{eq:bobw_stochastic_fourth}
\end{align}

Plugging \pref{eq:bobw_stochastic_first}, \pref{eq:bobw_stochastic_second}, \pref{eq:bobw_stochastic_third} and \pref{eq:bobw_stochastic_fourth} in \pref{eq: bobw_stochastic_decop} and combining the bounds for $\heartsuit$ and $\clubsuit$, we have

\begin{align*}
    &\max_{i \in V}\max_{k \in [K]} \mathbb{E} \left[\sum_{\tau=1}^{T/B}\left\langle \bp'_{\tau}(i)-\be_k, \bz_{\tau+1}^B(i)\right\rangle\right] \\& \qquad \qquad \qquad \qquad\leq \order \left( \frac{B}{N} \ln \left(\frac{T}{B}\right) \sum_{k \neq k^{\star}} \frac{1}{\delta(k)} + \sum_{k \neq k^{\star}} \frac{B^2}{\delta(k)\log K}  + B^2\log K\right) 
\end{align*}
Combining \pref{lem:delay_reduction} with the above inequality finishes the proof.

\end{proof}

\section{Omitted Proof Details for Distributed Linear Bandit}
\label{app: Omitted Proof Details for Linear Bandit}

\subsection{Omitted Algorithm Description}\label{app: alg_base_linear}
In this section, we provide the omitted details regarding the description of the base algorithm $\calB$ in \pref{thm:mainalb}. We utilize an instance of FTRL that first reconstructs the estimated loss for every original action $\ba_k \in \Omega$ using the spanner decomposition coefficients $\blambda^{(k)}$, and subsequently performs a standard FTRL update with entropy regularization.
\begin{algorithm}[ht]
   \caption{FTRL for Linear Bandits}
   \label{alg: ftrl-linear}
    \textbf{Input:} Regularizer $\psi$, $K$-sized action set $\Omega$, volumetric spanner $\calS\subset [K]$ of $\Omega$ 

    \textbf{Initialize:} $\bq_1 = \frac{1}{K}\bm{1}\in \Delta(K)$.
    
    \For{$t =1,\cdots T$}{
       Output $\bq_t$ and receive $\bz_t\in \R^{|\calS|}$
       
       Construct $\bztilde_t \in \R^K$ where $\ztilde_t(k) = \sum_{j=1}^ {|\calS|}\lambda^{(k)}(j)z_t(j)$ for all $k\in [K]$
       
       Update ${\displaystyle \bq_{t+1} = \argmin_{\bq\in \Delta(K)} \sum_{\tau\leq t}\inner{\bztilde_\tau, \bq} + \psi(\bq) }$
   }
\end{algorithm}

\subsection{Omitted Proof Details for \pref{lem: cons_linear}}
\label{app: lemcons_linear}
We begin by proving a consensus error bound for \pref{alg: black-box-linear bandit}, analogous to \pref{lem: cons} in the MAB setting. Specifically, we first demonstrate that the local gossip vector at the end of each block concentrates around the network-wide averaged loss defined on the volumetric spanners. This result then implies a corresponding concentration for the reconstructed losses of the original actions.

\begin{restatable}{rlemma}{apprerrorlinear}
\label{lem: cons_linear}
Assume all agents $i \in V$ run \pref{alg: black-box-linear bandit} with an arbitrary linear bandits algorithm \alg.
Then, under the same assumptions as \pref{thm:mainalb},
\begin{equation}
    {\max_{i \in V} \max_{\tau \in [T/B]} \left\|\bz_{\tau}^B(i)-\barbzS_{\tau} \right\|_2 \leq \frac{2}{T^2K^3}\;,}
\end{equation}
where $\barzS_{\tau}(k) = \frac{1}{N}\sum_{i=1}^N \sum_{t\in\mathcal{T}_{\tau-1}}\inner{\bb_k, \bthetahat_t(i)}$ for all $ k \in [|\calS|]$. Moreover, we also have
\begin{equation*}
    \max_{i \in V} \max_{\tau \in [T/B]} \left\|\wt\bz_{\tau}^B(i)-\barbz_{\tau}\right\|_2 \leq \frac{1}{K^{1.5}T^2}\;,
\end{equation*}
where we recall that $\wtz_{\tau}^B(i,k) = \sum_{j=1}^{|\calS|} \lambda^{(k)}(j)\, z_{\tau}^B(i,j)$ and with an abuse of notation, we define
\begin{align}\label{eqn:barz_linear}
    \barz_{\tau}(k)  &\triangleq \sum_{j=1}^{|\calS|}\lambda^{(k)}(j) \barzS_{\tau}(j),
\end{align}
for all $k \in [K]$.
\end{restatable}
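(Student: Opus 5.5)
We will follow the proof of \pref{lem: cons} almost verbatim, with the importance-weighted norm bound replaced by a bound on the spanner-projected estimates. Fix $\tau\ge 2$ (for $\tau=1$ both sides vanish) and stack $X_\tau^b\triangleq[\bz_\tau^b(1)^\top,\ldots,\bz_\tau^b(N)^\top]\in\R^{N\times|\calS|}$, with $\barX_\tau$ the matrix whose rows all equal $\barbzS_\tau$. The gossip update in \pref{alg: black-box-linear bandit} is exactly $X_\tau^{b+1}=(1+\kappa)WX_\tau^b-\kappa X_\tau^{b-1}$ with $X_\tau^{-1}=X_\tau^0$. Since $W$ is doubly stochastic, the column average of $X_\tau^0$ is $\barbzS_\tau$. \pref{lem: acc_gossip_ye} and the choice of $B$ in \pref{eqn: block} then give, by the same computation as \pref{ineq: goss_alge},
\[
\|\bz_\tau^B(i)-\barbzS_\tau\|_2\le\|X_\tau^B-\barX_\tau\|_F\le\frac{2\|X_\tau^0\|_F}{(KT)^6\sqrt N}.
\]
This step is independent of the dimension of the messages.

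The only new ingredient is a uniform bound on the entries of $\bellhat_t(i)$. We invoke \pref{lem:bounded_volumetric_l}: the $\beta/|\calS|$ exploration on the spanner makes $M_\tau(i)\succeq\frac{\beta}{|\calS|}\sum_{j}\bb_j\bb_j^\top$. Hence, for any $\ba\in\Omega$ (in particular for $\bb_k\in\calS\subseteq\Omega$),
\[
|\langle\ba,\bthetahat_t(i)\rangle|\le |\ell_t|\,\ba^\top M_\tau(i)^{-1}\ba_{A_t(i)}\le |\calS|/\beta.
\]
This uses Cauchy--Schwarz in the $M^{-1}$ norm together with $\ba^\top(\sum_j\bb_j\bb_j^\top)^{-1}\ba\le\|\blambda\|_2^2\le 1$, which follows from the volumetric spanner property. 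Consequently $\|\bellhat_t(i)\|_2\le |\calS|^{3/2}/\beta$.

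With $\beta=3Bd\eta$, $\eta\ge\sqrt{\log K/(dTB+dT/N)}$ and $|\calS|\le 3d$, this quantity is polynomial in $d,T$. We also need $d\le K$, which holds since $\Omega$ can be assumed to span $\R^d$. Therefore $\|X_\tau^0\|_F\le\sqrt N\,B\,|\calS|^{3/2}/\beta$ is at most $\mathrm{poly}(K,T)\sqrt N$. Dividing by $(KT)^6\sqrt N$ yields the first claim, $\le 2/(T^2K^3)$. The main obstacle is this bookkeeping: the lower bound on $\beta$ must be fine enough that $B|\calS|^{3/2}/\beta\le K^3T^4$. We will verify this using $\beta\ge 3Bd\sqrt{\log K/(2dTB)}\gtrsim\sqrt{dB/T}$, which gives $B|\calS|^{3/2}/\beta\lesssim d\sqrt{BT}\le KT$, well within the budget.

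For the second claim, note that $\wtz_\tau^B(i,k)-\barz_\tau(k)=\langle\blambda^{(k)},\bz_\tau^B(i)-\barbzS_\tau\rangle$ by linearity and \pref{eqn:barz_linear}. By Cauchy--Schwarz and $\|\blambda^{(k)}\|_2\le1$, each coordinate is bounded by $2/(T^2K^3)$. Summing the squares over the $K$ coordinates gives
\[
\|\wt\bz_\tau^B(i)-\barbz_\tau\|_2\le\sqrt K\cdot\frac{2}{T^2K^3}\le\frac{1}{K^{1.5}T^2}
\]
for $K\ge2$, adjusting constants in the first bound if needed.
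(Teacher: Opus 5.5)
Your proposal is correct and follows essentially the same route as the paper. It uses the accelerated-gossip contraction of \pref{lem: acc_gossip_ye} with the block length from \pref{eqn: block}, then the entrywise bound $|\calS|/\beta$ from \pref{lem:bounded_volumetric_l}, and then Cauchy--Schwarz with $\|\blambda^{(k)}\|_2\le 1$ plus a $\sqrt{K}$ factor for the reconstructed vector; your $|\calS|^{3/2}/\beta$ bookkeeping is even slightly more careful than the paper's.

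One small slip: $\eta$ is a minimum, so you only have $\eta\le\sqrt{\log K/(dTB+dT/N)}$, not $\ge$. Your lower bound on $\beta$ therefore needs the extra case $\eta=\frac{1}{6Bd}$. That case is trivial: there $\beta=\frac{1}{2}$, and the required inequality $B|\calS|^{3/2}/\beta\le K^3T^4$ holds a fortiori.
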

\begin{proof}
Following the analysis of \pref{lem: cons} and \pref{eq: gossip_inequality}, we know that
\begin{equation*}
    \left\|\bz_{\tau}^B(i)-\barbzS_{\tau}\right\|_2 \leq  \frac{2 \sqrt{\sum_{i=1}^N\left\|\sum_{t \in \mathcal{T}_{\tau-1}}\ellhat_t (i)\right\|_2^2}}{T^6 K^6 \sqrt{N}}~.
\end{equation*}
According to  \pref{lem:bounded_volumetric_l}, we have 
\begin{equation}
\ellhat(i,k) =\left|\inner{\bb_k, \bthetahat_{t}(i)} \right| \leq {\frac{|\calS|}{\beta}} \qquad \text{for all } k \in [|\calS|].
\end{equation}
Using {$|\calS| \leq  K$}, we have 
\begin{equation*}
    \left\|\bz_\tau^B(i)-\barbzS_\tau\right\|_2 \leq {\frac{2 BK}{T^6 K^6 \beta} \leq \frac{2 B}{T^6 K^5 \beta} \leq  \frac{2}{T^2K^3}.}
\end{equation*}
For the last inequality, it suffices to show that
$\beta\ge \frac{B}{2T^4K^2}$.
Since we pick $\beta=3Bd\eta$ and $\eta = \min\Big\{\frac{1}{6Bd},\sqrt{\log K/(dTB+\frac{dT}{N}})\Big\}$, using $d\le K$, $B\le T$, $N\ge 1$, and $K\ge 2$ (so $\log K\ge \log 2$), we have
{
\[
\beta
= 3Bd\sqrt{\frac{\log K}{dTB+\frac{dT}{N}}}
\ge 3Bd\sqrt{\frac{\log 2}{KT^2+TK}}.
\]}
Moreover, for $T\ge 2$ and $K\ge 2$, we have {$\sqrt{\frac{\log 2}{KT^2+TK}}\ge \frac{\log 2}{2T^2K}$}, hence {$\beta\ge \frac{3Bd\log 2 }{2T^2K} \ge \frac{B}{2T^4K^2}$.}

Next, we bound the reconstruction error on the original action space. Recall from \pref{alg: ftrl-linear} that
\begin{equation*}
    \wtz_{\tau}^B(i, k)
    = \sum_{j=1}^{|\mathcal{S}|} \lambda^{(k)}(j)\, z_{\tau}^B(i,j),
    \qquad \forall k \in [K]~.
\end{equation*}
Note that $\wt\bz_{\tau}^B(i) \in \R^K$ while $\bz_{\tau}^B(i) \in \R^{|\calS|}$. Then, for each $k \in [K]$,
\begin{align}
    \barz_{\tau}(k)
    &=\sum_{j=1}^{|\calS|} \lambda^{(k)}(j)\,\barzS_{\tau}(j)
    \notag\\
    &= \sum_{j=1}^{|\calS|} \lambda^{(k)}(j)
    \left(\frac{1}{N}\sum_{i=1}^N\sum_{t\in\mathcal{T}_{\tau-1}}\inner{\bb_j,\bthetahat_t(i)}\right)
    \notag\\
    &= \frac{1}{N}\sum_{i=1}^N\sum_{t\in\mathcal{T}_{\tau-1}}\inner{\sum_{j=1}^{|\calS|}\lambda^{(k)}(j)\bb_j,\bthetahat_t(i)}
    \notag\\
    &= \frac{1}{N}\sum_{i=1}^N\sum_{t\in\mathcal{T}_{\tau-1}}\inner{\ba_k,\bthetahat_t(i)}.\label{eqn:barz_lienar_alter}
\end{align}
Consequently, for any $k \in [K]$,
\begin{align*}
   \left|\barz_{\tau}(k) - \wtz_{\tau}^B(i, k) \right|
   &= \left|\sum_{j=1}^{|\calS|} \lambda^{(k)}(j) \barzS_{\tau}(j)-\sum_{j=1}^{|\calS|} \lambda^{(k)}(j) z_{\tau}^B(i,j) \right|
   \\&= \left|\sum_{j=1}^{|\calS|} \lambda^{(k)}(j) \left(\barzS_{\tau}(j) - z_{\tau}^B(i,j)\right)\right|
   \\&= \left|\inner{\blambda^{(k)}, \barbzS_{\tau} - \bz_{\tau}^B(i)}\right|
   \\&\le \|\blambda^{(k)}\|_2\,\left\|\barbzS_{\tau} - \bz_{\tau}^B(i)\right\|_2
   \\&\le \left\|\barbzS_{\tau} - \bz_{\tau}^B(i)\right\|_2
   \le {\frac{2}{T^2K^3}~,}
\end{align*}
where we used Cauchy--Schwarz and the volumetric spanner property $\|\blambda^{(k)}\|_2\le 1$.
Therefore,
\begin{equation*}
    \left\|\barbz_{\tau} - \wt\bz_{\tau}^B(i) \right\|_2
    \le \sqrt{K}\cdot \max_{k\in[K]}\left|\barz_{\tau}(k)-\wtz_{\tau}^B(i,k)\right|
    \le {\frac{2\sqrt{K}}{T^2K^3}
    = \frac{2}{T^2K^{5/2}}
    \le \frac{1}{K^{1.5}T^2}\;,}
\end{equation*}
where the last inequality holds by using $K \geq 2$.
\end{proof}

\subsection{Omitted Proof Details for \pref{lem:delay_reduction_l}}
\label{app:delay_reduction_l}
In this subsection, we provide the proof of \pref{lem:delay_reduction_l}, which is the analog of \pref{lem:delay_reduction} in the MAB setting.

\begin{restatable}{rlemma}{delayreductionl}\label{lem:delay_reduction_l}
Assume all agents $i \in V$ run \pref{alg: black-box-linear bandit} with a delayed linear bandits algorithm \alg\ whose predictions $\bp'_1(i),\ldots,\bp'_{T/B}(i)$ for each agent $i \in V$ satisfy
\begin{equation}
\label{eq:delayed_guarantee_l}
\max_{i\in V} \max_{k\in[K]} \E\left[\sum_{\tau=1}^{T/B}\left\langle \bp'_\tau(i)- \be_k, \wtbz_{\tau+1}^{B}(i)\right\rangle\right]
\le \Regdel
\end{equation}
for some $\Regdel>0$.
Then, under the same assumptions as \pref{thm:mainalb}, the agents' regret satisfies
\[
    \max_{i\in V} \Reg_T(i)
\le
    \Regdel
+
    {6\sqrt{dBT\log K}} + 3~.
\]
\end{restatable}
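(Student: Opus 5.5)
Mirroring the proof of \pref{lem:delay_reduction}, we start from the exact decomposition of the per-agent regret. Write $\bp_\tau(i)=(1-\alpha-\beta)\bp'_\tau(i)+\alpha\bq+\beta\bq_{\calS}$ with $\bq=\frac1K\bone$ and $\bq_{\calS}=\frac{1}{|\calS|}\bone_{\calS}$. Since $\bthetahat_t(i)$ is unbiased for $\btheta_t(i)$ conditionally on the start of block $\tau$ (because $M_\tau(i)$ is invertible thanks to the spanner exploration), \pref{eqn:barz_lienar_alter} gives $\E[\barz_{\tau+1}(k)\mid\calF_\tau]=\sum_{t\in\calT_\tau}\ellbar_t(k)$. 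Fix the maximizing comparator $k$. Then
\begin{align*}
\Reg_T(i)=\E\Big[\sum_{\tau}\big\langle \bp_\tau(i)-\be_k,\barbz_{\tau+1}\big\rangle\Big].
\end{align*}
We split this into four pieces:
\begin{itemize}
\item $(1-\alpha-\beta)\E[\sum_\tau\langle \bp'_\tau(i)-\be_k,\wtbz^B_{\tau+1}(i)\rangle]$, which is at most $(1-\alpha-\beta)\Regdel$;
\item the gossip/reconstruction error $(1-\alpha-\beta)\E[\sum_\tau\langle \bp'_\tau(i)-\be_k,\barbz_{\tau+1}-\wtbz^B_{\tau+1}(i)\rangle]$;
\item the uniform-exploration cost $\alpha\E[\sum_\tau\langle\bq-\be_k,\barbz_{\tau+1}\rangle]$;
\item the spanner-exploration cost $\beta\E[\sum_\tau\langle\bq_{\calS}-\be_k,\barbz_{\tau+1}\rangle]$.
\end{itemize}

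Because $\Regdel$ may only be assumed nonnegative up to the small scaling issue, we write $(1-\alpha-\beta)\Regdel\le\Regdel$. This uses $\Regdel>0$ and $\alpha+\beta\le 1$; the latter holds since $\beta=3Bd\eta\le 1/2$ by $\eta\le\frac{1}{6Bd}$.

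For the gossip term we use Cauchy--Schwarz with $\|\bp'_\tau(i)-\be_k\|_2\le\sqrt2$ and the second bound of \pref{lem: cons_linear}, $\|\wtbz^B_{\tau+1}(i)-\barbz_{\tau+1}\|_2\le K^{-1.5}T^{-2}$. Summing over $T/B$ blocks gives at most $\sqrt2/(BK^{1.5}T)\le 1$.

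The two exploration terms are handled differently from the MAB case, since linear losses lie in $[-1,1]$ and are not nonnegative. Conditioning on $\calF_\tau$, each coordinate of $\E[\barbz_{\tau+1}\mid\calF_\tau]$ equals $\sum_{t\in\calT_\tau}\ellbar_t(\cdot)\in[-B,B]$. Hence $\langle \bq'-\be_k,\E[\barbz_{\tau+1}\mid\calF_\tau]\rangle\le 2B$ for any distribution $\bq'$. The uniform term is therefore at most $2\alpha B\cdot T/B=2$. The spanner term is at most $2\beta T=6BdT\eta$. Plugging in $\eta\le\sqrt{\log K/(dTB)}$ gives $6BdT\sqrt{\log K/(dTB)}=6\sqrt{dBT\log K}$, which matches the stated bound. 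The total additive constant is $1+2=3$.

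The main obstacle is making sure the conditional-unbiasedness step is legitimate. We need $M_\tau(i)$ invertible, so $\Omega$ must span $\R^d$ (the spanner guarantees this via the $\beta$ mixture). We also need $\bp_\tau(i)$ to be $\calF_\tau$-measurable, so that the tower rule can replace $\barbz_{\tau+1}$ by its conditional mean inside inner products with $\bp_\tau(i)$, $\bq$ and $\bq_{\calS}$. The other thing to check is that the constants in the gossip error, which \pref{lem: cons_linear} provides, indeed sum to at most $1$; this is immediate given $B\ge1$.
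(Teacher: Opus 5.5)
Your proposal is correct and follows the paper's proof essentially step for step. It uses the same four-term decomposition of $\bp_\tau(i)=(1-\alpha-\beta)\bp'_\tau(i)+\frac{\alpha}{K}\bone+\frac{\beta}{|\calS|}\bone_{\calS}$, Cauchy--Schwarz with the reconstruction bound of \pref{lem: cons_linear} for the gossip term, and the $[-B,B]$ range of the expected block losses to get $2\alpha T+2\beta T\le 2+6\sqrt{dBT\log K}$; the only cosmetic differences are that you condition on $\calF_\tau$ where the paper uses plain linearity of expectation (equivalent here because $\bq$, $\bq_{\calS}$ and $\be_k$ are deterministic), and that you keep the slightly sharper $\sqrt{2}/(BK^{1.5}T)$ for the gossip sum.
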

\begin{proof}
~Recall that \pref{alg: black-box-linear bandit} uses
$\bp_\tau(i)=(1-\alpha-\beta)\bp'_\tau(i)+ \frac{\alpha}{K}\mathbf{1}+ \frac{\beta}{|\calS|}\mathbf{1}_{\calS}$,
where $\bp'_\tau(i)\in\Delta(K)$ is the distribution output by \alg.
Fix an agent $i\in V$, and let
\[
    k \in \argmax_{k' \in [K]} \E\left[\sum_{\tau=1}^{T/B}\left\langle \be_{k'}, \sum_{t \in \mathcal{T}_{\tau}} \bellbar_t \right\rangle\right].
\]
Then we can write
\begin{align}
\Reg_T(i)
&= \E\left[\sum_{\tau=1}^{T/B}\left\langle \bp_\tau(i)-\be_k,  \sum_{t \in \mathcal{T}_{\tau}} \bellbar_t \right\rangle\right] \notag\\
&= \E\left[\sum_{\tau=1}^{T/B}\left\langle \bp_\tau(i)-\be_k,\barbz_{\tau+1}\right\rangle\right] \tag{by \pref{eqn:barz}} \\
&=
    \E\left[\sum_{\tau=1}^{T/B}\left\langle \left(1-\alpha-\beta\right)\bp'_\tau(i)+\frac{\alpha}{K}\mathbf{1}+ \frac{\beta}{|\calS|}\mathbf{1}_{\calS}-\be_k,\wtbz_{\tau+1}^B(i)+\barbz_{\tau+1}-\wtbz_{\tau+1}^B(i)\right\rangle\right]
\notag\\
&= \left(1-\alpha-\beta\right)\E\left[\sum_{\tau=1}^{T/B}\left\langle \bp'_\tau(i)-\be_k,\wtbz_{\tau+1}^B(i)\right\rangle\right] \notag
\\& \quad + \left(1-\alpha-\beta\right)\E\left[\sum_{\tau=1}^{T/B}\left\langle \bp'_\tau(i)-\be_k,\barbz_{\tau+1}-\wtbz_{\tau+1}^B(i)\right\rangle\right] \notag
\\& \quad + \alpha\E\left[\sum_{\tau=1}^{T/B}\left \langle \frac{1}{K}\mathbf{1}-\be_k,\barbz_{\tau+1}\right\rangle\right]\notag
\\& \quad + \beta\E\left[\sum_{\tau=1}^{T/B}\left \langle \frac{1}{|\calS|}\mathbf{1}_{\calS}-\be_k,\barbz_{\tau+1}\right\rangle\right] \notag
\\&\le \Regdel \tag{by \pref{eq:delayed_guarantee_l}}
\\& \quad + \left(1-\alpha-\beta\right)\E\left[\sum_{\tau=1}^{T/B}\left\langle \bp'_\tau(i)-\be_k,\barbz_{\tau+1}-\wtbz_{\tau+1}^B(i)\right\rangle\right] \label{eq:second-term-l}
\\& \quad + \alpha\E\left[\sum_{\tau=1}^{T/B}\left \langle \frac{1}{K}\mathbf{1}-\be_k,\barbz_{\tau+1}\right\rangle\right] \label{eq:third-term-l}
\\& \quad + \beta\E\left[\sum_{\tau=1}^{T/B}\left \langle \frac{1}{|\calS|}\mathbf{1}_{\calS}-\be_k,\barbz_{\tau+1}\right\rangle\right]\;.
\label{eq:fourth-term-l}
\end{align}

\paragraph{Bounding~\pref{eq:second-term-l}.}
By the Cauchy--Schwarz inequality, for each block $\tau$,
\[
\left\langle \bp'_\tau(i)-\be_k,\barbz_{\tau+1}-\wtbz_{\tau+1}^{B}(i)\right\rangle
\le \big\|\bp'_\tau(i)-\be_k\big\|_2\;
\big\|\barbz_{\tau+1}-\wtbz_{\tau+1}^{B}(i)\big\|_2~.
\]
Since $\bp'_\tau(i)\in\Delta(K)$, we have
$\|\bp'_\tau(i)-\be_k\|_2 \le \sqrt{2}$.
Using \pref{lem: cons_linear},
\[
\left\|\barbz_{\tau+1}-\wtbz_{\tau+1}^{B}(i)\right\|_2 \le \frac{1}{K^{1.5}T^2}\leq \frac{1}{KT} .
\]
Therefore,
\[
\left\langle \bp'_\tau(i)-\be_k, \barbz_{\tau+1}-\wtbz_{\tau+1}^{B}(i)\right\rangle
\le \frac{\sqrt{2}}{KT}.
\]
Taking expectations and summing over $\tau=1,\dots,T/B$ yields
\begin{equation}\label{eq:disc_term_bound_l}
\left(1-\alpha -\beta\right)\E\left[\sum_{\tau=1}^{T/B}\left\langle \bp'_\tau(i)-\be_k,\barbz_{\tau+1}-\wtbz_{\tau+1}^B(i)\right\rangle\right]
\le \frac{T}{B} \cdot \frac{\sqrt{2}}{KT}
= \frac{\sqrt{2}}{BK} \le 1,
\end{equation}
where the last inequality uses $K \ge 2$ and $B\ge 1$.
\paragraph{Bounding~\pref{eq:third-term-l} and~\pref{eq:fourth-term-l}.}
Taking expectations and using unbiasedness of $\bthetahat_t(i)$ gives
\[
\E[\barz_{\tau+1}(k)]
=\frac{1}{N}\sum_{t\in\mathcal T_\tau}\sum_{i=1}^N \inner{\ba_k,\theta_t(i)}
=\frac{1}{N}\sum_{t\in\mathcal T_\tau}\sum_{i=1}^N \ell_t(i,k).
\]
Since $\ell_t(i,k)\in[-1,1]$ and $|\mathcal T_\tau|=B$, we have
\[
-B \le \E[\barz_{\tau+1}(k)] \le B,
\quad \forall k\in[K]~.
\]
Then, by linearity of expectation,
\[
\E\left[\inner{\frac{1}{K}\mathbf{1}- \be_k,\barbz_{\tau+1}}\right]
= \inner{\frac{1}{K}\mathbf{1}-\be_k,\E[\barbz_{\tau+1}]}
= \inner{\frac{1}{K}\mathbf{1},\E[\barbz_{\tau+1}]}-\inner{\be_k,\E[\barbz_{\tau+1}]}\;.
\]
Hence, we have
\[
\E\left[\inner{\frac{1}{K}\mathbf{1} - \be_k,\barbz_{\tau+1}}\right]\le 2B.
\]
The analysis for \pref{eq:fourth-term-l} is similar, and we obtain
\[
\E\left[\inner{\frac{1}{|\calS|}\mathbf{1}_{\calS} - \be_k,\barbz_{\tau+1}}\right]\le 2B.
\]

Combining the bounds for both terms, we know that
\begin{align}
\alpha\E\left[\sum_{\tau=1}^{T/B} \inner{\frac{1}{K}\mathbf{1}-\be_k,\barbz_{\tau+1}}\right] +
\beta\E\left[\sum_{\tau=1}^{T/B} \inner{\frac{1}{|\calS|}\mathbf{1}_{\calS}-\be_k,\barbz_{\tau+1}}\right]
&\leq 2\alpha T + 2\beta T \notag
\\&\le 2 + {6\sqrt{dBT\log K},}
\label{eq:smoothing_term_bound_l}
\end{align} 
where we used $\alpha=\frac{1}{T}$ and {$\beta=3Bd\eta = 3Bd\cdot\min\left\{\frac{1}{6Bd},\sqrt{\frac{\log K}{dTB+\frac{dT}{N}}}\right\}$} in the last inequality. Plugging \pref{eq:disc_term_bound_l} and \pref{eq:smoothing_term_bound_l}
into the decomposition finishes the proof.

\end{proof}

\subsection{Omitted Proof Details for~\pref{lem: delayftrlmabl}}
In this subsection, we prove \pref{lem: delayftrlmabl}, which adapts the argument of \pref{lem: delayftrlmab} from MAB setting to the linear bandits setting.
\label{app: delayftrlmabl}
\begin{restatable}{rlemma}{delayftrlmabl}
\label{lem: delayftrlmabl}
Under the same assumptions as in \pref{thm:mainalb}, \pref{alg: black-box-linear bandit} guarantees
\begin{align}\label{eqn:delayed_guarantee_l}
        \max_{i\in V}\max_{k\in[K]} \mathbb{E} \left[\sum_{\tau=1}^{M}\left\langle \bp'_{\tau}(i)-\be_k, \wtbz_{\tau+1}^B(i)\right\rangle\right] \leq \order\left(\sqrt{\log K\left(B+\frac{1}{N}\right)dT}+Bd\log K\right).
    \end{align}
\end{restatable}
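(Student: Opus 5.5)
We follow the template of the proof of \pref{lem: delayftrlmab}. We split the blocks into the parity classes $\calP_0,\calP_1$ and fix agent $i$ and comparator $k$. For $\calP_1$ we introduce the ghost FTRL iterate
\[
\barbp_{\pi_1(s)}=\argmin_{\bq\in\Delta(K)}\sum_{s'<s}\inner{\barbz_{\pi_1(s')+1},\bq}+\psi(\bq),
\]
with $\barbz_\tau$ the reconstructed global average from \pref{eqn:barz_linear}. We then decompose $\Reg_{\calP_1}(i)$ into $\clubsuit$, $\heartsuit$ and $\spadesuit$ exactly as in \pref{eq:regret_decomp_P1}, with $\wtbz^B_{\tau+1}(i)$ in place of $\bz^B_{\tau+1}(i)$.

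The terms $\clubsuit$ and $\heartsuit$ are handled as before.
\begin{itemize}
\item For $\heartsuit$, \pref{lem: cons_linear} gives $\|\wtbz^B_{\tau}(i)-\barbz_\tau\|_2\le 1/(K^{1.5}T^2)$. Summed over at most $T/B$ blocks, this is $\order(1)$.
\item For $\clubsuit$, we use \pref{lem:stab} with the $1$-strong convexity of negative entropy. This gives $\|\bp'_{\pi_1(s)}(i)-\barbp_{\pi_1(s)}\|_2\le \eta\, (T/B)/(K^{1.5}T^2)$.
\item We multiply this by $\|\barbz_{\pi_1(s)+1}\|_2$. By \pref{lem:bounded_volumetric_l}, each $|\inner{\ba_k,\bthetahat_t(i)}|\le|\calS|/\beta$, so $\|\barbz_{\tau}\|_2\le \sqrt K B|\calS|/\beta$.
\item With $\beta=3Bd\eta$ the product is $\order(\sqrt K/(KT))$ per block, hence $\order(1)$ in total. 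If a stray $\eta/\beta$ factor appears, it is harmless since $\eta/\beta=1/(3Bd)$.
\end{itemize}

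The same stability bound also gives the entrywise ratio bound of \pref{lem: stablity}: $\barp_{\pi_1(s)}(k)\le 3\,p_{\pi_1(s)}(i,k)$ up to the $(1-\alpha-\beta)$ factor. This holds because $\alpha/K=1/(KT)$ dominates the $\order(1/(KT))$ perturbation, and $1-\alpha-\beta\ge 1/3$ since $\beta\le 1/2$. The conclusion we need is the matrix comparison $\barM_s\triangleq\sum_j\barp_{\pi_1(s)}(j)\ba_j\ba_j^\top\preceq c\,M_{\pi_1(s)}(i)$ for every $i$, for a constant $c$.

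The main work is $\spadesuit$, the standard exponential-weights analysis on the reconstructed losses $\barz(k)=\frac1N\sum_{i,t}\inner{\ba_k,\bthetahat_t(i)}$. Because these losses can be negative, we use the FTRL bound
\[
\spadesuit\le \frac{\log K}{\eta}+\eta\sum_s\E\Big[\sum_k\barp(k)\barz(k)^2\Big],
\]
which is valid when $\eta|\barz(k)|\le 1$. This is exactly why $\beta=3Bd\eta$ was chosen: $\eta|\barz(k)|\le\eta B|\calS|/\beta\le 1$ using $|\calS|\le 3d$. For the variance, we condition on $\calF_{\pi_1(s)}$ and split $\E[\barz(k)^2]$ into the squared mean, which is at most $B^2$, plus the variance. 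By independence across $(i,t)$, the variance is at most
\[
\frac1{N^2}\sum_{i,t}\ba_k^\top M_{\pi_1(s)}(i)^{-1}\ba_k .
\]
Weighting by $\barp(k)$ and summing over $k$ gives
\[
\frac1{N^2}\sum_{i,t}\tr\big(\barM_s M(i)^{-1}\big)\le \frac{cBd}{N},
\]
by the matrix comparison above. Hence
\[
\spadesuit\le\frac{\log K}{\eta}+\order\Big(\eta\,\frac{T}{B}\big(B^2+\tfrac{Bd}{N}\big)\Big)=\frac{\log K}{\eta}+\order\Big(\eta\big(TB+\tfrac{dT}{N}\big)\Big).
\]
Summing both parities and plugging in $\eta=\min\{1/(6Bd),\sqrt{\log K/(dTB+dT/N)}\}$ gives the claimed $\order(\sqrt{\log K(B+1/N)dT}+Bd\log K)$. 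The second term comes from the case where the cap $1/(6Bd)$ binds.

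The step I expect to be delicate is the matrix domination $\barM_s\preceq cM(i)$. An entrywise ratio bound on the probabilities implies it directly, since both matrices are nonnegative combinations of the same rank-one matrices $\ba_j\ba_j^\top$. I would therefore prove the entrywise ratio first, re-running the argument of \pref{lem: stablity} with the extra $(1-\alpha-\beta)$ factor.
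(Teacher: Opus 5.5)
Your proposal is correct and follows the paper's proof essentially step for step: the parity split, the ghost FTRL iterate, the $\clubsuit/\heartsuit/\spadesuit$ decomposition, $\eta|\barz(k)|\le 1$ from $\beta=3Bd\eta$, and the trace bound via $\barM_s\preceq c\,M_{\pi_1(s)}(i)$ obtained from the entrywise ratio bound. One small slip does not matter: with $\alpha=1/T$, $T\ge 3$ and $\beta\le 1/2$ you only get $1-\alpha-\beta\ge 1/6$, not $1/3$, so the constant is $c=6$ as in \pref{lem: stablity_l}, and nothing else changes.
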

\begin{proof}
~The analysis is similar to \pref{lem: delayftrlmab}; therefore, we use the same notation, including the parity sets ($\calP_0$ and $\calP_1$) and the associated index maps ($\pi_0(s)$ and $\pi_1(s)$).
Fix an agent $i \in V$ and $k\in[K]$. We decompose the regret as follows:
\begin{align*}
&\mathbb{E}\left[\sum_{\tau=1}^{M}\Big\langle \bp'_{\tau}(i)-\be_k, \wtbz_{\tau+1}^B(i)\Big\rangle\right]\notag\\
&=\underbrace{\mathbb{E}\left[\sum_{s=1}^{m_0}\left\langle \bp'_{\pi_0(s)}(i)-\be_k, \wtbz_{\pi_0(s)+1}^B(i)\right\rangle\right]}_{\triangleq \Reg_{\mathcal{P}_0}(i)}
+\underbrace{\mathbb{E} \left[\sum_{s=1}^{m_1}\left\langle \bp'_{\pi_1(s)}(i)-\be_k, \wtbz_{\pi_1(s)+1}^B(i)\right\rangle\right]}_{\triangleq\Reg_{\mathcal{P}_1}(i)}.
\end{align*}
We now analyze $\Reg_{\mathcal{P}_1}(i)$; the analysis for $\Reg_{\mathcal{P}_0}(i)$ is analogous.
By the update rule of $\bq_t$ in \pref{alg: ftrl-linear},
\begin{align}
\label{eqn:p_prime_ftrl_l}
\bp'_{\pi_1(s)}(i)
= \argmin_{\bq \in \Delta(K)} \left\{\sum_{s'=1}^{s-1} \left \langle \wtbz_{\pi_1(s')+1}^B(i), \bq \right \rangle + \frac{1}{\eta} \sum_{k=1}^K q(k) \log \left(q(k)\right)\right\}.
\end{align}
Recall that $\barbz_\tau$ is defined in \pref{eqn:barz_linear} and derivation in \pref{eqn:barz_lienar_alter} shows that
\begin{align*}
\barbz_{\pi_1(s)+1} = \frac{1}{N}\sum_{i=1}^N \sum_{t\in\mathcal{T}_{\pi_1(s)}}\inner{\ba_k,\bthetahat_t(i)}.
\end{align*}
Define
\begin{align}
\label{eqn:p_bar_ftrl_l}
\barbp_{\pi_1(s)}
\triangleq  \argmin_{\bq \in \Delta(K)} \left\{\sum_{s'=1}^{s-1} \left \langle \barbz_{\pi_1(s')+1},\bq \right \rangle + \frac{1}{\eta} \sum_{k=1}^K q(k) \log \left(q(k)\right)\right\}~,
\end{align}
which is the strategy output by FTRL when fed with loss vectors $\{\barbz_{\pi_1(s')+1}\}_{s'\in[s-1]}$.
Similar to the regret decomposition in \pref{lem: delayftrlmab}, we have
\begin{align}\label{eqn:decompose_linear}
    \Reg_{\mathcal{P}_1}(i)
&=
    \E \left[\sum_{s=1}^{m_1}\inner{\bp'_{\pi_1(s)}(i) - \be_k, \wtbz_{\pi_1(s)+1}^B(i)}\right]
\notag
\\&=
    \underbrace{\E \left[\sum_{s=1}^{m_1} \left\langle \bp'_{\pi_1(s)}(i) - \be_k,\wtbz_{\pi_1(s)+1}^B(i) - \barbz_{\pi_1(s)+1} \right\rangle\right]}_{\heartsuit}
\nonumber
\\&\quad
    + \underbrace{\E \left[\sum_{s=1}^{m_1} \left\langle \bp'_{\pi_1(s)}(i) - \barbp_{\pi_1(s)}, \barbz_{\pi_1(s)+1} \right\rangle\right]}_{\clubsuit}
\notag
\\&
    \quad + \underbrace{\E \left[\sum_{s=1}^{m_1} \left\langle \barbp_{\pi_1(s)} - \be_k, \barbz_{\pi_1(s)+1} \right\rangle\right]}_{\spadesuit}~.
\end{align}

\paragraph{Bounding the term $\clubsuit$.}
By Cauchy--Schwarz inequality, for each $s\in[m_1]$,
\[
    \left\langle \bp'_{\pi_1(s)}(i) - \barbp_{\pi_1(s)},\barbz_{\pi_1(s)+1} \right\rangle
\le
    \| \bp'_{\pi_1(s)}(i) - \barbp_{\pi_1(s)} \|_2  \cdot \|\barbz_{\pi_1(s)+1} \|_2~.
\]
Since $\bp'_{\pi_1(s)}(i)$ and $\barbp_{\pi_1(s)}$ follow the update rule of \pref{eqn:p_prime_ftrl_l} and \pref{eqn:p_bar_ftrl_l} and $\psi(\bq)=\frac{1}{\eta}\sum_{k=1}^Kq(k)\log(q(k))$ is $\frac{1}{\eta}$-strongly convex w.r.t. $\ell_2$-norm, \pref{lem: stablity_l} yields that
\[
\| \bp'_{\pi_1(s)}(i) - \barbp_{\pi_1(s)} \|_2 \le \frac{\eta}{K^{1.5}T^2}~.
\]
Moreover, since $\left|\ellhat_t(i,k)\right| \leq \frac{|\calS|}{\beta}$ for all $k\in\left[|\calS|\right]$ by \pref{lem:bounded_volumetric_l}, we have $|\barz_{\pi_1(s)+1}(k)| \le \frac{B|\calS|}{\beta}$ for all $k\in[K]$.
Hence,
\[
    \|\barbz_{\pi_1(s)+1}\|_2
\le
    \frac{B|\calS|\sqrt{K}}{\beta}~.
\]
Therefore,
\begin{align}
    \clubsuit
\le
    \sum_{s=1}^{m_1}\E\Big[ \|\bp'_{\pi_1(s)}(i) - \barbp_{\pi_1(s)}\|_2 \cdot \|\barbz_{\pi_1(s)+1} \|_2 \Big]
\le
    {m_1 \cdot \frac{\eta}{K^{1.5}T^2}\cdot \frac{B|\calS|\sqrt{K}}{\beta}
\leq
    1}
\label{ineq: clubsuit_l}
\end{align}
where we use { $\beta = 3Bd\eta$, $|\calS| \leq 3d$} and $m_1\le T$.

\paragraph{Bounding the term $\heartsuit$.}
By Cauchy-Schwarz inequality, for each $s\in[m_1]$, we have
\[
    \inner{\bp'_{\pi_1(s)}(i) - \be_k, \wtbz_{\pi_1(s)+1}^B(i) - \barbz_{\pi_1(s)+1}}
\le
    \left\|\bp'_{\pi_1(s)}(i) - \be_k \right\|_2 \cdot \left\|\wtbz_{\pi_1(s)+1}^B(i) - \barbz_{\pi_1(s)+1} \right\|_2~.
\]
Since $\bp'_{\pi_1(s)}(i)\in\Delta(K)$, we have $\|\bp'_{\pi_1(s)}(i) - \be_k \|_2 \le \sqrt{2}$. Moreover, according to \pref{lem: cons_linear}, we know that
\[
    \left\|\wtbz_{\pi_1(s)+1}^B(i) - \barbz_{\pi_1(s)+1}\right\|_2
\le
    \frac{1}{K^{1.5}T^2}~.
\]
Therefore,
\begin{align}
    \heartsuit
=
    \E\left[\sum_{s=1}^{m_1} \left\langle \bp'_{\pi_1(s)}(i) - \be_k , \wtbz_{\pi_1(s)+1}^B(i) - \barbz_{\pi_1(s)+1} \right\rangle\right]
\le
    \frac{\sqrt{2} m_1}{K^{1.5}T^2}
\le 
    2~,
\label{ineq: heartsuit_l}
\end{align}
where the last inequality uses $m_1\le T$.

\paragraph{Bounding the term $\spadesuit$.}
Recall that $\barz_{\pi_1(s)+1}\in\R^K$ and for all $k\in[K]$,
\[
\barz_{\pi_1(s)+1}(k)=\frac{1}{N}\sum_{t\in\mathcal T_{\pi_1(s)}}\sum_{i=1}^N \inner{\ba_k,\bthetahat_t(i)}\;,
\]
where
\[
\bthetahat_{t}(i)=M_{\pi_1(s)}(i)^{-1}\ba_{A_t(i)}\inner{ \ba_{A_t(i)},\btheta_t(i)},
\quad
M_{\pi_1(s)}(i)=\sum_{k\in[K]} p_{\pi_1(s)}(i,k) \ba_k \ba_k^\top ~.
\]
Using \pref{lem:bounded_volumetric_l}, for all $k \in [K]$ we know that {$|\langle \ba_k, \bthetahat_t(i)\rangle| \leq \frac{|\calS|}{\beta}$}, where {$\beta = 3Bd\eta$}. Hence we have $\eta \barz_{\pi_1(s)+1}(k)\in [-1,1]$. Since $\barbp_{\pi_1(s)}$ follows the update rule shown in \pref{eqn:p_bar_ftrl_l}, using the standard analysis of FTRL (e.g. Theorem 5.2 in \citet{hazan2016introduction}), we can obtain that
\begin{align}
\spadesuit
&=\mathbb{E}\left[\sum_{s=1}^{m_1}\left\langle \barbp_{\pi_1(s)}-\be_k, \barbz_{\pi_1(s)+1}\right\rangle\right] \notag\\
&\leq \frac{\log K}{\eta } + 2\eta \sum_{s=1}^{m_1}
\E \left [\sum_{k=1}^K \barp_{\pi_1(s)}(k)
\left(\frac{1}{N}\sum_{t\in \mathcal{T}_{\pi_1(s)}} \sum_{i=1}^N \widehat \ell_t(i,k)\right)^2 \right].
\label{eq: exp3_analysis_linear}
\end{align}
Define $\calF_{\pi_1(s)}$ to be the filtration generated by all random variables revealed up to the beginning of $\pi_1(s)$.
Since $\barp_{\pi_1(s)}(k)$ is $\mathcal F_{\pi_1(s)}$-measurable, by the tower property,
\begin{align}
\E\left[\barp_{\pi_1(s)}(k) \barz_{\pi_1(s)+1}^2(k)\right]
&=\E\left[\E\left[\barp_{\pi_1(s)}(k) \barz_{\pi_1(s)+1}^2(k) \mid \mathcal F_{\pi_1(s)}\right]\right]\notag\\
&=\E\left[\barp_{\pi_1(s)}(k) \E\left[\barz_{\pi_1(s)+1}^2(k) \mid \mathcal F_{\pi_1(s)}\right]\right]~.
\label{eq:tower_step_linear}
\end{align}
Conditioned on $\mathcal F_{\pi_1(s)}$, for all $t \in \mathcal{T}_{\pi_1(s)}$ we have
\begin{equation}
\E\left[\bthetahat_t(i)\mid\mathcal F_{\pi_1(s)}\right]
= M_{\pi_1(s)}^{-1}(i)\E\left[\ba_{A_t(i)}\ba_{A_t(i)}^\top\mid\mathcal F_{\pi_1(s)}\right]\btheta_t(i)
=\btheta_t(i)\;,
\end{equation}
and therefore for all $k \in [K]$,
\begin{equation*}
    \E \left[\inner{\ba_k,\bthetahat_t(i)}\mid\mathcal F_{\pi_1(s)}\right]
=\langle \ba_k,\btheta_t(i)\rangle = \ell_t(i,k)~.
\end{equation*}
Moreover, since $| \inner{\ba_k,\btheta_t(i)}|\le 1$, we have
\begin{align}
\E\left[\inner{\ba_k,\bthetahat_t(i)}^2\mid \mathcal F_{\pi_1(s)}\right] \notag
&=\E\left[\inner{\ba_k, M_{\pi_1(s)}^{-1}(i)\ba_{A_t(i)}}^2\inner{ \ba_{A_t(i)},\btheta_t(i)}^2 \mid \mathcal F_{\pi_1(s)}\right] \\
&\le \E\left[\inner{ \ba_k, M_{\pi_1(s)}^{-1}(i)\ba_{A_t(i)}}^2 \mid \mathcal F_{\pi_1(s)}\right]\notag\\
&= \ba_k^\top M_{\pi_1(s)}^{-1}(i)
\E\left[\ba_{A_t(i)}\ba_{A_t(i)}^\top \mid \mathcal F_{\pi_1(s)}\right]
M_{\pi_1(s)}^{-1}(i) \ba_k \notag\\
&= \ba_k^\top M_{\pi_1(s)}^{-1}(i) \ba_k~.
\label{eq:linear_second_moment}
\end{align}
Using the conditional independence of $A_t(i)$ across $t\in \mathcal{T}_{\pi_t(s)}$ and $i \in V$, we are able to bound $\E\left[\barz_{\pi_1(s)+1}(k)^2 \mid \mathcal F_{\pi_1(s)}\right]$ as follows:
\begin{align}
&\E\left[\barz_{\pi_1(s)+1}(k)^2 \mid \mathcal F_{\pi_1(s)}\right]\nonumber\\
&=\frac{1}{N^2}\E\left[\left.\left(\sum_{t\in\mathcal T_{\pi_1(s)}}\sum_{i=1}^N \inner{\ba_k,\bthetahat_t(i)}\right)^2 \right\vert \mathcal F_{\pi_1(s)}\right] \notag\\
&=\frac{1}{N^2}\left(\E\left[\left.\sum_{t\in\mathcal T_{\pi_1(s)}}\sum_{i=1}^N \inner{\ba_k,\bthetahat_t(i)}\right\vert \mathcal F_{\pi_1(s)}\right]\right)^2 \notag\\
&\qquad+ \frac{1}{N^2}\left(\E\left[\left.\left(\sum_{t\in\mathcal T_{\pi_1(s)}}\sum_{i=1}^N \inner{\ba_k,\bthetahat_t(i)-\btheta_t(i)}\right)^2\right\vert \mathcal F_{\pi_1(s)}\right]\right) \notag\\
&=\frac{1}{N^2}\left(\sum_{t\in\mathcal T_{\pi_1(s)}}\sum_{i=1}^N \ell_t(i,k)\right)^2 + \frac{1}{N^2}\sum_{t\in\mathcal T_{\pi_1(s)}}\sum_{i=1}^N\left(\E\left[\left. \inner{\ba_k,\bthetahat_t(i)}^2\right\vert \mathcal F_{\pi_1(s)}\right]\right) \notag\\
&\le B^2
+\frac{1}{N^2}\sum_{t\in\mathcal T_{\pi_1(s)}}\sum_{i=1}^N \ba_k^\top M_{\pi_1(s)}^{-1}(i) \ba_k,
\label{eq:cond_second_moment_linear}
\end{align}
where the last inequality holds because  $|\mathcal T_{\pi_1(s)}| = B$ and  $|\ell_t(i,k)|\le 1$.
Combining \pref{eq:tower_step_linear}--\pref{eq:linear_second_moment} yields
\begin{align}
\E\left[\sum_{k = 1}^{K}\barp_{\pi_1(s)}(k)\barz_{\pi_1(s)+1}(k)\right]
&\le \E\left[\sum_{k = 1}^{K}\barp_{\pi_1(s)}(k)\left(
B^2 + \frac{1}{N^2}\sum_{t\in\mathcal T_{\pi_1(s)}}\sum_{i=1}^N \ba_k^\top M_{\pi_1(s)}^{-1}(i)\ba_k
\right)\right]\notag\\
&= B^2
+ \frac{1}{N^2}\sum_{t\in\mathcal T_{\pi_1(s)}}\sum_{i=1}^N
\E\left[\sum_{k = 1}^{K}\barp_{\pi_1(s)}(k)\ba_k^\top M_{\pi_1(s)}^{-1} (i)\ba_k\right]\notag\\
&= B^2
+ \frac{1}{N^2}\sum_{t\in\mathcal T_{\pi_1(s)}}\sum_{i=1}^N
\E\left[\tr\left(M_{\pi_1(s)}^{-1}(i)\barM_{\pi_1(s)}\right)\right],
\label{eq:trace_step_linear}
\end{align}
where $\barM_{\pi_1(s)} = \sum_{k=1 }^{K}\barp_{\pi_1(s)}(k)\ba_k\ba_k^\top$. Then, according to \pref{lem: stablity_l}, we know that
\[
\frac{\barp_{\pi_1(s)}(k)}{p_{\pi_1(s)}(i,k)}\le\frac{\barp_{\pi_1(s)}(k)}{(1-\alpha -\beta)p'_{\pi_1(s)}(i,k)+\frac{\alpha}{K} + \frac{\beta}{|\calS|}\ind \{\ba_k\in \calS\}  } \le  6
\qquad\text{for all }i\in[N],~k\in[K]~.
\]
Therefore,
\[
\barM_{\pi_1(s)}=\sum_{k=1}^{K}\barp_{\pi_1(s)}(k)\ba_k\ba_k^\top
\preceq 6\sum_{k=1}^Kp_{\pi_1(s)}\ba_k\ba_k^\top
=6M_{\pi_1(s)}(i),
\]
and hence 
\[
\tr\left(M_{\pi_1(s)}(i)^{-1}\barM_{\pi_1(s)}\right)
\le \tr\left(M_{\pi_1(s)}(i)^{-1}\cdot 6M_{\pi_1(s)}(i)\right)
=6\tr(I_d)=6d.
\]
Plugging this into \pref{eq:trace_step_linear} gives
\begin{align}
\E\left[\sum_{k=1}^{K}\barp_{\pi_1(s)}(k)\barz_{\pi_1(s)+1}(k)^2\right]
&\le B^2 + \frac{1}{N^2}\sum_{t\in\mathcal T_{\pi_1(s)}}\sum_{i=1}^N 6d
\leq B^2 + \frac{6B d}{N}.
\label{eq:block_second_moment_bound_linear}
\end{align}
Substituting \pref{ineq: clubsuit_l}, and \pref{ineq: heartsuit_l}, \pref{eq: exp3_analysis_linear} and \pref{eq:block_second_moment_bound_linear}
into the decomposition \pref{eqn:decompose_linear}, we obtain
\begin{align*}
    \Reg_{\mathcal{P}_1(i)} \leq \frac{\log K}{\eta }  + 2\eta \left(m_1B^2 + \frac{6KBm_1}{N}\right) + 3~.
\end{align*}
The analysis for $\Reg_{\mathcal{P}_2}(i)$ is identical. Summing the bounds over the two parity
subsequences and using $m_1+m_2= T/B$ yields
\begin{align*}   \mathbb{E}\left[\sum_{\tau=1}^{M}\Big\langle \bp'_{\tau}(i) - \be_k, \bz_{\tau+1}^B(i)\Big\rangle\right]
&\le
    \frac{2\log K}{\eta} + 2\eta\left(TB+\frac{6dT}{N}\right) + 6
\notag
\\&\le
\frac{2\log K}{\eta} + 12\eta\left(TB+\frac{dT}{N}\right) + 6 \notag
\\&\le
    {\order\left(\sqrt{\log K\left(B+\frac{1}{N}\right)dT}+dB\log K\right)}
\end{align*}
where the last inequality follows by choosing {
\[
\eta= \min\left\{\frac{1}{6Bd},\sqrt{\frac{\log K}{dTB+\frac{dT}{N}}}\right\}.
\]}
\end{proof}


\subsection{Omitted Proof Details for \pref{lem: stablity_l}}
\begin{rlemma}
\label{lem: stablity_l}
Let $\alg$ be \pref{alg: bold} and $\mathcal{B}$ be an instance of \pref{alg: ftrl-linear} with a regularizer that is $\frac{1}{\eta}$-strongly convex in $\ell_2$-norm. Suppose that each agent uses \pref{alg: black-box-linear bandit} with $\kappa$ and $B$ defined in \pref{eqn: block}. Define $\barq_s^{(1)}$ and $q_s^{(1)}(i)$ for all $i\in[N]$
\begin{align*}
    \bq_s^{(1)}(i)&= \argmin_{\bq \in \Delta(K)} \left\{\sum_{s'=1}^{s-1} \left \langle \wtz_{\pi_1(s')+1}^B(i,\cdot), \bq \right \rangle +  \psi(\bq)\right\},\\
    \barbq_s^{(1)} &= \argmin_{q \in \Delta(K)} \left\{\sum_{s'=1}^{s-1} \left \langle \barbz_{\pi_1(s')+1}, q \right \rangle +  \psi(\bq)\right\},
\end{align*}
where $\barbz_\tau$ is defined in \pref{eqn:barz} and $\pi_1$ is defined in \pref{eqn: pi}. Then we have
\begin{equation}
    \|\barbq_{s}^{(1)} - \bq_{s}^{(1)}(i)\|_2 \leq \frac{\eta}{K^{1.5}T^2} \qquad \text{for all } i \in [N].
\end{equation}
    and 
    \begin{equation*}
        \frac{\barq_{s}^{(1)}(k)}{(1-\alpha -\beta)q_{s}^{(1)}(i,k)+\frac{\alpha}{K} + \frac{\beta}{|\calS|}\ind \{\ba_k \in \calS\}  }\leq 6 \quad \text{for all } i \in [N] \text{ and } k \in [K],
    \end{equation*}
    where $
    \alpha = \frac{1}{T}
$
and 
$
    \beta= 
    {3Bd\eta}\leq \frac{1}{2}$.
\end{rlemma}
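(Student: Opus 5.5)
The plan mirrors the proof of \pref{lem: stablity}, with \pref{lem: cons_linear} replacing \pref{lem: cons}. For the first claim, both $\bq_s^{(1)}(i)$ and $\barbq_s^{(1)}$ are argmin maps over $\Delta(K)$ of a linear term plus $\psi$. Since $\psi$ is $\frac{1}{\eta}$-strongly convex with respect to $\|\cdot\|_2$, \pref{lem:stab} (applied with $\mu=1/\eta$) gives
\[
\|\barbq_s^{(1)}-\bq_s^{(1)}(i)\|_2\le \eta\Big\|\sum_{s'=1}^{s-1}\big(\barbz_{\pi_1(s')+1}-\wtbz^B_{\pi_1(s')+1}(i)\big)\Big\|_2 .
\]
Here $\barbz_\tau$ should be read as the reconstructed vector of \pref{eqn:barz_linear}, which is how it enters \pref{alg: ftrl-linear}. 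By the triangle inequality and the second bound of \pref{lem: cons_linear}, each summand is at most $\frac{1}{K^{1.5}T^2}$. This gives a total of at most $\frac{s\,\eta}{K^{1.5}T^2}$ with $s\le T/B$. To obtain the stated $\frac{\eta}{K^{1.5}T^2}$ exactly, I would sharpen the per-block consensus error. The proof of \pref{lem: cons_linear} yields $\|\bz^B_\tau(i)-\barbzS_\tau\|_2\le \frac{2B}{T^6K^5\beta}$ before being loosened, so the per-block error is really $O(\frac{B}{T^6K^{4.5}\beta})$. Summing over at most $T$ blocks and using $\beta\ge \frac{B}{2T^4K^2}$ (shown in that proof) leaves enough slack. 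The main bookkeeping risk is exactly these polynomial factors; if they fall short I would enlarge the exponent in $B$'s logarithm, which only affects constants.

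For the second claim, I would use $\|\cdot\|_\infty\le\|\cdot\|_2$ to get $\barq^{(1)}_s(k)\le q^{(1)}_s(i,k)+\epsilon$ with $\epsilon=\frac{\eta}{K^{1.5}T^2}$. Writing $D=(1-\alpha-\beta)q+\frac{\alpha}{K}+\frac{\beta}{|\calS|}\ind\{\ba_k\in\calS\}$, it suffices to bound $\frac{q+\epsilon}{D}$. Since $\beta\le\frac12$ and $\alpha=\frac1T\le\frac13$ (for $T\ge3$), I would like $1-\alpha-\beta\ge\frac16$. The constraint $\eta\le\frac{1}{6Bd}$ actually gives $\beta\le\frac12$, hence $1-\alpha-\beta\ge \frac12-\frac13=\frac16$, so $\frac{q}{D}\le 6$ for the dominant part. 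The $\epsilon$ part needs $\epsilon\le c\,\frac{\alpha}{K}=\frac{c}{KT}$, which holds comfortably since $\eta\le 1$. Combining these as in \pref{lem: stablity} by splitting $\frac{q+\epsilon}{D}\le \frac{q}{(1-\alpha-\beta)q}\cdot\theta+\frac{\epsilon}{\alpha/K}(1-\theta)$, or more simply via $\frac{a+b}{c+d}\le\max\{\frac ac,\frac bd\}$, gives a bound of the form $\max\{\frac{1}{1-\alpha-\beta},\frac{\epsilon K}{\alpha}\}\le 6$.

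The hardest step is the constant $6$. With $\beta$ allowed to equal $\frac12$ and $\alpha=\frac13$, the ratio $\frac{1}{1-\alpha-\beta}=6$ is tight, so I would use the max-of-ratios inequality rather than the additive ``$1+\dots$'' decomposition used in \pref{lem: stablity}, because that decomposition loses a constant.
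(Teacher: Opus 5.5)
Your route is the paper's. For the first claim you use \pref{lem:stab} with $\mu=1/\eta$, the triangle inequality, and \pref{lem: cons_linear}. For the second you use $\|\cdot\|_\infty\le\|\cdot\|_2$ and $\frac{q+\epsilon}{(1-\alpha-\beta)q+\alpha/K}\le\max\{\frac{1}{1-\alpha-\beta},\frac{\epsilon K}{\alpha}\}\le 6$, which is exactly the paper's last step. Your treatment of the second claim is correct. It only needs $\epsilon\le\alpha/K=\frac{1}{KT}$, so it does not depend on how sharp the first claim is.

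You are also right that the direct chain gives only $(s-1)\eta/(K^{1.5}T^2)$. The paper's own proof stops at $\frac{\eta}{K^{1.5}T}$ (using $s\le T$). That is weaker than what the lemma states, but it is all that the ratio claim and the $\clubsuit$ term in \pref{lem: delayftrlmabl} actually use.

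\textbf{The gap is in your sharpening step.} With the bound you cite, $\beta\ge\frac{B}{2T^4K^2}$, a per-block error of order $\frac{B}{T^6K^{4.5}\beta}$ is only $O(\frac{1}{T^2K^{2.5}})$. Summing over up to $T/B$ blocks gives $O(\frac{1}{BTK^{2.5}})$. This is below $\frac{1}{K^{1.5}T^2}$ only when $T$ is at most of order $BK$, so the slack you assert is not there. Your fallback is not legitimate either: the lemma fixes $B$ through \pref{eqn: block}, so you cannot enlarge the exponent in its logarithm.

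\textbf{The repair stays within the hypotheses.} Use the much stronger bound $\beta\ge\frac{3Bd\log 2}{2T^2K}$, which the proof of \pref{lem: cons_linear} derives before weakening it. It also holds in the branch $\eta=\frac{1}{6Bd}$: there $\beta=\frac12$, and the right-hand side is below $\frac12$ because $B\le T$, $d\le K$, $T\ge 3$.

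Note also that the intermediate bound you quote drops a factor $\sqrt{|\calS|}$. All $|\calS|$ coordinates of $\bellhat_t(i)$ can be nonzero, so $\|\bellhat_t(i)\|_2\le|\calS|^{3/2}/\beta$. The spanner-space error is therefore at most $\frac{2B|\calS|^{3/2}}{(KT)^6\beta}$.

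Now combine three ingredients: $|\calS|^{3/2}\le 3d\sqrt{K}$, the factor $\sqrt{K}$ from the reconstruction with $\|\lambda^{(k)}\|_2\le 1$, and the stronger bound on $\beta$. The $Bd$ factors cancel, and the per-block error is at most $\frac{4}{(\log 2)T^4K^4}$. Summing over at most $T$ blocks gives at most $\frac{6}{T^3K^4}$. This is at most $\frac{1}{K^{1.5}T^2}$ because $TK^{2.5}\ge 3\cdot 2^{2.5}>6$. Multiplying by $\eta$ proves the first claim exactly as stated, with $B$ unchanged.
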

\begin{proof}
Since $\psi(\bq)$ is $1$-strongly convex w.r.t. $\| \cdot \|_2$, using \pref{lem:stab}, we know that
\begin{align}
    \|\barbq_{s}^{(1)} - \bq_{s}^{(1)}(i)\|_2 &\leq \eta \left\|\sum_{s'=1}^{s -1} \wtbz_{\pi_1(s')+1} - \sum_{s'=1}^{s -1}
    \wtbz_{\pi_1(s')+1}^B(i)\right\|_2 \notag
    \\&\leq \eta\sum_{s'=1}^{s -1}\|\barbz_{\pi_1(s')+1} - \bz_{\pi_1(s')+1}^B(i)\|_2 \notag
    \\&\leq \frac{\eta}{K^{1.5}T}~, \notag
\end{align}
where the last inequality is due to \pref{lem: cons_linear} and $s\leq T$. As $\eta \leq 1$ and $\|a-b\|_{\infty} \leq\|a-b\|_2$ for any $a, b \in \mathbb{R}^K$, we have  
\begin{align*}
    \barq_{s}^{(1)}(k) - q_{s}^{(1)} \leq \frac{1}{KT} \qquad \text{for all } k \in [K].
\end{align*}
Note that
$
    \alpha = \frac{1}{T}
$
and 
$
    \beta= 
    {3Bd\eta}\leq \frac{1}{2}$. Direct calculation shows that for $\ba_k \in  \Omega \setminus \mathcal{S}$ and for $T \geq 3$,
\begin{align*}
     \frac{\barq_{s}^{(1)}(k)}{(1- \alpha -\beta)q_{s}^{(1)}(i,k)+\alpha/K} &\leq \frac{q_{s}^{(1)}(i,k)+\frac{1}{KT}}{(1-\alpha-\beta)q_{s}^{(1)}(i,k)+\alpha/K} \\
     &\leq \frac{1}{1-\alpha-\beta}
     \leq  6,
\end{align*}
and for $\ba_k \in  \mathcal{S}$ and for $T \geq 3$
\begin{align*}
     \frac{\barq_{s}^{(1)}(k)}{(1-\alpha -\beta)q_{s}^{(1)}(i,k)+\alpha/K + \beta/|\calS|} &\leq \frac{q_{s}^{(1)}(i,k)+\frac{1}{KT}}{(1-\alpha -\beta) q_{s}^{(1)}(i,k)+\alpha/K}  \leq 6~.
\end{align*}
\end{proof}

\begin{rlemma}\label{lem:bounded_volumetric_l}
    \pref{alg: black-box-linear bandit} 
    guarantees that $\left|\inner{\ba_k, \bthetahat_t(i)}\right|\leq \frac{|\calS|}{\beta}$ for all $k\in [K]$ and $t\in[T]$.
\end{rlemma}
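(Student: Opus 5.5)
The plan is to bound $\langle \ba_k, M_\tau(i)^{-1}\ba_{A_t(i)}\rangle \ell_t(i,A_t(i))$ directly. Fix an agent $i$, a block $\tau$ and $t\in\calT_\tau$. Since $|\ell_t(i,A_t(i))|\le 1$, we have
\[
\left|\inner{\ba_k,\bthetahat_t(i)}\right| \le \left|\ba_k^\top M_\tau(i)^{-1}\ba_{A_t(i)}\right| .
\]
By Cauchy--Schwarz in the $M_\tau(i)^{-1}$-inner product, this is at most $\|\ba_k\|_{M_\tau(i)^{-1}}\,\|\ba_{A_t(i)}\|_{M_\tau(i)^{-1}}$. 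It therefore suffices to show that $\ba_j^\top M_\tau(i)^{-1}\ba_j\le |\calS|/\beta$ for every $j\in[K]$.

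\textbf{Step 1 (lower bound on $M_\tau(i)$).} Because $\bp_\tau(i)$ places mass at least $\beta/|\calS|$ on each spanner element (the other two components of the mixture are nonnegative), we get
\[
M_\tau(i)\succeq \frac{\beta}{|\calS|}\sum_{l=1}^{|\calS|}\bb_l\bb_l^\top \triangleq \frac{\beta}{|\calS|}S .
\]
Here I assume $\calS$ spans $\R^d$; this follows because every $\ba_k$ lies in $\mathrm{span}(\calS)$, and one may restrict to that span otherwise. Hence $M_\tau(i)^{-1}\preceq \frac{|\calS|}{\beta}S^{-1}$, and so $\ba_j^\top M_\tau(i)^{-1}\ba_j\le \frac{|\calS|}{\beta}\,\ba_j^\top S^{-1}\ba_j$.

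\textbf{Step 2 (spanner property).} Write $\ba_j=\sum_l\lambda^{(j)}(l)\bb_l=U\blambda^{(j)}$, where $U$ has columns $\bb_l$, so that $S=UU^\top$. Then
\[
\ba_j^\top (UU^\top)^{-1}\ba_j=\blambda^{(j)\top}U^\top(UU^\top)^{-1}U\blambda^{(j)} .
\]
The matrix $U^\top(UU^\top)^{-1}U$ is an orthogonal projection, so this quantity is at most $\|\blambda^{(j)}\|_2^2\le 1$ by \pref{def: vul_span}. Combining the two steps gives $\|\ba_j\|^2_{M_\tau(i)^{-1}}\le |\calS|/\beta$ for all $j$. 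Applying this to $\ba_k$ and $\ba_{A_t(i)}$ and taking the geometric mean yields the claimed bound $|\calS|/\beta$.

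The main point needing care is Step 2, specifically the invertibility and projection argument. If $\calS$ does not span $\R^d$, I would work in the subspace spanned by $\Omega$, using pseudo-inverses throughout, consistently with how $M_\tau(i)^{-1}$ is interpreted in \pref{alg: black-box-linear bandit}. Every other step is a direct consequence of positive semidefinite ordering.
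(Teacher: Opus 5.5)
Your proposal is correct and follows essentially the same route as the paper: Cauchy--Schwarz in the $M_\tau(i)^{-1}$-norm, the bound $M_\tau(i)\succeq \frac{\beta}{|\calS|}\sum_{l}\bb_l\bb_l^\top$ coming from the spanner exploration term, and the orthogonal-projection argument giving $\ba_j^\top (UU^\top)^{-1}\ba_j\le\|\blambda^{(j)}\|_2^2\le 1$. Your remark about using pseudo-inverses when $\calS$ does not span $\R^d$ is slightly more careful than the paper, which simply takes $SS^\top$ to be invertible.
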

\begin{proof}
~By definition of $\bthetahat_t(i,k)$ and Cauchy-Schwarz inequality, for all $t \in \mathcal{T}_{\tau}$, we have
\begin{equation}\label{eq:cs}
\left|\inner{\ba_k, \bthetahat_t(i)}\right|
\le \|\ba_k\|_{M_{\tau}(i)^{-1}}\cdot \|\ba_{A_t(i)}\|_{M_{\tau}(i)^{-1}}\cdot \bigl|\ba_{A_t(i)}^\top \btheta_t(i)\bigr|.
\end{equation}
By the definition of $p_{\tau}(i)$, we know that
\[
M_{\tau}(i)\succeq \frac{\beta}{|\calS|} \sum_{k \in [|\calS|]} \bb_k \bb_k^\top \triangleq \frac{\beta}{|\calS|} \Sigma_{\calS},
\]
so $M_{\tau}(i)^{-1}\preceq \frac{|\calS|}{\beta}\Sigma_{\calS}^{-1}$, and therefore for any $j\in [K]$,
\begin{align*}
    \|\ba_j\|_{M_{\tau}(i)^{-1}}^2 = \ba_j^\top M_{\tau}(i)^{-1}\ba_j
\le \frac{|\calS|}{\beta}  \ba_j^\top \Sigma_{\calS}^{-1}\ba_j.
\end{align*}
It remains to show that $\ba_j^\top \Sigma_{\calS}^{-1}\ba_j\le 1$ for all $j\in[K]$.
Let $S:=[\bb_1,\cdots,\bb_{|\calS|}]\in\mathbb{R}^{d\times |\calS|}$, so that
$\Sigma_{\calS}=SS^\top$.
Since $\calS$ is a volumetric spanner of $\Omega$, for every $j \in [K]$ there exists $\blambda^{(j)}\in\mathbb{R}^{|\calS|}$
with $\|\blambda^{(j)}\|_2\le 1$ such that $\ba_j=S\blambda^{(j)}$.
Hence
\begin{align*}
    \ba_j^\top \Sigma_{\calS}^{-1}\ba_j
= \left(\blambda^{(j)}\right)^\top S^\top (SS^\top)^{-1} S \blambda^{(j)}
= \left(\blambda^{(j)}\right)^\top P \blambda^{(j)},
\end{align*}
where $P\triangleq S^\top (SS^\top)^{-1} S$.
Because $SS^\top$ is invertible, $P$ is the orthogonal
projector, and therefore $0\preceq P\preceq I$.
It follows that
\begin{align*}
\ba_j^{\top}\Sigma_{\calS}^{-1}\ba_j
= \left(\blambda^{(j)}\right)^\top P \blambda^{(j)}
\le \left(\blambda^{(j)}\right)^\top I \blambda^{(j)}
= \|\blambda^{(j)}\|_2^2
\le 1~.
\end{align*}
Therefore, we have for all $j\in[K]$,
\[
\|\ba_j\|_{M_{\tau}(i)^{-1}}\le \sqrt{\frac{|\calS|}{\beta}}~.
\]
Finally, since $|\ba_j^\top \btheta_t(i)|\le 1$ for all $j \in [K]$, $i \in [N]$, $t\in [T]$.
plugging into \pref{eq:cs} gives
\[
\left|\inner{\ba_k, \bthetahat_t(i)}\right|
\le \sqrt{\frac{|\calS|}{\beta}}\cdot \sqrt{\frac{|\calS|}{\beta}}\cdot 1
= \frac{|\calS|}{\beta}.
\]
This proves the lemma.
\end{proof}

\section{Omitted Details in \pref{sec: lowerbound}}\label{app: lower}
In this section, we show the omitted proofs in \pref{sec: lowerbound}.
\lowerBound*
\begin{proof}
The $\Omega\big(\sqrt{KT/N}\big)$ term in the lower bound is obtained by applying \citep[Theorem~2]{seldin2014prediction}, which holds for the regret of any single agent in the simpler $K$-armed bandit setting when $G$ is a clique and the local losses are identical for all agents. The other term in the lower bound is obtained by applying the
\begin{equation}
\label{eq:lower-doubly}
    \Omega\left(\left(\frac{\dmax}{\lambda_{N-1}(L)}\right)^{1/4}\sqrt{(\log K)T}\right)
\end{equation}
lower bound proven in \citep[Theorem~3.1]{yi2023doubly} for a certain graph $G$ with Laplacian matrix $L$, maximum degree $\dmax$, and where $\lambda_{N-1}(L)$ is the smallest non-zero eigenvalue of $L$. In order to relate $\lambda_{N-1}(L)$ to $\rho(W) = 1-\sigma_2(W)$, we choose $W = I - \frac{\alpha}{2\dmax} L$ where $0 < \alpha < 1$. It is easy to check that $W$ is a gossip matrix with second largest singular value $\sigma_2(W) = 1 - \frac{\alpha}{2\dmax}\lambda_{N-1}(L)$. Since $0 \le \lambda_i(L) \le 2\dmax$ for all $i\in [N]$ \citep{anderson1985eigenvalues}, we have $\sigma_2(W) > 0$. Hence we can write
\(
    \frac{1}{2(1-\sigma_2(W))} = \frac{\dmax}{\alpha\lambda_{N-1}(L)},
\)
which, substituted in \pref{eq:lower-doubly}, delivers the desired other term in the lower bound.
\end{proof}
\LinearLowerBound*
\begin{proof}
To prove the first term we exploit the fact that $K$-armed bandits are a special case of linear bandits, and invoke \pref{thm:lowerbound}. For the second term, we use \citet[Theorem 4]{ito2020delay} where $\Omega = \{-1,1\}^d$, so that $\log|\Omega| = d$.
\end{proof}

\end{document}